\newcommand{%
     \scalebox{}{\input{}}  
}[2]{%
     \scalebox{#1}{\input{#2}}  
}
\newcommand{\R}{\mathbb{R}}
\newcommand{\E}{\mathbb{E}}
\newcommand{\bmu}{\pmb{\mu}}
\newcommand{\trace}{\mathbf{tr}}
\newcommand{\calS}{\mathcal{S}}
\newcommand{\calF}{\mathcal{F}}
\newcommand{\calP}{\mathcal{P}}
\newcommand{\calA}{\mathcal{A}}
\newcommand{\calN}{\mathcal{N}}
\newcommand{\calD}{\mathcal{D}}
\newcommand{\calQ}{\mathcal{Q}}
\newcommand{\calR}{\mathcal{R}}
\newcommand{\Zu}{{Z_{\mathbb{I}_U}}}
\newcommand{\Zs}{{Z_{\mathbb{I}_S}}}
\newcommand{\Zx}{{Z_{\mathbb{I}_x}}}
\newcommand{\Zy}{{Z_{\mathbb{I}_y}}}
\newcommand{\Xu}{{X_{\mathbb{I}_U}}}
\newcommand{\Xs}{{X_{\mathbb{I}_S}}}
\newcommand{\Xx}{{X_{\mathbb{I}_x}}}
\newcommand{\Xy}{{X_{\mathbb{I}_y}}}
\newcommand{\Gu}{{G_{\mathbb{I}_U}}}
\newcommand{\Gs}{{G_{\mathbb{I}_S}}}
\newcommand{\Gx}{{G_{\mathbb{I}_x}}}
\newcommand{\Gy}{{G_{\mathbb{I}_y}}}
\newcommand{\Spairs}{\calS_{\text{pairs}}}
\newcommand{\Sigmag}{\Sigma^{(g)}}
\newcommand{\Sigmaeff}{\Sigma_{\text{eff}}}
\newcommand{\maxeig}{{\max \text{eig}}}
\newcommand{\leff}{l_{\text{eff}}}
\newcommand{\Is}{{\mathbb{I}_S}}
\newcommand{\Ix}{\mathbb{I}_x}
\newcommand{\Iy}{\mathbb{I}_y}
\DeclareMathOperator*{\argmax}{arg\,max}
\DeclareMathOperator*{\argmin}{arg\,min}
\newtheorem{theorem}{Theorem}[section]
\newtheorem{corollary}{Corollary}[theorem]
\newtheorem{lemma}[theorem]{Lemma}
\theoremstyle{definition}
\newtheorem{definition}{Definition}[section]
\begin{document}

%

%

\twocolumn[

\aistatstitle{Location Trace Privacy Under Conditional Priors}

\aistatsauthor{ Casey Meehan \\ \texttt{cmeehan@eng.ucsd.edu} \And Kamalika Chaudhuri  \\ \texttt{kamalika@eng.ucsd.edu}}

\aistatsaddress{ UC San Diego \And  UC San Diego } ]

\begin{abstract}
  Providing meaningful privacy to users of location based services is particularly challenging when multiple locations are revealed in a short period of time. This is primarily due to the tremendous degree of dependence that can be anticipated between points. We propose a R\'enyi divergence based privacy framework for bounding expected privacy loss for conditionally dependent data. Additionally, we demonstrate an algorithm for achieving this privacy under Gaussian process conditional priors. This framework both exemplifies why conditionally dependent data is so challenging to protect and offers a strategy for preserving privacy to within a fixed radius for sensitive locations in a user's trace. 
\end{abstract}

\section{Introduction}
\label{sec:introduction}

Location data is acutely sensitive information, detailing where we live, work, eat, shop, worship, and often when, too. Yet increasingly, location data is being uploaded for smartphone services such as ride hailing and weather forecasting and then being brokered in a thriving user location aftermarket to advertisers and even investors \citep{nyt}. Users share location `traces' when they release a sequence of locations, often across a short period of time. These traces are then used by central servers to monitor traffic trends, track individual fitness, target marketing, and even to study the effectiveness of social-distancing ordinances \citep{wash_post}. Here, we aim to provide a \emph{local} privacy guarantee, wherein traces are sanitized at the user level before being transmitted to a centralized service. Note that this requires different guarantees and mechanisms than in \emph{aggregate} applications making queries on large location trace databases. 

Specifically, we guarantee a radius $r$ of privacy at any sensitive time point or combination of time points within a given trace. This is challenging due to the fact that the locations within traces are highly inter-dependent. Informally, traces tend to follow relatively smooth trajectories in time. If not sanitized carefully, that knowledge alone may be exploited to infer actual locations from the released version of the trace. This work centers on designing meaningful privacy definitions and corresponding mechanisms that takes this dependence into account.

Broadly speaking, the vast majority of prior work on rigorous data privacy can be divided into two classes that differ by the kind of guarantee offered: differential and inferential privacy. Differential privacy (DP) guarantees that the participation of a single person in a dataset does not change the probability of any outcome by much. In contrast, inferential privacy guarantees that an adversary who has a certain degree of prior knowledge cannot make certain sensitive inferences.

DP for releasing aggregate statistics of a spatio-temporal dataset has been well studied \citep{traffic_monitoring, quantifying_dp_cao, bayesian_DP, dependent_dp}. There, the idea is to add enough noise to released statistics such that the effect of any user's participation is obscured, even if their locations are highly correlated to each other or to those of other users. Here, such a guarantee does not apply since we aim to release a sanitized version of a single user's trace.

In this local case we cannot rule out the possibility that the data curator knows who each individual is and who participated. Instead, we want to guarantee that event level information \emph{about} each trace remains private. In this work, at any sensitive time $t$ we mask whether the user visited location A or location B for any A,B less than $r$ apart. Without \emph{ad hoc} modifications, standard DP tools are insufficient for achieving this for the primary reasons that 1) the domain of location is virtually unbounded and 2) locations are highly dependent across a short period of time. To see this, consider the following instinctual approaches to achieving location trace privacy. 

\paragraph{Approach A:} apply Local Differential Privacy (LDP) to each trace. Imagine a dataset of traces, each from a separate individual. Applying LDP implies that every trace has nearly the same probability of releasing the same sanitized version. This would be robust to arbitrary side information about dependence between locations in any one trace. Unfortunately, the amount of additive noise needed to achieve this would destroy nearly all utility: sanitized traces from California would have almost the same probability of showing up in Connecticut as do those from New York. Even if we constrained the domain to just Manhattan, this definition would not permit enough utility to perform e.g. traffic monitoring. 

\paragraph{Approach B:} apply LDP to each location within a trace. To preserve some utility, imagine a single trace as a dataset of $n$ locations, each of which enjoys $\varepsilon$-LDP guarantees. This alone is not robust to arbitrary dependence between locations. By the logic of group LDP, it does satisfy $k \varepsilon$-LDP regardless of the dependence between any $k$ locations. This approach has two setbacks. First, how to set $k$ is unclear. Technically, all points in the trace are correlated, so to ward off worst-case correlations one might set it to the length of the trace, which is identical to Approach A. Second, even if location is bounded to a single city or county, satisfying this definition would still destroy nearly all utility. We cannot use sanitized traces for traffic monitoring if locations from either side of town have about same probability of being sanitized to the same value. 

\paragraph{Approach C:} apply LDP guarantees to each location within a trace, but only within any region less than width $r$. This definition is known as Geo-Indistinguishability (GI) \citep{GI}. GI provides a substitute for restricting the domain of location allowing us to salvage some utility. Here, only locations within $r$ of each other are required to have $\varepsilon$-LDP guarantees. In DP parlance, we might say that `neighboring traces' have one location altered by $\leq r$ and are identical everywhere else. This gives us the guarantee we want for a trace with one location, but not with more than one location. To see why, compare with Approach B. Analogously, $(\varepsilon, r)$-GI along a trace provides $(k \varepsilon, r)$-GI to any subset of $k$ locations. Like Approach B, setting $k$ is unclear. Yet unlike Approach B, GI is not resistant to arbitrary dependence between any $k$ locations. Any dependence where a change in one or more location(s) by $r$ implies a change in some other location(s) by $\geq r$ breaks the GI guarantee. Even with the simplest models of dependence (e.g. if we know the true trace ought to move in a straight line) this is a problem. 

To reiterate, applying LDP to traces or to locations within traces (Approaches A \& B) does not provide a principled method for meaningful privacy with reasonable utility. GI adapts LDP by giving guarantees only within a radius $r$. But in relaxing LDP, GI compromises the standard DP tools for handling obvious dependences between data-points like group DP. In our eyes, this warrants an \emph{inferentially private} approach. Here, we continue to provide privacy within a radius $r$, thus allowing for utility. Yet instead of providing resistance to arbitrary dependence across any $k$ locations, we aim to provide resistance to natural models of dependence between all locations. One may view such models as an adversary's prior beliefs about what traces are likely, like the straight-line prior mentioned earlier.

In contrast with differential privacy, providing inferential privacy guarantees is more complex, and has been less studied. It is however appropriate for applications such as ours, where information must be released based on a single person's data, the features of which are private and dependent. \cite{pufferfish} provide a formal inferential privacy framework called Pufferfish, and design mechanisms for specific Pufferfish instances. As these instances do not apply to our setting, we adapt the Pufferfish framework to location privacy and more broadly to releasing any sequence of real-valued private information. 

\begin{figure*}[h]
	\centering
	\begin{subfigure}[b]{.45\textwidth}
		\centering
     \scalebox{0.8}{\begin{tikzpicture}

  \node[obs]                       (x1) {$X_1$};
  \node[latent, below=of x1]			  (z1) {$Z_1$};
  \node[obs, right=of x1]             (x2) {$X_2$};
  \node[latent, below=of x2]			  (z2) {$Z_2$};
  \node[latent, right=of x2]          (x3) {$X_3$};
  \node[latent, below=of x3]			  (z3) {$Z_3$};
  \node[latent, right=of x3]          (x4) {$X_4$};
  \node[latent, below=of x4]			  (z4) {$Z_4$};

  \edge {x1} {z1};
  \edge {x2} {z2}; 
  \edge {x3} {z3};
  \edge {x4} {z4};
  \edge [-] {x1} {x2}; 
  \edge [-] {x2} {x3};
  \edge [-] {x3} {x4};
  \draw (x1) edge[-, bend left = 40] node [right] {} (x3);
  \draw (x1) edge[-, bend left = 40] node [right] {} (x4);
  \draw (x2) edge[-, bend left = 40] node [right] {} (x4);


\end{tikzpicture}}  

		\caption{}
		\label{fig:full model}
	\end{subfigure}
	\begin{subfigure}[b]{.45\textwidth}
		\centering
     \scalebox{0.8}{\begin{tikzpicture}

  \node[obs]                       (s) {$\Xs$};
  \node[latent, below=of x1]			  (zs) {$\Zs$};
  \node[latent, right=of x1]             (u) {$\Xu$};
  \node[latent, below=of x2]			  (zu) {$\Zu$};

  \edge {s} {zs};
  \edge {u} {zu}; 
  \edge [-] {s} {u}; 


\end{tikzpicture}}  

		\caption{}
		\label{fig:condensed model}
	\end{subfigure}
	\caption{(a) An example graphical model of a four point trace $X$. (b) The more general grouped version of the model in (a), with the secret set $\Xs = \{X_1, X_2\}$ and the remaining set $\Xu = \{X_3, X_4\}$. 
	}
	\label{fig:graphical models}
\end{figure*}
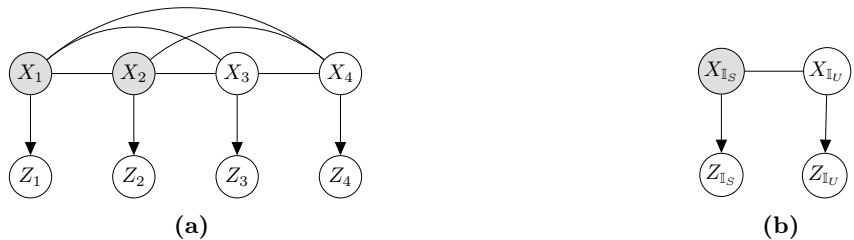

\paragraph{Contributions:}In this work, we propose an inferentially private approach to guaranteeing a radius $r$ of privacy for sensitive points in location traces in three parts: 
\begin{itemize}
	\item First, we propose an adaptable privacy framework tailored to sequences of highly dependent datapoints that adapts Pufferfish privacy \citep{pufferfish} to use R\'enyi Differential Privacy (RDP) \citep{renyi}. Given a model of dependence between points, this framework more appropriately estimates the risk of inference within radius $r$ on points of interest than do vanilla LDP approaches. 
	\item We then demonstrate how to implement our framework for the highly flexible and expressive setting of Gaussian process (GP) priors. These nonparametric models capture the spatiotemporal aspect of location data \citep{PCS_GP, ATM_GP, Traffic_GP}. GPs have a natural synergy with R\'enyi privacy enabling an interpretable upper bound on privacy loss for additive Gaussian privacy mechanisms (that add Gaussian noise to each point). Using this, we design a semidefinite program (SDP) that optimizes the correlation of such mechanisms to minimize privacy loss without destroying utility, efficiently thwarting the inference of sensitive locations. 
	\item Finally, we provide experiments on both location trace and home temperature data to demonstrate the advantage of these techniques over Approach C mechanisms like GI. We find that our mechanisms successfully obscure sensitive locations while respecting utility constraints, even when the prior model is misspecified. 
\end{itemize}
 
Ultimately, by resisting only reasonable kinds of dependence in the data we are able to offer both meaningful privacy and utility. We show that our framework is robust to misspecification of this reasonable dependence and offers a privacy loss that is both tractable and interpretable.

\section{Preliminaries and Problem Setting}
\label{sec:preliminaries}
A user transmits a sequence of $N$ 2-dimensional locations along with their corresponding timestamps, collectively forming a `trace'. We `unroll' the trace into $n$ real-valued random variables $X = \{X_1, X_2, \dots, X_n\}$. A trace of 10 2d locations has $n = 2 \times 10 = 20$ random variables $X_i$. Instead of releasing the raw trace $X$, the user releases a private version $Z = \{Z_1, Z_2, \dots, Z_n\}$, by way of an additive noise mechanism $Z = X + G$, where $G = \{G_1,G_2, \dots, G_n\}$ is random noise produced by a privacy mechanism.

An adversary, receiving the obscured trace $Z$, then reasons about the true locations at some sensitive time(s). To reference the sensitive times, we use index set $\Is$. If the sensitive indices are $\Is = \{1,2\}$, the corresponding location values are $\Xs = \{X_{1}, X_{2}\}$ (e.g. referring to the two coordinates of one location). When inferring the true value of $\Xs$, the adversary makes use of the remaining points in the trace at indices $\mathbb{I}_U = [n] \backslash \Is$, denoted $\Xu$, with obscured values $\Zu$. This separation of points into $\Xs$ and $\Xu$ is represented in \textbf{Figure \ref{fig:graphical models}}. 

We use location as a guiding example, but such inter-dependent traces $X$ could take the form of home temperature time series data or spatial data like 3D facial maps used for identification. Going forward, we will continue to denote $X = \{X_1, X_2, \dots, X_n\}$ with the understanding that \emph{any} subsequence of $d$ points e.g. $\Xs = \{X_2, X_6, \dots\}$ could represent a $d$-dimensional sensitive value, or $Nd$ points could represent $N$ $d$-dimensional sensitive values. 

For the real-valued distributions considered here, $P_{\times}( \bullet )$ refers to a density of distribution $\times$ on r.v. $\bullet$ and $P_{\times}(\bullet | *)$ is its regular conditional density given $*$. 

\subsection{Background}
GI limits what can be inferred about the sensitive $\Xs$ from its corresponding $\Zs$, but not from the remaining locations $\Zu$. To do so we need a privacy definition that specifies what events of random variable $\Xs$ we wish to obscure, which realistic priors of inter-dependence to protect against, and a privacy loss. 

\subsection{Basic and Compound Secrets}

We borrow heavily from the Pufferfish framework \citep{pufferfish}, and specialize it for the setting of location traces. We define our own set of \emph{secrets} --- the collection of events we wish to obscure --- and \emph{discriminative pairs}, the pairs of secret events we do not want an adversary to tell between. 

\paragraph{Basic Secrets \& Pairs} 
After releasing $Z$, we do not want an adversary with a reasonable prior on $X$, $\calP \in \Theta$, to have sharp posterior beliefs about the user's location at some sensitive time (e.g. one of the sensitive times in \textbf{Figure \ref{fig:nyc_example}} of Appendix \ref{apx: Illustrations}). As such, the adversary cannot distinguish whether the user visited location A or some nearby location B at that time. Let $x_s\in \R^2$ represent a possible assignments to $\Xs$, hypothesizing the true sensitive location. Any such assignment is secret, $\calS = \{ \Xs = x_s  : x_s \in \R^2\}$. Specifically, we want the posterior probability of any two assignments to $\Xs$ within a radius $r$ to be close: $\Spairs = \{(x_s, x_s'):\|x_s - x_s'\|_2 \leq r\}$. This protects a single time within a trace of locations. More generally, in the context of spatiotemporal data of any dimension, we call this a \emph{basic secret}. 

\paragraph{Compound Secrets \& Pairs} 
Suppose we have three sensitive times (again as in \textbf{Figure \ref{fig:nyc_example}}). A mechanism that blocks inference on each of these separately does not prevent inference on the combination of them simultaneously. To obscure hypotheses on \emph{all three} of these, we modify our set of secrets to any combination of assignments to each secret location: 
\begin{align*}
	\calS = \big\{ \{\Xs_1 = x_{s1}\} \cap \{\Xs_2 = x_{s2}\} \cap \{\Xs_3 = x_{s3} &\} \\
	: x_{si} \in \R^2, i \in [3] &\big\} \ .
\end{align*} 
Now, the set of discriminative pairs is any two assignments to all three secret locations: 
\begin{align*}
	\Spairs = \Big\{\big( \{x_{s1}, x_{s2}, x_{s3}\} &, \{x_{s1}', x_{s2}', x_{s3}'\}\big) \\
	&: \| x_{si} - x_{si}' \|_2 \leq r, \ i \in [3] \Big\}
\end{align*}
This protects against compound hypotheses: if daycare and work are within $r$ of each other, this keeps an adversary from inferring $\Xs_1 = $ `daycare' \emph{and} $\Xs_2 = $ `work' versus $\Xs_1 = $ `work' \emph{and} $\Xs_2 = $ `daycare'. More generally, in the context of spatiotemporal data of any dimension, we call this a \emph{compound secret}. Intuitively, a mechanism that protects a compound secret of locations close together in time prevents a Bayesian adversary from leveraging the remainder of the trace to infer direction of motion at those sensitive times. Note that bounding the privacy loss of a compound secret does not bound the privacy loss of its constituent basic secrets.

Going forward, we refer to $\Is$ as the `secret set'. 

\subsubsection{Gaussian Processes}
For the purpose of location privacy, it is important to choose a prior class $\Theta$ such that the conditional distribution $P_\calP(\Xu | \Xs)$ is simple to compute for any secret set $\Is$ and any prior $\calP \in \Theta$. Of course, it is also critical that the prior class naturally models the data, and thus consists of `reasonable assumptions' for adversaries. GPs satisfy both these requirements. We model a full $d$-dimensional trace sampled at $N$ times by `unrolling' it into a $n = dN$ dimensional GP. 
\begin{definition}\emph{Gaussian process} 
	A trace $X$ is a Gaussian process if $X_{\mathbb{I}_M}$ has a multivariate normal distribution for any set of indices $\mathbb{I}_M \subset [n]$. If $X$ is a gaussian process, then the function $i \rightarrow \E[X_i]$ is called the mean function and the function $(i,j) \rightarrow \text{Cov}(X_i, X_j)$ is called the kernel function. 
\end{definition}
In this work, the kernel uses locations' time stamps to compute their covariance $(t_i, t_j) \rightarrow \text{Cov}(X_i, X_j)$, but generally could use any side information provided with each location. 


GPs have simple, closed form conditional distributions. Let $X \sim \calN(\mu, \Sigma)$, where $\mu \in \R^{n}$ and $\Sigma \in \R^{{n} \times {n}}$. Then, the random variable $\Xu | \{\Xs = x_s\} \sim \calN(\mu_{u|s}, \Sigma_{u|s})$, where $\mu_{u|s} = \mu_u + \Sigma_{us} \Sigma_{ss}^{-1} (x_s - \mu_s)$ and $\Sigma_{u|s} = \Sigma_{uu} - \Sigma_{us}\Sigma_{ss}^{-1} \Sigma_{su}$. Here, $\mu_s$ denotes the mean vector $\mu$ accessed at indices $\Is$ and $\Sigma_{su}$ denotes the covariance matrix $\Sigma$ accessed at rows $\Is$ and columns $\mathbb{I}_U$. 

For GP priors, we will use additive noise $G \sim \calN(\mathbf{0}, \Sigmag)$. Thus $Z = X + G$, too, is multivariate normal. Furthermore, the distribution of any set of variables conditioned on any other set of variables in \textbf{Figure \ref{fig:graphical models}} belongs to some multivariate normal distribution.

GPs have been shown to successfully model mobility \citep{Traffic_GP, PCS_GP, ATM_GP}, even in the domain of surveillance video \citep{surveillance_GP}.  Furthermore, although these non-parametric models are characterized by second order statistics, GPs are capable of complexity rivaling that of deep neural networks \citep{Deep_NN_GP}, allowing for scalability to more complex models and domains. Our proposed results and algorithms may be applied regardless of the complexity of the chosen GP.

\subsubsection{R\'enyi Differential Privacy}
\label{sec:renyi_dp}
In the following section, we propose a privacy definition that adapts R\'enyi Differential Privacy (RDP) \citep{renyi} to the Pufferfish framework. RDP resembles Differential Privacy \citep{DP}, except instead of bounding the maximum probability ratio or \emph{max divergence} of the distribution on outputs for two neighboring databases, it bounds the \emph{R\'enyi divergence} of order $\lambda$, defined in Equation \eqref{eqn: renyi} for distributions $\calP_1$ and $\calP_2$. The R\'enyi divergence bears a nice synergy with Gaussian processes. If $\calP_1 = \calN(\mu_1, \Sigma)$ and $\calP_2 = \calN(\mu_2, \Sigma)$ --- two mean-shifted normal distributions --- the R\'enyi divergence takes on a simple closed form shown in Equation \eqref{eqn: normal renyi}. 
\begin{align}
	\label{eqn: renyi}
	D_\lambda \binom{\calP_1}{\calP_2} 
	&= \frac{1}{\lambda - 1} \log \E_{x \sim \calP_2} \Big( \frac{P_{\calP_1}(X = x)}{P_{\calP_2}(X = x)} \Big)^\lambda \\
	\label{eqn: normal renyi}
	&= \frac{\lambda}{2} (\mu_1 - \mu_2)^\intercal \Sigma^{-1} (\mu_1 - \mu_2)
\end{align}
We will make use of this in defining and bounding privacy loss in the next section. 

\section{Conditional Inferential Privacy}
We now propose a privacy framework that is tailored to sequences of correlated data, Conditional Inferential Privacy (CIP). CIP guarantees a radius $r$ of indistinguishability for the basic or compound secrets associated with any secret set $\Is$. Specifically, CIP protects against any adversary with a specific prior on \emph{the shape} of the trace, and is agnostic to their prior on the absolute location of the trace. We call the set of such prior distributions a Conditional Prior Class.

\begin{definition} \emph{Conditional Prior Class}
\label{def: conditional prior class}
	For $X = \{X_1, \dots, X_n\}$, prior distributions $\calP_i, \calP_j$ on $X$ are said to belong to the same conditional prior class $\Theta$ if a constant shift in the conditioned $x_s$ results in a constant shift on the distribution of $\Xu$. Formally, if conditional distributions $P_{\calP_i}(\Xu | \Xs = x_s) = P_{\calP_j}(\Xu + c_{ij\Is}^u  | \Xs = x_s + c_{ij\Is}^s )$ for all $x_s$.
\end{definition}

For instance, prior $P_{\calP_i}$ may concentrate probability on traces passing through Los Angeles, while $P_{\calP_j}$ concentrates on traces passing through London. Conditioning on each secret in the pair $(x_s, x_s')$ in L.A. is analogous to conditioning on each secret in the pair $(x_s + c_{ij\Is}^s, x_s' + c_{ij\Is}^s)$ in London. The corresponding pair of conditional distributions on $\Xu$ in London ($P_{\calP_j}$) are copies of those in L.A. ($P_{\calP_i}$) shifted by $c_{ij\Is}^u$. What matters is that the set of all pairs of conditional distributions under $P_{\calP_i}$ induced by secret pairs $(x_s, x_s')$ is identical to those under $P_{\calP_j}$ up to a mean shift. See Appendix \ref{apx: GP prior class} for a more detailed discussion of conditional prior classes.  
 

\begin{definition} \emph{ $(\varepsilon, \lambda)$-Conditional Inferential Privacy $(\Spairs, r, \Theta)$}
	Given compound or basic discriminative pairs $\Spairs$ associated with $\Is$, a radius of privacy $r$, a conditional prior class, $\Theta$, and a privacy parameter, $\varepsilon > 0$, a privacy mechanism $Z = \calA(X)$ satisfies $(\varepsilon, \lambda)$-CIP$(\Spairs, r, \Theta)$ if for all $(s_i, s_j) \in \Spairs$, 
	and all prior distributions $\calP \in \Theta$, where $P_\calP(s_i), P_\calP(s_j) > 0$, 
	\begin{align}
		\label{eqn:CIP loss}
		D_\lambda \binom{P_{\calA, \calP} (Z | \Xs = s_i)}{P_{\calA, \calP} (Z | \Xs = s_j)} &\leq \varepsilon
	\end{align}
\end{definition}

CIP departs from DP type notions of privacy like Approaches A$\rightarrow$C primarily by resisting only a restricted class of inter-dependence --- the conditional prior class --- as opposed to arbitrary dependence of any $k$ locations. Unlike approaches A and B, we are able to preserve utility for tasks like traffic monitoring. Unlike approach C, CIP is still resistant to realistic models of location inter-dependence. 

While this definition borrows heavily from the Pufferfish framework, it has a few key modifications. Pufferfish is generally described from a central, not local model. We specialize the kinds of secrets and discriminative pairs for the case of local location trace privacy. Additionally, we specialize the type of prior distribution class needed for this local setting: the conditional prior class. Finally, we relax the strict max divergence (max log odds) criterion of the Pufferfish definition to a R\'enyi divergence. This guarantees that --- with high probability on draws of \emph{realistic} traces $Z|\Xs$ --- the log odds will be bounded by $\varepsilon$. As $\lambda \rightarrow \infty$, the log odds are bounded for all traces, i.e. the max divergence is bounded. We formalize this in Theorem \ref{thm: prior-posterior}. 

The R\'enyi criterion of CIP greatly improves its flexibility. Unlike the standard DP Approaches A$\rightarrow$C which only take probabilities over the mechanism, we do not have full control over the randomness at play: it is partially from $\calA$ defined by us and from $\calP$ intrinsic to the data. Unlike max divergence, R\'enyi divergence is available in closed form for many distributions, allowing for a more flexible privacy framework. The $\lambda$ parameter helps us tune how strict a CIP definition is and how much noise we need to add. This allows us to design mechanisms that are resistant to natural models of dependence while preserving utility. 


\subsection{Properties}

We now identify key properties that make the CIP guarantee interpretable and robust. 

\paragraph{Interpretability:} CIP guarantees that a Bayesian adversary with any prior distribution on traces $\calP$ in the conditional prior class $\Theta$ does not learn much about basic or compound secrets from the released trace $Z$. For basic secrets, this means that the adversary's posterior beliefs regarding sensitive location $\Xs$ are not much sharper than their prior beliefs before witnessing $Z$.  
\begin{theorem} \emph{Prior-Posterior Gap:} 
\label{thm: prior-posterior}
	An $(\varepsilon, \lambda)$-CIP mechanism with conditional prior class $\Theta$ guarantees that for any event $O$ on sanitized trace $Z$
	\begin{align*}
		\bigg| \log \frac{P_{\calP, \calA}(s_i | Z \in O)}{P_{\calP, \calA}(s_j | Z \in O)} - \log \frac{P_{\calP}(s_i)}{P_{\calP}(s_j)} \bigg| \leq \varepsilon'
	\end{align*}
	for any $\calP \in \Theta$ with probability $\geq 1 - \delta$ over draws of $Z|\Xs=s_i$ or $Z|\Xs=s_j$, where $\varepsilon'$ and $\delta$ are related by
	\begin{align*}
		\varepsilon' = \varepsilon + \frac{\log \nicefrac{1}{\delta}}{\lambda - 1} \ .
	\end{align*}
	This holds under the condition that $Z|\Xs = s_i$ and $Z|\Xs = s_j$ have identical support. 
\end{theorem}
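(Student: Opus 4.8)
The plan is to first strip away the Bayesian dressing with Bayes' rule, then reduce the claim to a high-probability tail bound on the privacy-loss random variable, which the CIP R\'enyi bound controls through a moment (Chernoff) argument. Throughout I write $f(z) := P_{\calA, \calP}(z \mid \Xs = s_i)$ and $g(z) := P_{\calA, \calP}(z \mid \Xs = s_j)$ for the two conditional densities, and $f(O) := \int_O f$, $g(O) := \int_O g$ for the induced event probabilities.

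First I would apply Bayes' rule to the posterior odds. Using $P_{\calP, \calA}(s_i \mid Z \in O) = f(O)\, P_\calP(s_i) / P_{\calP, \calA}(Z \in O)$ and the analogous identity for $s_j$, the normalizing constant $P_{\calP, \calA}(Z \in O)$ and the prior ratio both cancel, leaving
\begin{align*}
	\log \frac{P_{\calP, \calA}(s_i \mid Z \in O)}{P_{\calP, \calA}(s_j \mid Z \in O)} - \log \frac{P_\calP(s_i)}{P_\calP(s_j)} = \log \frac{f(O)}{g(O)} \ .
\end{align*}
Thus the prior-posterior gap is exactly the log-likelihood ratio of the observation under the two secrets, so it suffices to bound the magnitude of this quantity. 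The identical-support hypothesis is what makes the ratio well defined and finite: it forces $Z \mid \Xs = s_i$ and $Z \mid \Xs = s_j$ to be mutually absolutely continuous, so neither event probability can vanish where the other is positive.

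Next I would pass to the pointwise privacy-loss random variable $L(z) := \log\!\big(f(z)/g(z)\big)$ and connect it to the CIP guarantee. Unwinding the definition of R\'enyi divergence in Equation \eqref{eqn: renyi} and changing the reference measure from $g$ to $f$ via $\int g\,(f/g)^\lambda = \int f\,(f/g)^{\lambda-1}$ gives the moment identity $\E_{z \sim f}\big[e^{(\lambda-1)L(z)}\big] = e^{(\lambda-1) D_\lambda \binom{f}{g}}$, which by the CIP bound \eqref{eqn:CIP loss} is at most $e^{(\lambda-1)\varepsilon}$. A Markov/Chernoff step then yields
\begin{align*}
	\Pr_{z \sim f}\!\big(L(z) > \varepsilon'\big) \leq e^{-(\lambda-1)\varepsilon'} \, \E_{z \sim f}\big[e^{(\lambda-1)L(z)}\big] \leq e^{(\lambda-1)(\varepsilon - \varepsilon')} \ ,
\end{align*}
and setting the right-hand side equal to $\delta$ and solving recovers exactly $\varepsilon' = \varepsilon + \log(\nicefrac{1}{\delta})/(\lambda-1)$. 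Because $\Spairs$ is symmetric (it contains $(s_j, s_i)$ whenever it contains $(s_i, s_j)$), CIP simultaneously bounds $D_\lambda \binom{g}{f} \leq \varepsilon$, so the identical argument applied to draws from $g$ controls the opposite tail; together these deliver the two-sided $|\cdot| \leq \varepsilon'$ bound, with the binding direction fixed by which conditional the draw actually comes from --- this is the content of ``over draws of $Z \mid \Xs = s_i$ or $Z \mid \Xs = s_j$.''

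I expect the main obstacle to be the passage from this pointwise statement back to an arbitrary event $O$, since $\log\!\big(f(O)/g(O)\big)$ is a ratio of integrals rather than a ratio of densities, to which the tail bound on $L$ does not apply directly. I would resolve this through the good set $B := \{z : |L(z)| \leq \varepsilon'\}$, on which the densities are sandwiched, $e^{-\varepsilon'} g(z) \leq f(z) \leq e^{\varepsilon'} g(z)$: the tail bounds above show $B$ carries all but $\delta$ of the mass under each conditional, and integrating the sandwich over any $O \subseteq B$ transfers the inequality from densities to the event probabilities $f(O), g(O)$. The only genuine care required is bookkeeping the constant across the two tail directions so that the single $\varepsilon'$ appearing in the statement is the one attached to the adversarially relevant direction; a cruder union bound over both tails would cost a factor of two in $\delta$, which I would avoid by conditioning each direction on its matching draw as above.
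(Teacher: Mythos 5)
Your proposal is correct and takes essentially the same route as the paper's proof: reduce the prior--posterior gap to the log-likelihood ratio $\log\left(P_{\calA,\calP}(Z\in O\mid \Xs=s_i)\,/\,P_{\calA,\calP}(Z\in O\mid \Xs=s_j)\right)$ via Bayes' rule, then convert the CIP R\'enyi bound into a high-probability bound on that ratio, pairing each direction of the two-sided bound with its matching conditional draw exactly as the paper does. The only difference is that the paper invokes the R\'enyi-to-$(\varepsilon',\delta)$ conversion of \cite{renyi} as a black-box lemma (stated there in $\max$ form), whereas you re-derive that conversion from scratch via the moment identity, Markov's inequality, and the good-set argument --- a self-contained proof of the same key lemma rather than a genuinely different approach.
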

A CIP mechanism depends only on the conditional prior describing the data, not the data itself. Suppose an adversary's prior beliefs on $\Xs$ are uniform over some region. For $\lambda = 5$ and $\varepsilon = 0.1$, there is only a $\approx 1\%$ chance that their posterior odds on $s_i,s_j$ will be more than 3.5, and a $\approx 10\%$ chance that they will be more than 2. This `chance' is over draws of likely remaining locations $\Xu$ and the additive noise $G$.  Proofs of all results are in Appendix \ref{apx: proofs}.

For additive noise mechanisms like $\calA(X) = X + G = Z$, the CIP loss can be split into two terms: one accounting for the direct privacy loss of $\Zs$ on $\Xs$ and a second accounting for the inferential privacy loss of $\Zu$ on $\Xs$ via $\Xu$.

\begin{lemma}\emph{Conditional Independence}
	\label{lem: renyi additive loss}
	For an additive noise mechanism, a fully dependent trace as in \textbf{Figure \ref{fig:full model}}, and any prior $\calP$ on $X$ the CIP loss may be expressed as
	\begin{align}
	\label{eqn:two terms}
		&D_\lambda \binom{P_{\calA, \calP}(Z | \Xs = s_i)}{P_{\calA, \calP}(Z | \Xs = s_j)}  \\ 
		\vspace{2em}
		&= \sum_{i \in \Is} \bigg[ D_\lambda \binom{P_\calA(Z_i | X_i = s_i)}{P_\calA(Z_i | X_i = s_j)} \bigg]
		+ D_\lambda \binom{P_{\calA, \calP}(\Zu | \Xs = s_i)}{P_{\calA, \calP}(\Zu | \Xs = s_j)} \notag
	\end{align}
\end{lemma}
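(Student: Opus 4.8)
The plan is to exploit two facts: that conditioning on the secret value $\Xs = s$ leaves the released coordinates split into a secret block $\Zs$ and a remaining block $\Zu$ that are conditionally independent, and that R\'enyi divergence is additive across independent blocks. First I would write out the two conditional laws. Since $\calA(X) = X + G$ with $G \sim \calN(\mathbf{0}, \Sigmag)$ drawn independently of $X$, conditioning on $\Xs = s$ gives $\Zs = s + \Gs$, a shift of the noise alone, so $\Zs \mid \Xs = s \sim \calN(s, \Sigmag_{ss})$ --- a law depending on $\calA$ but not on the prior $\calP$. For the remaining block, $\Zu = \Xu + \Gu$, where by the GP conditional formula $\Xu \mid \Xs = s \sim \calN(\mu_{u|s}, \Sigma_{u|s})$ and $\Gu$ is independent additive noise; hence $\Zu \mid \Xs = s$ is normal with mean $\mu_{u|s}$ and covariance $\Sigma_{u|s} + \Sigmag_{uu}$, now depending on $\calP$. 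I would note that both conditional covariances are free of the particular value $s$, so the hypotheses $s_i, s_j$ differ only in their means, placing each block in the equal-covariance regime of \eqref{eqn: normal renyi}.

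Second, I would establish the conditional independence $\Zs \perp \Zu \mid \Xs$. Given $\Xs = s$, the vector $\Zs = s + \Gs$ is a deterministic shift of $\Gs$, whereas $\Zu$ is a function of $(\Xu, \Gu)$; because the mechanism adds noise to the secret coordinates independently of the noise on the remaining coordinates, and because $G \perp X$, we have $\Gs \perp (\Xu, \Gu)$, so $\Zs$ and $\Zu$ are independent under the conditional law --- for each of $s_i$ and $s_j$. Consequently the joint conditional density factors as $P_{\calA,\calP}(Z \mid \Xs = s) = P_\calA(\Zs \mid \Xs = s)\, P_{\calA,\calP}(\Zu \mid \Xs = s)$.

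Third, I would apply additivity of the R\'enyi divergence. Writing the two factored conditionals under $s_i$ and $s_j$ as $P = P^{(1)} P^{(2)}$ and $Q = Q^{(1)} Q^{(2)}$, the expectation in the definition \eqref{eqn: renyi} factors across the two blocks, giving $D_\lambda\binom{P}{Q} = D_\lambda\binom{P^{(1)}}{Q^{(1)}} + D_\lambda\binom{P^{(2)}}{Q^{(2)}}$. The second summand is exactly the inferential term of \eqref{eqn:two terms}. For the first, since the secret-coordinate noise is independent across coordinates, $P_\calA(\Zs \mid \Xs = s)$ itself factors over $i \in \Is$, and one more application of additivity turns it into $\sum_{i \in \Is} D_\lambda\binom{P_\calA(Z_i \mid X_i = s_i)}{P_\calA(Z_i \mid X_i = s_j)}$, the direct term.

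The main obstacle is justifying the conditional independence cleanly: it is tempting to read it off the graphical model of \textbf{Figure \ref{fig:full model}} by d-separation, but the claim genuinely hinges on the noise being added independently across the secret/remaining partition --- if $\Sigmag$ coupled those blocks, then $\Zs$ and $\Zu$ would remain dependent even after conditioning on $\Xs$, and the clean two-term split would fail. I would therefore state the required independence of the noise blocks explicitly, and verify that the per-coordinate factorization of the secret block (needed for the sum over $\Is$) relies on independence of the secret coordinates' noise, while leaving the remaining block's covariance $\Sigmag_{uu}$ unconstrained so that correlated noise there still fits inside the single inferential term.
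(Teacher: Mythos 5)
Your proposal is correct and follows essentially the same route as the paper's proof: factor the conditional law $P_{\calA,\calP}(Z \mid \Xs = s)$ into $P_{\calA}(\Zs \mid \Xs = s)\,P_{\calA,\calP}(\Zu \mid \Xs = s)$ via conditional independence, apply additivity of R\'enyi divergence over independent blocks, then factor the secret block per coordinate and apply additivity once more. Your explicit flagging of what the factorization requires of the noise --- $\Gs$ independent of $\Gu$, and coordinate-wise independence within $\Is$, while $\Sigmag_{uu}$ may remain correlated --- is left implicit in the paper's appeal to the graphical model of Figure \ref{fig:full model}, and is precisely the structure the paper later imposes on $\Sigmag$ when deriving $\text{SDP}_\text{A}$.
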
 
One interpretation of GI is that it assumes all locations $X_i$ are independent. In this case, the second term vanishes and the privacy loss only depends on randomness of the mechanism, not the prior. 

\paragraph{Robustness:}
\cite{no_free_lunch} show that it is impossible to achieve both utility and privacy resistant to all priors. CIP provides resistance to a reasonable class of priors $\calP \in \Theta$, but it is possible that the true distribution $\calQ \notin \Theta$. In this case, the privacy guarantees degrade gracefully as the divergence between $\calQ$ and $\calP \in \Theta$ grows. 
\begin{theorem}\emph{Robustness to Prior Misspecification}
\label{thm: prior misspecification}
	Mechanism $\calA$ satisfies $\varepsilon(\lambda)$-CIP for prior class $\Theta$. Suppose the finite mean true distribution $\calQ$ is not in $\Theta$. The CIP loss of $\calA$ against prior $\calQ$ is bounded by 
	\begin{align*}
		D_\lambda \binom{P_{\calA, \calQ}(Z | \Xs = s_i)}{P_{\calA, \calQ}(Z | \Xs = s_j)} \leq \varepsilon'(\lambda)
	\end{align*}
	where
	\begin{align*}
		\varepsilon'(\lambda) 
		&= \frac{\lambda - \frac{1}{2}}{\lambda - 1} \ \Delta(2\lambda) + 
		\Delta(4\lambda - 3) +
		\frac{2\lambda - \frac{3}{2}}{2\lambda - 2} \ \varepsilon(4 \lambda -2)
	\end{align*}
	and where $\Delta(\lambda)$ is
	\begin{align*}
		\inf_{\calP \in \Theta} \sup_{s_i \in \calS} \max \bigg\{ 
		D_\lambda \binom{P_{ \calP}(\Xu | \Xs = s_i)}{P_{ \calQ}(\Xu | \Xs = s_i)}, 
		D_\lambda \binom{P_{ \calQ}(\Xu | \Xs = s_i)}{P_{ \calP}(\Xu | \Xs = s_i)}
		\bigg\}
	\end{align*}
\end{theorem}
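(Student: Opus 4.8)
The plan is to bound the CIP loss against the true prior $\calQ$ by routing through the output distributions induced by a well-chosen approximating prior $\calP \in \Theta$, using a Hölder-type \emph{weak} triangle inequality for R\'enyi divergence together with the data-processing inequality. Throughout, write $A_i, A_j$ for the output laws $P_{\calA, \calQ}(Z \mid \Xs = s_i)$ and $P_{\calA, \calQ}(Z \mid \Xs = s_j)$ under the true prior, and $B_i, B_j$ for the corresponding laws $P_{\calA, \calP}(Z \mid \Xs = \cdot)$ under a fixed $\calP \in \Theta$. The CIP hypothesis controls $D_\bullet(B_i \| B_j) \leq \varepsilon(\bullet)$, while $\Delta$ controls the discrepancy between the $A$'s and $B$'s at the level of the conditional priors on $\Xu$; the argument interpolates between these three quantities.

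First I would establish the weak triangle inequality: for $\alpha > 1$ and any $P,Q,R$,
\[
D_\alpha\binom{P}{R} \;\leq\; \frac{\alpha - \frac{1}{2}}{\alpha - 1}\, D_{2\alpha}\binom{P}{Q} \;+\; D_{2\alpha - 1}\binom{Q}{R}.
\]
This follows by applying H\"older's inequality with conjugate exponents $(2,2)$ to the defining integral $\int p^\alpha r^{1-\alpha}$, after the split $\int \big(p^\alpha q^{-(\alpha - 1/2)}\big)\big(q^{\alpha - 1/2} r^{1-\alpha}\big)$: the first factor squared is the order-$2\alpha$ R\'enyi integral of $(p,q)$ and the second squared is the order-$(2\alpha - 1)$ integral of $(q,r)$, and collecting the normalizing exponents produces exactly the coefficient $\tfrac{\alpha - 1/2}{\alpha - 1}$. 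Since $\lambda > 1$, all orders that will appear ($2\lambda$, $2\lambda - 1$, $4\lambda - 2$, $4\lambda - 3$) exceed $1$, so every invocation is legitimate.

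Next I would build the bridge from $Z$-level to $\Xu$-level divergences. Because $Z = X + G$ with the secret coordinates pinned at $\Xs = s_i$, conditioning on $\Xu = u$ makes the law of $Z$ equal to $\calN\big((s_i,u),\Sigmag\big)$ irrespective of whether $u$ was drawn under $\calP$ or $\calQ$; hence $A_i$ and $B_i$ are the pushforwards of $P_\calQ(\Xu \mid \Xs = s_i)$ and $P_\calP(\Xu \mid \Xs = s_i)$ through one common Markov kernel. The data-processing inequality for R\'enyi divergence then gives $D_\beta(A_i \| B_i) \leq D_\beta\big(P_\calQ(\Xu \mid s_i)\,\|\,P_\calP(\Xu \mid s_i)\big)$, and symmetrically $D_\beta(B_j \| A_j) \leq D_\beta\big(P_\calP(\Xu \mid s_j)\,\|\,P_\calQ(\Xu \mid s_j)\big)$; each right-hand side is dominated by the quantity inside $\Delta(\beta)$ for the fixed $\calP$ (the $\max$ over both directions and the $\sup_{s}$ absorb the secret-dependence and the two orientations). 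Finiteness of the mean of $\calQ$ is what guarantees these conditional divergences, and thus $\Delta$, are well defined. Applying the weak triangle inequality twice with matched orders then does the bookkeeping: with $\alpha = \lambda$ and intermediate $Q = B_i$ I get $\tfrac{\lambda - 1/2}{\lambda-1}D_{2\lambda}(A_i \| B_i) + D_{2\lambda - 1}(B_i \| A_j)$, whose first term is $\leq \tfrac{\lambda - 1/2}{\lambda-1}\Delta(2\lambda)$; a second application to $D_{2\lambda - 1}(B_i \| A_j)$ with $\alpha = 2\lambda - 1$ and intermediate $Q = B_j$ gives $\tfrac{2\lambda - 3/2}{2\lambda - 2}D_{4\lambda - 2}(B_i \| B_j) + D_{4\lambda - 3}(B_j \| A_j)$, bounded respectively by $\tfrac{2\lambda - 3/2}{2\lambda - 2}\varepsilon(4\lambda - 2)$ via CIP and by $\Delta(4\lambda - 3)$ via the bridge. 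Summing reproduces $\varepsilon'(\lambda)$ exactly.

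The hard part will be the order/coefficient accounting, and in particular a subtlety that the three contributions are all generated by one fixed $\calP$, whereas the stated $\Delta(\cdot)$ carries an independent infimum over $\calP \in \Theta$ at each order. To land the clean infimum form I would fix $\calP$ to minimize the $\Xu$-level divergence against $\calQ$ and argue that, for the Gaussian conditional prior classes of interest, the same minimizer is simultaneously optimal at the two orders $2\lambda$ and $4\lambda - 3$; failing a single common minimizer, the chain still holds verbatim with each $\Delta(\cdot)$ replaced by the per-$\calP$ supremum for whatever $\calP$ is fixed, which is the operative statement and only mildly weaker.
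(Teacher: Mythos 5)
Your proposal is correct and takes essentially the same route as the paper's proof: the same weak (H\"older-type) triangle inequality for R\'enyi divergence from \cite{renyi}, applied twice through the intermediate laws $B_i, B_j$ under a fixed $\calP \in \Theta$ with exactly the same orders and coefficients, and the same data-processing step bridging $Z$-level divergences to the $\Xu$-level quantities in $\Delta(\cdot)$. The subtlety you flag at the end --- that the chain fixes one $\calP$ while the stated $\Delta(\cdot)$ carries an independent infimum at each order --- is genuine, but the paper's own proof silently fixes a single $\calP$ too, so your per-$\calP$ fallback is precisely what the paper actually establishes.
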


As long as the conditional distribution on $\Xu|\Xs = s_i$ of prior $\calQ$ is close to that of some $\calP \in \Theta$, the privacy guarantees should change only marginally. This bound is tightest when $\varepsilon(\lambda)$ does not grow quickly with order $\lambda$.

%
%
%
%
%

\subsection{CIP for Gaussian Process Priors}
\label{sec: CIP for GP} 
A \emph{GP conditional prior class} is the set of all GP prior distributions with the same kernel function $(i,j) \rightarrow \text{Cov}(X_i, X_j)$ and any mean function $i \rightarrow \E[X_i]$. With an additive Gaussian mechanism $G \sim \calN(\mathbf{0}, \Sigmag)$, the CIP loss of Equation \eqref{eqn:two terms} can be bounded for any GP conditional prior class. See Appendix \ref{apx: GP prior class} for further discussion of the GP conditional prior class. 

\begin{theorem}\emph{CIP loss bound for GP conditional priors:}
\label{thm:GP bound}
	Let $\Theta$ be a GP conditional prior class. Let $\Sigma$ be the covariance matrix for $X$ produced by its kernel function. Let $\calS$ be the basic or compound secret associated with $\Is$, and $S$ be the number of unique times in $\Is$. The mechanism $\calA(X) = X + G = Z$, where $G \sim \calN(\mathbf{0}, \Sigmag)$, then satisfies $(\varepsilon, \lambda)$-Conditional Inferential Privacy $(\Spairs, r, \Theta)$, where 
	\begin{align}
		\varepsilon
		&\leq \frac{\lambda}{2} S r^2 \Big(  \frac{1 }{\sigma_s^2} + \alpha^*  \Big) 
		\label{eqn: priv bound}
	\end{align}
	\text{ }\vspace{1mm}\\
	where $\sigma_s^2$ is the variance of each $G_i \in \Gs$ (diagonal entries of $\Sigmag_{ss}$) and $\alpha^*$ is the maximum eigenvalue of $\Sigmaeff = \big(\Sigma_{us} \Sigma_{ss}^{-1}\big)^\intercal \big( \Sigma_{u | s} + \Sigma_{uu}^{(g)} \big)^{-1} \big(\Sigma_{us} \Sigma_{ss}^{-1}\big)$. 
\end{theorem}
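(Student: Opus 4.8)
The plan is to start from the decomposition in Lemma \ref{lem: renyi additive loss}, which splits the CIP loss for any discriminative pair $(s_i,s_j)\in\Spairs$ into a \emph{direct} term, the sum over coordinates $k\in\Is$ of the R\'enyi divergences between $P_\calA(Z_k\mid X_k{=}(s_i)_k)$ and $P_\calA(Z_k\mid X_k{=}(s_j)_k)$, and an \emph{inferential} term, the divergence between $P_{\calA,\calP}(\Zu\mid\Xs{=}s_i)$ and $P_{\calA,\calP}(\Zu\mid\Xs{=}s_j)$. Each term is a divergence between two Gaussians that share a covariance and differ only in mean, so I would evaluate both with the closed form in Equation \eqref{eqn: normal renyi} and then add them.

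For the direct term, conditioning on $X_k{=}(s_i)_k$ makes $Z_k=X_k+G_k$ equal to $\calN((s_i)_k,\sigma_s^2)$, and likewise for $(s_j)_k$; Equation \eqref{eqn: normal renyi} turns each summand into $\frac{\lambda}{2\sigma_s^2}((s_i)_k-(s_j)_k)^2$, so the sum collapses to $\frac{\lambda}{2\sigma_s^2}\|s_i-s_j\|_2^2$. Since $\Spairs$ forces each of the $S$ secret locations to move by at most $r$ in $\ell_2$ norm, $\|s_i-s_j\|_2^2\le S r^2$, bounding the direct term by $\frac{\lambda}{2}Sr^2/\sigma_s^2$.

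For the inferential term, because $G$ is independent Gaussian noise and $\calP$ is a GP, $\Zu\mid\{\Xs{=}s_i\}$ is Gaussian with mean $\mu_u+\Sigma_{us}\Sigma_{ss}^{-1}(s_i-\mu_s)$ and covariance $\Sigma_{u|s}+\Sigmag_{uu}$, and $\Zu\mid\{\Xs{=}s_j\}$ shares that covariance. Their mean difference is $\Sigma_{us}\Sigma_{ss}^{-1}(s_i-s_j)$, so Equation \eqref{eqn: normal renyi} gives exactly $\frac{\lambda}{2}(s_i-s_j)^\intercal\Sigmaeff(s_i-s_j)$ with $\Sigmaeff$ as defined in the statement. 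Since $\Sigmaeff$ is positive semidefinite, I would bound this quadratic form by its top eigenvalue $\alpha^*$ and again use $\|s_i-s_j\|_2^2\le Sr^2$ to get $\frac{\lambda}{2}\alpha^* S r^2$; adding the two terms yields Equation \eqref{eqn: priv bound}.

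The step that needs the most care is establishing the bound \emph{uniformly} over all $\calP\in\Theta$, as the definition of CIP requires. This is exactly where the GP conditional prior class is essential: every prior in the class shares the kernel and hence $\Sigma$, so $\Sigma_{u|s}$ and the map $\Sigma_{us}\Sigma_{ss}^{-1}$ are fixed, while the mean function is free. Crucially, the inferential term depends on the means only through their difference $\Sigma_{us}\Sigma_{ss}^{-1}(s_i-s_j)$, in which $\mu_u$ and $\mu_s$ cancel, so $\Sigmaeff$ and $\alpha^*$ are identical for every $\calP\in\Theta$ and the bound holds uniformly. I would also check the bookkeeping relating $|\Is|$ to the count $S$ of distinct times so that both terms carry the common factor $Sr^2$, and note that the direct term depends only on $\calA$, requiring no prior-class argument.
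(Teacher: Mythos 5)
Your proposal is correct and follows essentially the same route as the paper's own proof: the Lemma \ref{lem: renyi additive loss} decomposition, the closed-form Gaussian R\'enyi divergence of Equation \eqref{eqn: normal renyi} applied to both terms, the mean-difference identity $\mu_{u|s_i}-\mu_{u|s_j}=\Sigma_{us}\Sigma_{ss}^{-1}(s_i-s_j)$, and the bound $\|s_i-s_j\|_2^2\le Sr^2$ combined with the top eigenvalue of $\Sigmaeff$. Your explicit remark that the mean cancellation is what makes the bound uniform over the conditional prior class is handled only implicitly in the paper, but it is the same underlying argument.
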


The above bound is tight for basic secrets ($S = 1$). The two terms of Equation \eqref{eqn: priv bound} represent the direct $(\frac{1}{\sigma_s^2})$ and inferential $(\alpha^*)$ loss terms of Equation \eqref{eqn:two terms}. We assume that each diagonal entry of $\Sigmag_{ss}$ equals some $\sigma_s^2$, so that each $X_i \in \Xs$ experiences identical direct privacy loss, which is optimal under utility constraints. 

The above bound composes gracefully when multiple traces of an individual are released. 

\begin{corollary}\emph{Graceful Composition in Time}
\label{cor: composition}
	Suppose a user releases two traces $X$ and $\hat{X}$ with additive noise $G \sim \calN(\mathbf{0}, \Sigmag)$ and $\hat{G} \sim \calN(\mathbf{0}, \hat{\Sigma}^{(g)})$, respectively. Then basic or compound secret $\Xs$ of $X$ enjoys $(\bar{\varepsilon}, \lambda)$-CIP, where 
	\begin{align*}
		\bar{\varepsilon} \leq \frac{\lambda}{2} S r^2 \Big(  \frac{1 }{\sigma_s^2} + \bar{\alpha}^*  \Big) 
	\end{align*}
	and where $\bar{\alpha}^*$ is the maximum eigenvalue of $\bar{\Sigma}_{\text{eff}} = \big(\Sigma_{us} \Sigma_{ss}^{-1}\big)^\intercal \big( \Sigma_{u | s} + \bar{\Sigma}_{uu}^{(g)} \big)^{-1} \big(\Sigma_{us} \Sigma_{ss}^{-1}\big)$. $\Sigma$ is the covariance matrix of the joint distribution on $X, \hat{X}$ and 
	\begin{align*}
	\bar{\Sigma}^{(g)} =
		\begin{bmatrix}
			 \Sigmag & 0 \\
			 0 &  \hat{\Sigma}^{(g)} \ .
		\end{bmatrix}
	\end{align*}
\end{corollary}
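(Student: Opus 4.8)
The plan is to reduce the two-trace release to a single application of Theorem \ref{thm:GP bound} on a concatenated trace. First I would form the combined trace $\tilde{X} = (X, \hat{X})$. Since $X$ and $\hat{X}$ are both drawn from the GP conditional prior class, their concatenation is jointly Gaussian with covariance $\Sigma$ (the stated joint covariance), so $\tilde{X}$ itself belongs to a GP conditional prior class whose kernel is the joint kernel. The key observation is that releasing the two traces with independent noise, $Z = X + G$ and $\hat{Z} = \hat{X} + \hat{G}$ with $G \perp \hat{G}$, is exactly the single additive-Gaussian mechanism $\tilde{Z} = \tilde{X} + \bar{G}$ applied to $\tilde{X}$; the independence of $G$ and $\hat{G}$ is precisely what forces the noise covariance to take the block-diagonal form $\bar{\Sigma}^{(g)}$.

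Next I would fix the index partition for the concatenated trace. The secret set $\Is$ lives entirely within $X$, so it is unchanged: the secret $\Xs$, the number of unique secret times $S$, the radius $r$, and the discriminative pairs $\Spairs$ are all identical to the single-trace case. What grows is the remaining set, which now consists of the non-secret coordinates of $X$ together with all coordinates of $\hat{X}$; call this expanded index set $\Iu$. Under this partition, the CIP loss for the secret $\Xs$ is exactly $D_\lambda \binom{P_{\calA, \calP}(\tilde{Z} | \Xs = s_i)}{P_{\calA, \calP}(\tilde{Z} | \Xs = s_j)}$, which is the quantity bounded by Theorem \ref{thm:GP bound} applied to $\tilde{X}$.

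With the reduction in place, I would invoke Theorem \ref{thm:GP bound} directly. The direct loss term $\tfrac{1}{\sigma_s^2}$ is unchanged, since the secret coordinates and their injected noise variance $\sigma_s^2$ both reside in $X$ and are untouched by the concatenation. The inferential term becomes $\bar{\alpha}^*$, the maximum eigenvalue of $\bar{\Sigma}_{\text{eff}}$, because the effective-covariance construction of Theorem \ref{thm:GP bound}, evaluated with the expanded $\Iu$ and the block-diagonal noise block $\bar{\Sigma}_{uu}^{(g)}$, is exactly $\bar{\Sigma}_{\text{eff}} = (\Sigma_{us}\Sigma_{ss}^{-1})^\intercal (\Sigma_{u|s} + \bar{\Sigma}_{uu}^{(g)})^{-1} (\Sigma_{us}\Sigma_{ss}^{-1})$, where $\Sigma_{u|s} = \Sigma_{uu} - \Sigma_{us}\Sigma_{ss}^{-1}\Sigma_{su}$ is the joint conditional covariance over the expanded remaining set. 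Substituting these two terms into Equation \eqref{eqn: priv bound} yields $\bar{\varepsilon} \leq \frac{\lambda}{2} S r^2 (\frac{1}{\sigma_s^2} + \bar{\alpha}^*)$, as claimed.

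The main obstacle I anticipate is index bookkeeping rather than analysis. I must verify that $\Sigma_{us}$, $\Sigma_{u|s}$, and $\bar{\Sigma}_{uu}^{(g)}$ appearing in $\bar{\Sigma}_{\text{eff}}$ are indeed the restrictions of the joint covariance and the block-diagonal noise to the expanded remaining set, and in particular that this expanded set correctly captures the cross-trace dependence between $\hat{X}$ and $\Xs$, so that $\Sigma_{us}$ acquires the extra rows coupling $\hat{X}$ to the secret. Once this accounting is confirmed to match the hypotheses of Theorem \ref{thm:GP bound}, the corollary follows with no further computation.
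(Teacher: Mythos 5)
Your proposal is correct and follows essentially the same route as the paper: the paper's proof likewise concatenates the two traces, augments $\Sigma$ to the joint covariance $\Sigma_J$ of $X, \hat{X}$ and the noise covariance to the block-diagonal $\bar{\Sigma}^{(g)}$, and then reuses the loss expression of Theorem \ref{thm:GP bound} with the updated $\Sigmaeff$. Your write-up is in fact more explicit than the paper's about the index bookkeeping and about why independence of $G$ and $\hat{G}$ yields the block-diagonal structure, but no new idea is involved.
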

\text{ } \vspace{2mm} \\
This bound is identical to that of Theorem \ref{thm:GP bound}, only using the joint distribution over $X$, $\hat{X}$ and $G, \hat{G}$. This provides some insight to the fact that, unlike DP, even parallel composition guarantees are not automatic. Composition depends on the conditional prior. In the GP setting, if the chosen kernel function decays over time, we can expect composition to have minimal effects on privacy for traces separated by long durations. 

To reduce the upper bound of Theorem \ref{thm:GP bound}, we optimize the correlation (off-diagonal) of $\Sigmag$ to minimize $\alpha^*$, and optimize its variance (diagonal) to balance a noise budget between lowering inferential ($\alpha^*$) and direct ($\frac{1}{\sigma_s^2}$) loss.

\section{Optimized Privacy Mechanisms}
\label{sec: algorithms}

Theorem \ref{thm:GP bound} characterizes the privacy loss for GP conditional priors. We next show how to use this Theorem to design mechanisms that can strategically reduce CIP loss given a utility constraint. We measure `utility loss' as the total mean squared error (MSE) between the released ($Z$) and true ($X$) traces: $\text{MSE}(\Sigmag) = \sum_{i=1}^n \E[Z_i - X_i] = \trace(\Sigmag)$. We bound the utility loss by $\trace(\Sigmag) \leq n o_t$, where $o_t$ is the average per-point utility loss.

It can be shown that optimizing the privacy loss under this utility constraint can be described by a semidefinite program (SDP) (formalization/derivation of SDPs in Appendix \ref{apx: algorithmns}). For a given trace $X$, define its covariance matrix $\Sigma$ using the the kernel of the GP conditional prior $\Sigma_{ij} = k(i,j)$. Then pass $\Sigma$, the secret set $\Is$, and the utility constraint $o_t$ to our first program, $\text{SDP}_\text{A}$, which returns noise covariance $\Sigmag$. This defines an additive noise mechanism $G \sim \calN(0, \Sigmag)$ that minimizes CIP loss to $\Is$. 
\begin{align*}
	\Sigmag = \text{SDP}_\text{A}(\Sigma, \Is, o_t)
\end{align*} 
We can thus use a SDP to minimize the CIP loss to any single compound or basic secret. However, a trace may contain multiple locations or combinations thereof that one wishes to protect. It remains to produce a single mechanism $\Sigmag$ that bounds the CIP loss to multiple basic and/or compound secrets in a single trace. 

For this we propose $\text{SDP}_\text{B}$, which uses the fact that if ${\Sigmag}' \succ \Sigmag$ it will have lower CIP loss (see Appendix \ref{apx: SDP B}). $\text{SDP}_\text{B}$ takes in a set of covariance matrices $\calF = \{\Sigmag_1, \dots, \Sigmag_m\}$, each designed to minimize CIP loss for a single compound or basic secret $\Is_i$. It then returns a single covariance matrix $\Sigmag \succeq \Sigmag_i, i \in [m]$ that maintains the privacy guarantee each $\Sigmag_i$ offered its corresponding $\Is_i$, while minimizing utility loss. 

\begin{figure*}[h]
    \centering
    \begin{subfigure}{.24\linewidth}
        \centering 
        \captionsetup{justification=centering}
        \includegraphics[width = 1\linewidth]{./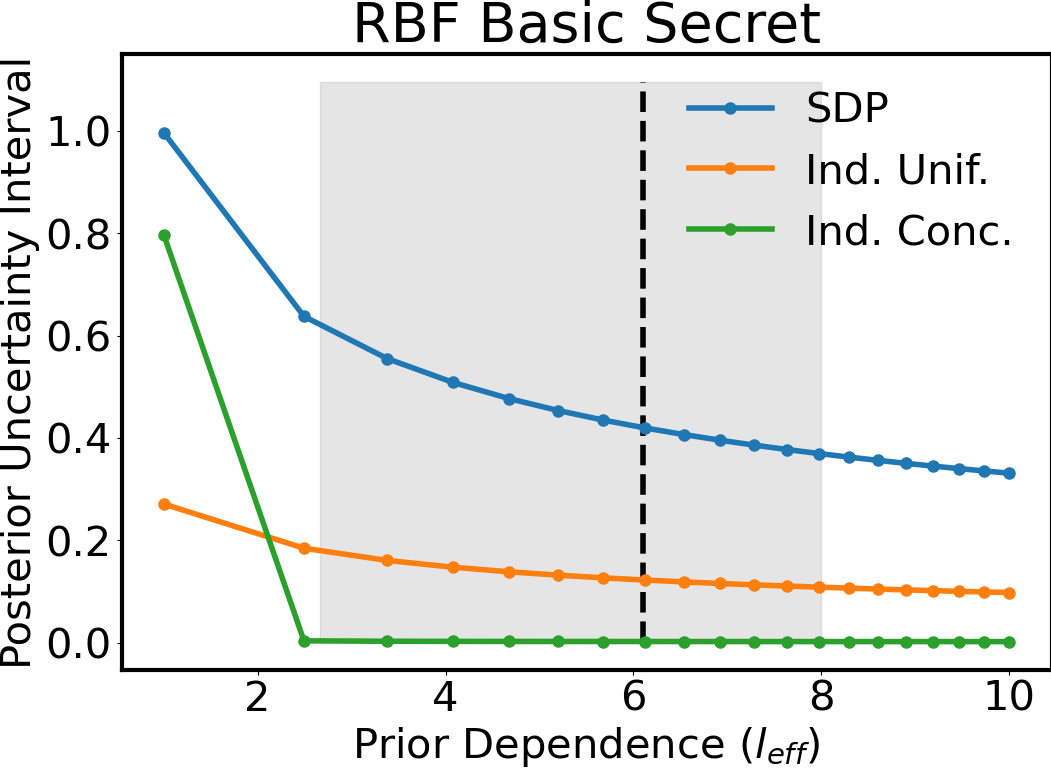}
        \caption{}
        \label{fig: RBF basic}
    \end{subfigure}
    \begin{subfigure}{.24\linewidth}
        \centering 
        \captionsetup{justification=centering}
        \includegraphics[width = 0.95\linewidth]{./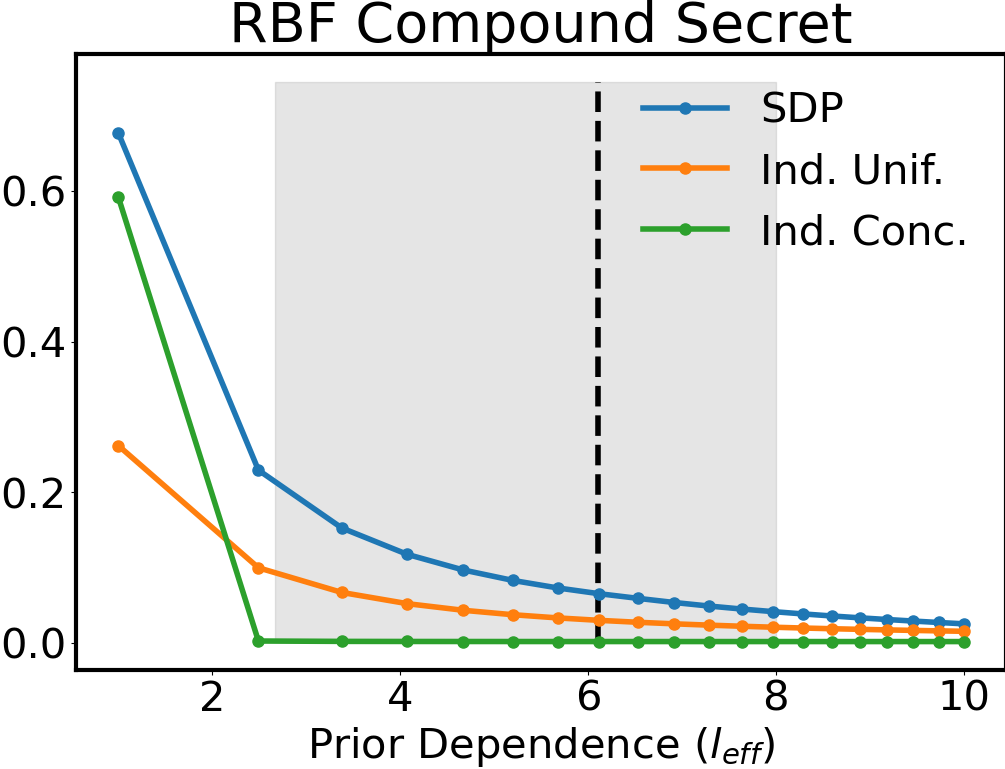}
        \caption{}
        \label{fig: RBF compound}
    \end{subfigure}
    \begin{subfigure}{.24\linewidth}
        \centering 
        \captionsetup{justification=centering}
        \includegraphics[width = 0.95\linewidth]{./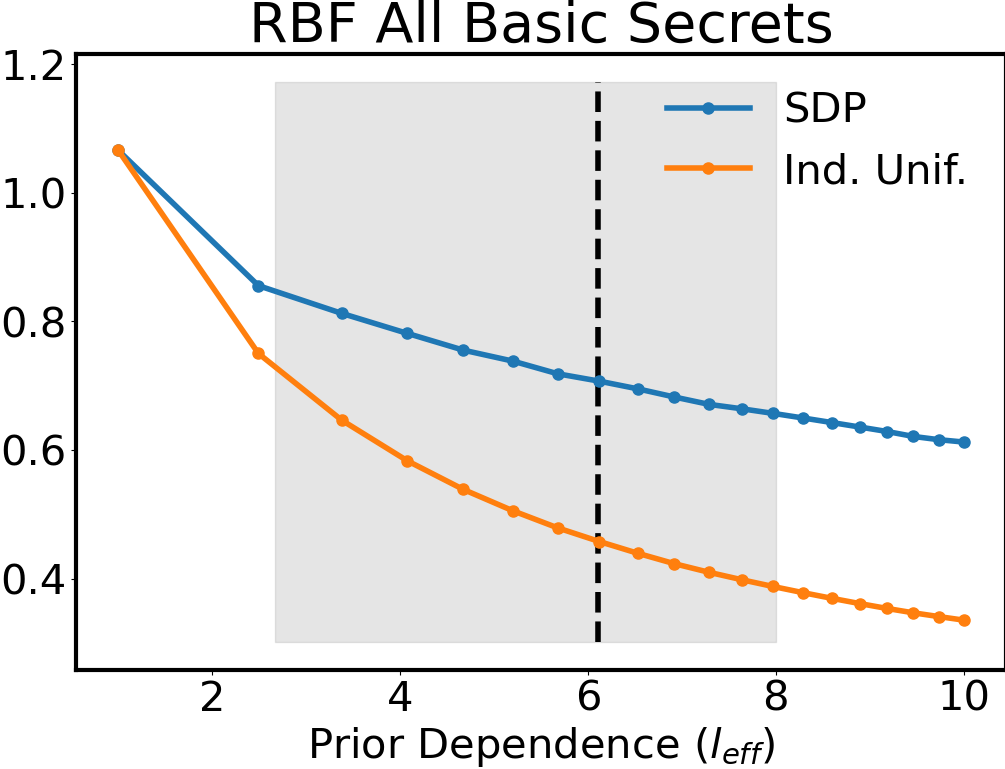}
        \caption{}
        \label{fig: RBF all}
    \end{subfigure}
    \begin{subfigure}{.24\linewidth}
        \centering 
        \captionsetup{justification=centering}
        \includegraphics[width = 0.95\linewidth]{./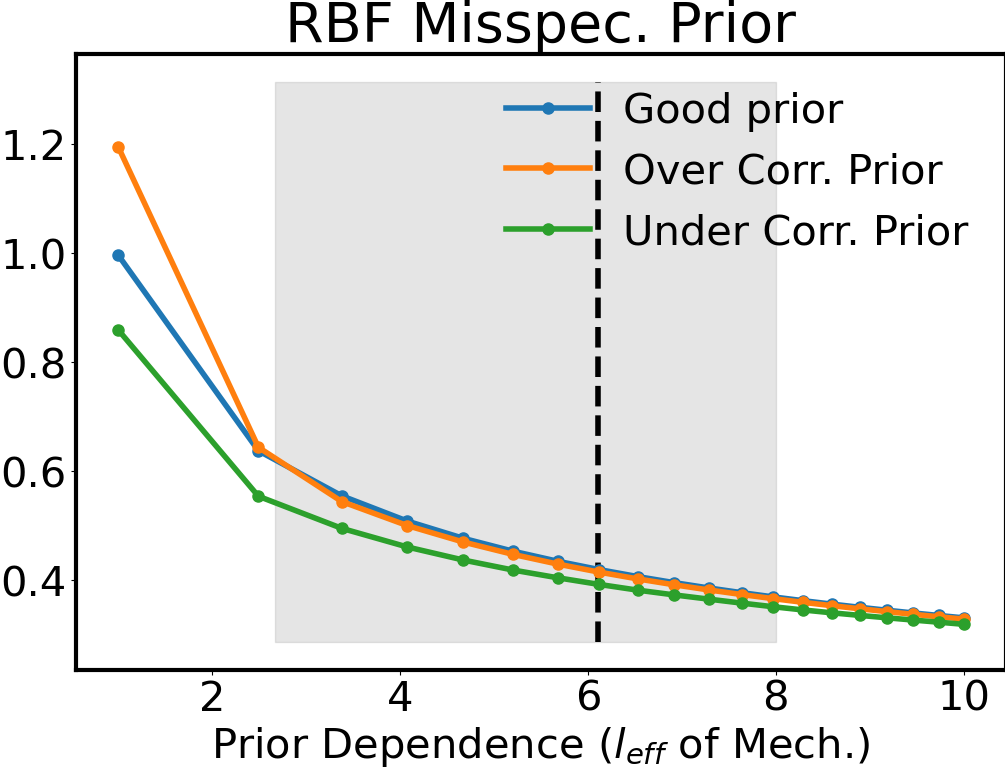}
        \caption{}
        \label{fig: RBF misspec}
    \end{subfigure}
    \begin{subfigure}{.24\linewidth}
        \centering 
        \captionsetup{justification=centering}
        \includegraphics[width = 1\linewidth]{./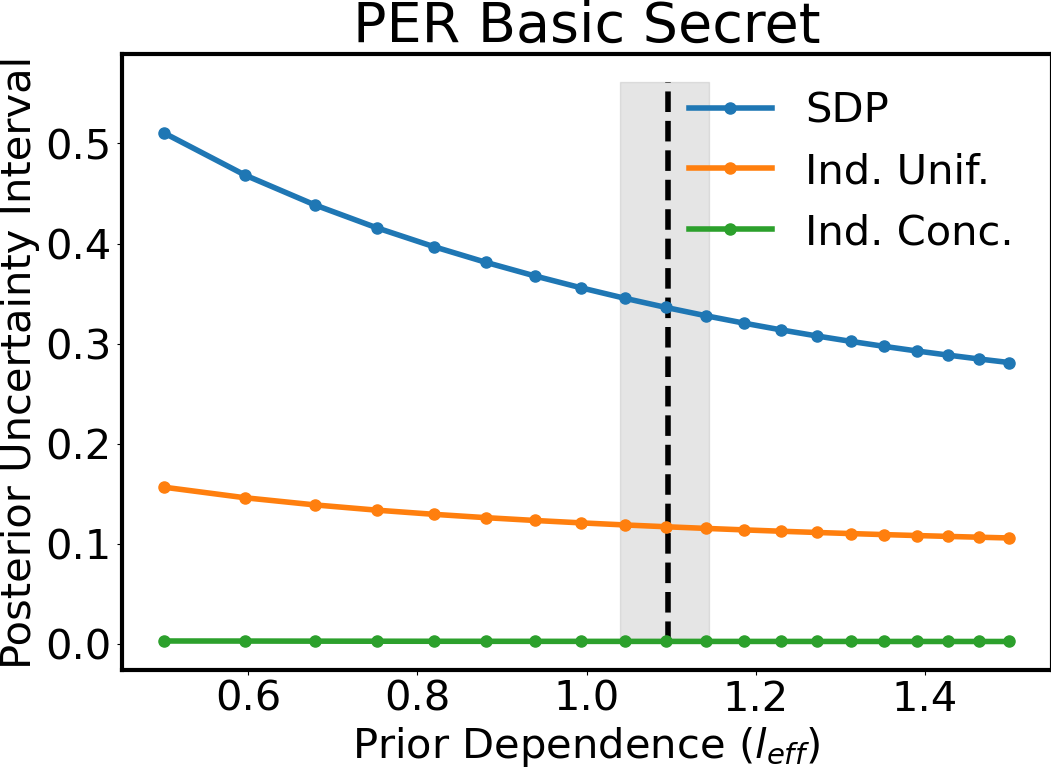}
        \caption{}
        \label{fig: PER basic}
    \end{subfigure}
    \begin{subfigure}{.24\linewidth}
        \centering 
        \captionsetup{justification=centering}
        \includegraphics[width = 0.95\linewidth]{./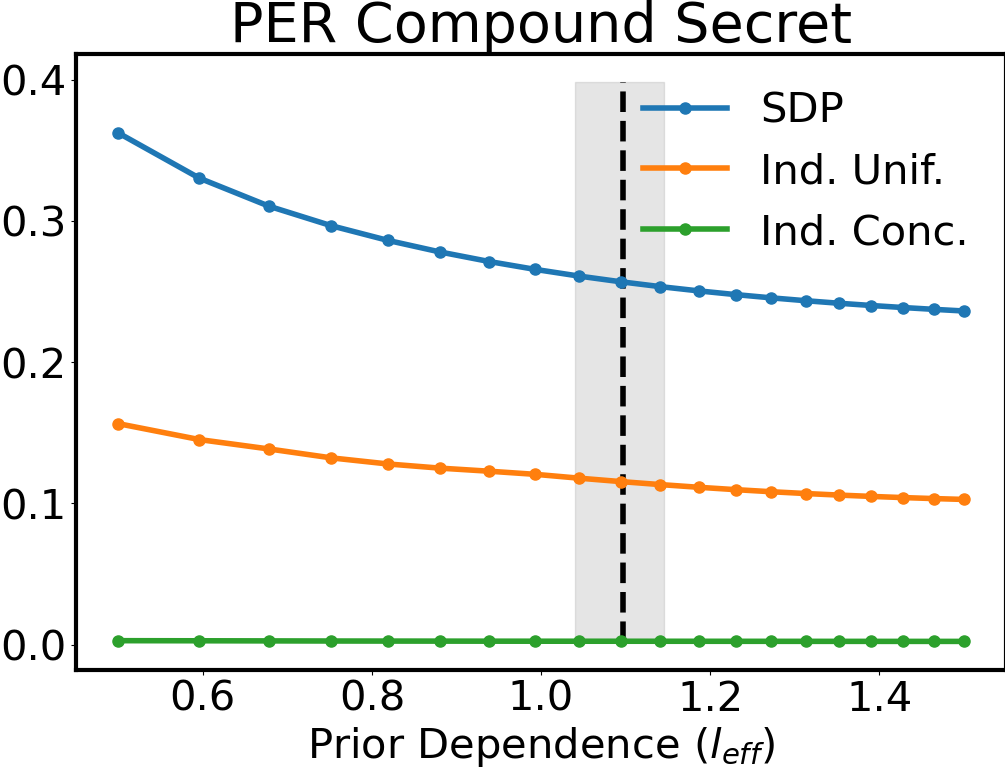}
        \caption{}
        \label{fig: PER compound}
    \end{subfigure}
    \begin{subfigure}{.24\linewidth}
        \centering 
        \captionsetup{justification=centering}
        \includegraphics[width = 0.95\linewidth]{./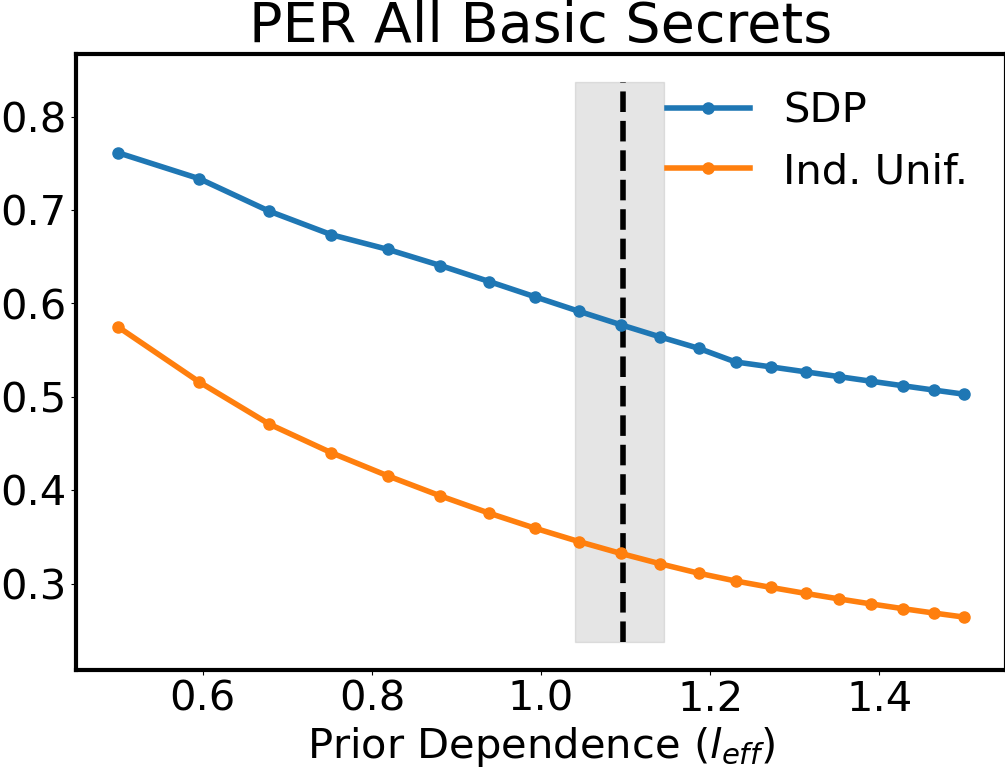}
        \caption{}
        \label{fig: PER all}
    \end{subfigure}
    \begin{subfigure}{.24\linewidth}
        \centering 
        \captionsetup{justification=centering}
        \includegraphics[width = 0.95\linewidth]{./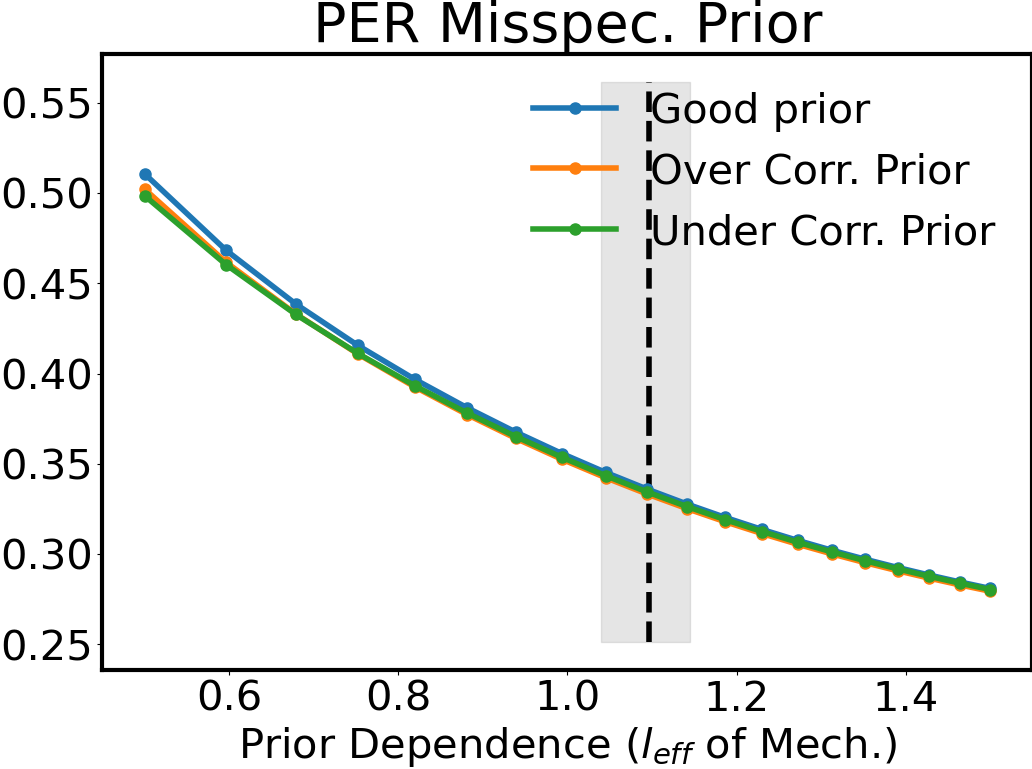}
        \caption{}
        \label{fig: PER misspec}
    \end{subfigure}
    \caption{$^1$Posterior uncertainty interval (higher$=$better privacy) on $\Xs$ of a GP Bayesian adversary. A larger $\leff$ corresponds to greater inter-dependence and reduces posterior uncertainty. The gray interval depicts the middle 50\% of the MLE $\leff$ among traces in each dataset, and the black dotted line the median $\leff$. \textbf{(a)}$\rightarrow$\textbf{(c)}, \textbf{(e)}$\rightarrow$\textbf{(g)} show SDP mechanisms (blue) maintaining relatively high uncertainty compared to two GI (Approach C) baselines of equal utility (MSE). \textbf{(d)}, \textbf{(h)} show the (minor) change in posterior uncertainty when the prior covariance $\Sigma$  used in $\text{SDP}_{\text{A}}$ is misspecified: when it is identical to the true covariance $\Sigma^*$ known to the adversary (blue), is more correlated (orange), or is less correlated (green).
    }
    \label{fig: experiments}
\end{figure*} 

\begin{algorithm}
		\SetAlgoLined
		\KwInput{$\Is_1, \dots, \Is_m, o_t, \Sigma$}
		\KwOutput{$\Sigmag$}
			\vskip 1mm
			$\calF = \emptyset$\;
			
			\For{$i \in [m]$}
			{
				$\Sigmag_i =$ $\text{SDP}_\text{A}(\Sigma, \Is_i, o_t)$\; 
				
				$\calF = \calF \cup \Sigmag_i$\;
			}
			\vskip 1mm
			$\Sigmag= \text{SDP}_{\text{B}}(\calF)$\;
			
			\Return $\Sigmag$\;
		\caption{Multiple Secrets}
\label{alg: Multiple Secrets}
\end{algorithm}
In our experiments, we use Algorithm \ref{alg: Multiple Secrets} to design a single mechanism that protects all locations in the trace --- all basic secrets --- while minimizing utility loss.

\section{Experiments}
\label{sec: experiments}
Here, we aim to empirically answer: \textbf{1)} 
Do our SDP mechanisms maintain high posterior uncertainty of sensitive locations? How do they compare to Approach C baselines of equal MSE? 
\textbf{2)} How robust is the $\text{SDP}_\text{A}$ mechanism when the prior covariance $\Sigma$ is misspecified? 

\paragraph{Methods} To answer these questions, we look at the range of conditional prior classes that fit real-world data. For location trace data, we use the GeoLife GPS Trajectories dataset \citep{geolife} containing 10k human mobility traces after preprocessing (see Appendix \ref{apx: experiments} for details). We also consider the privacy risk of room temperature data \citep{home_monitoring}, using the SML2010 dataset \citep{sml2010}, which contains approximately 40 days of room temperature data sampled every 15 minutes. 

For the location data, having observed that the correlation between latitude and longitude is low ($ \approx 0.06$) we treat each dimension as independent. By way of Corollary \ref{cor: independence}, this allows us to bound privacy loss and design mechanisms for each dimension separately. Furthermore, having observed that each dimension fits nearly the same conditional prior, we treat our dataset of 10k 2-dimensional traces as a dataset of 20k 1-dimensional traces, where each trace represents one dimension of a 2d location trajectory.

We model the location trace data with a Radial Basis Function (RBF) kernel GP and the temperature series data with a periodic kernel GP:
\begin{align*}
	k_{\text{RBF}}(t_i, t_j) 
	&=  \sigma_x^2 \exp \Big( -\frac{(t_i - t_j)^2}{2 l^2} \Big) \\
	k_{\text{PER}}(t_i, t_j) 
	&=  \sigma_x^2 \exp \Big(  \frac{-2 \sin^2(\pi |t_i - t_j| / p)}{l^2} \Big)
\end{align*}
In both kernels, the intrinsic degree of dependence between points is captured by the lengthscale $l$. However, the fact that sampling rates vary significantly between traces means that traces with equal length scales can have very different degrees of correlation. To encapsulate both of these effects, we study the empirical distribution of \emph{effective} length scale of each trace
\begin{align*}
	l_{\text{eff},x} = \frac{l_x}{P}
	\quad
	l_{\text{eff},y} = \frac{l_y}{P}
\end{align*}
where $P$ is the trace's sampling period and $l_x,l_y$ are the its optimal length scales for each dimension. 

$l_{\text{eff},x},l_{\text{eff},y}$ tell us the average number of neighboring locations that are highly correlated, instead of time period. For instance, a given trace with an optimal $l_{\text{eff},x} = 8$ tells us that every eight neighboring location samples in the $x$ dimension have correlation $> 0.8$. The empirical distribution of effective length scales across all traces describes -- over a range of logging devices (sampling rates), users, and movement patterns -- how many neighboring points are highly correlated in location trace data. After this preprocessing, we are able to use the kernels that take indices (not time) as arguments: 
\begin{align*}
	\label{eqn: kernels}
	k_{\text{RBF}}(i, j) 
	&=  \exp \Big( -\frac{(i - j)^2}{2\leff^2} \Big) \\
	k_{\text{PER}}(i, j) 
	&=  \exp \Big(  \frac{-2 \sin^2(\pi |i - j| / p)}{\leff^2} \Big)
\end{align*}
See Appendix \ref{apx: experiments} for a more detailed discussion of how the empirical distribution of $\leff$ across traces is measured. 

To impart the range of realistic conditional priors the gray interval of each plot depicts the middle 50\% of the empirical $\leff$ among traces in each dataset. The dashed vertical line reports the median $\leff$.


Each figure increases the degree of dependence, $\leff$, used by the kernel to compute the prior covariance $\Sigma(\leff)$. $\Sigma(\leff)$ is then used in one of the SDP routines of Section \ref{sec: algorithms} to produce a mechanism $\Sigmag(\leff)$ that protects a basic secret ($\text{SDP}_\text{A}$), a compound secret ($\text{SDP}_\text{A}$), or the union of all basic secrets (Multiple Secrets). We then observe the 68\% confidence interval of the Gaussian posterior on sensitive points $\Xs$ (blue line). This is the $2\sigma$ uncertainty of a Bayesian adversary with a GP prior represented by $\Sigma(\leff)$ (see Appendix \ref{apx: experiments} for how this is computed). As $\leff$ increases, their posterior uncertainty will reduce. Our aim is to mitigate this as much as possible with the given utility constraint. For scale, recall that prior variance $\textbf{diag}(\Sigma)$ is normalized to one. In the case of all basic secrets, we report the average posterior uncertainty over locations. 

We compare the SDP mechanisms with two mechanisms using the logic of Approach C (all three of equal MSE utility loss): \emph{independent/uniform} and \emph{independent/concentrated}. The uniform approach adds independent Gaussian noise evenly along the whole trace regardless of $\Is$, $\Sigmag = o_tI$. The concentrated approach allocates the entire noise budget to the sensitive set $\Is$. 
\paragraph{Results}
For our first question, see \textbf{Figures \ref{fig: RBF basic}$\rightarrow$\ref{fig: RBF all}, \ref{fig: PER basic}$\rightarrow$\ref{fig: PER all}}. For both location and temperature data, our SDP mechanisms maintain higher posterior uncertainty than the baselines with identical utility cost for a single basic secret, a compound secret, and all basic secrets. By actively considering the conditional prior class parametrized by $\Sigma$, the SDP mechanisms can strategize to both correlate noise samples and concentrate noise power such that posterior inference is thwarted at the sensitive set $\Is$. For an intuitive illustration of the chosen $\Sigmag$'s, see Appendix \ref{apx: juxtaposition}. 

To answer our second question, see \textbf{Figures \ref{fig: RBF misspec}} and \textbf{\ref{fig: PER misspec}}. When the prior covariance $\Sigma$ does not represent the true data distribution known to the adversary, a smaller posterior uncertainty may be achieved. The orange line indicates the uncertainty interval of an adversary who knows the data is \emph{less} correlated than we believe i.e. the true $\Sigma^* = \Sigma(0.5 \leff)$. The blue line represents an adversary who knows the data is \emph{more} correlated than we believe i.e. the true $\Sigma^* = \Sigma(1.5 \leff)$. Both plots confirm the robustness of our privacy guarantees stated by Theorem \ref{thm: prior misspecification}. Particularly around the median $\leff$ we see that the change in posterior uncertainty with this change in prior is indeed marginal. 

\section{Discussion}
\paragraph{Related Work}
Few works have proposed solutions to the \emph{local} guarantee when releasing individual traces. A mechanism offered in \cite{synthesizing_plausible_deniability} releases synthesized traces satisfying the notion of \emph{plausible deniability} \citep{plausible_deniability}, but this is distinctly different from providing a radius of privacy to sensitive locations. Meanwhile, the frameworks proposed in \cite{temporal} and \cite{priste} nicely characterize the risk of inference in location traces, but use only first-order Markov models of correlation between points, do not offer a radius of indistinguishability as in this work, and are not suited to continuous-valued spatiotemporal traces.

Perhaps more technically similar to this work, \cite{song_pufferfish_2017} provide a general mechanism that applies to any Pufferfish framework, as well as a more computationally efficient mechanism that applies when the joint distribution of an individual's features can be described by a graphical model. The first is too computationally intensive. The second is for discrete settings, and cannot accommodate spatiotemporal effects.

\paragraph{Conclusion}
This work proposes a framework for both identifying and quantifying the \emph{inferential} privacy risk for highly dependent sequences of spatiotemporal data. As a starting point, we have provided a simple bound on the privacy loss for Gaussian process priors, and an SDP-based privacy mechanism for minimizing this bound without destroying utility. We hope to extend this work to other data domains with different conditional priors, and different sets of secrets.

\subsubsection*{Acknowledgements}
KC and CM would like to thank ONR under N00014-20-1-2334 and UC Lab Fees under LFR 18-548554  for research support. We would also like to thank our reviewers for their insightful feedback. 

\clearpage

\bibliography{refs.bib}
\bibliographystyle{icml2020}

\newpage
\onecolumn

\section{Appendix}
For documented code demonstrating our SDP mechanisms used to generate the plots of \textbf{Figure \ref{fig: experiments}} please visit our repo: \url{https://github.com/casey-meehan/location_trace_privacy} 

The following sections will include proofs of results, derivations of algorithms, and explanations of experimental procedures. 
\subsection{Illustrations}
\label{apx: Illustrations}

\subsubsection{NYC Mayoral Staff Member Location Trace}

\begin{figure*}[h]
	\centering
	\begin{subfigure}[b]{.32\textwidth}
		\centering
		\includegraphics[width = \linewidth]{./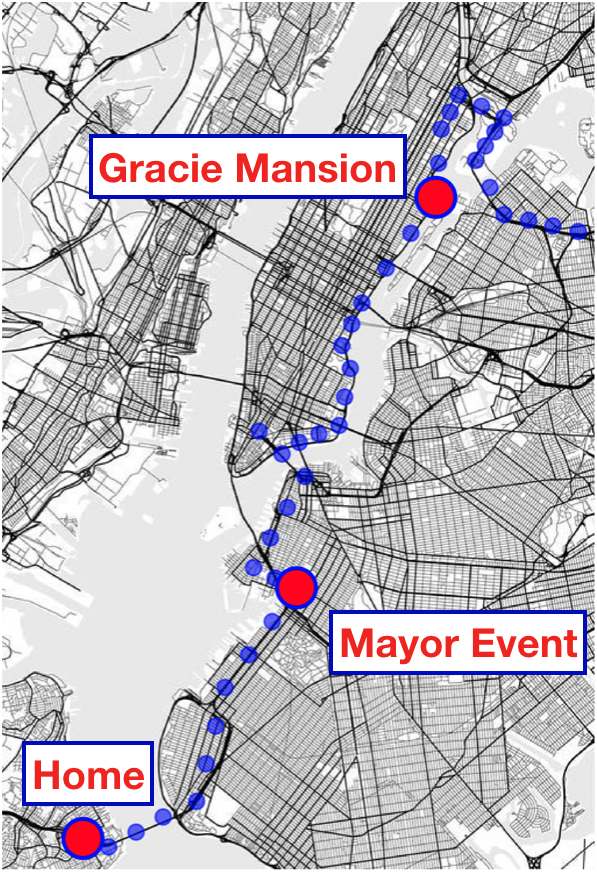}
		\caption{}
		\label{fig:nyc_trace}
	\end{subfigure}
	\begin{subfigure}[b]{.32\textwidth}
		\centering
		\includegraphics[width = \linewidth]{./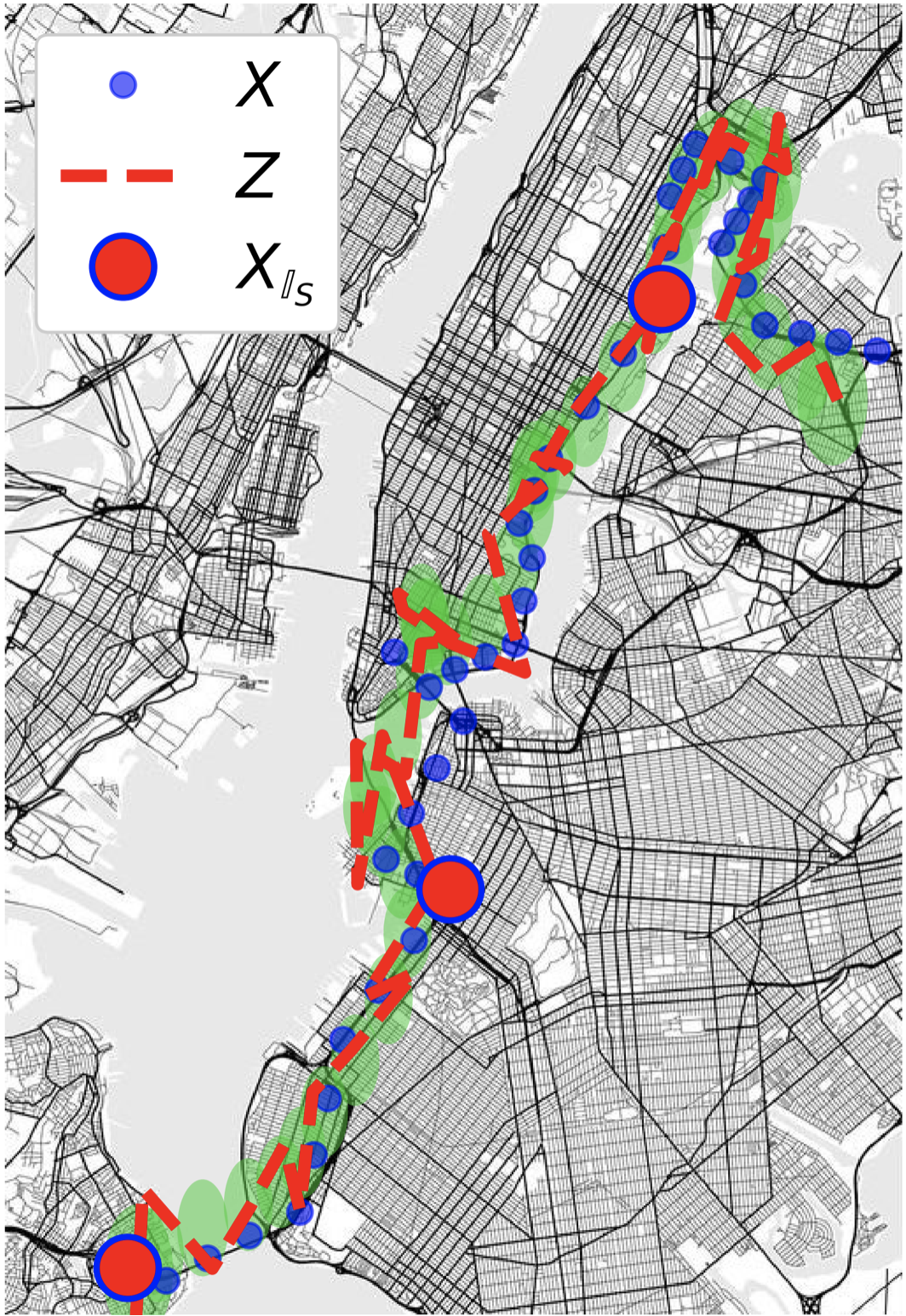}
		\caption{}
		\label{fig:nyc_trace_iso}
	\end{subfigure}
	\begin{subfigure}[b]{.32\textwidth}
		\centering
		\includegraphics[width = \linewidth]{./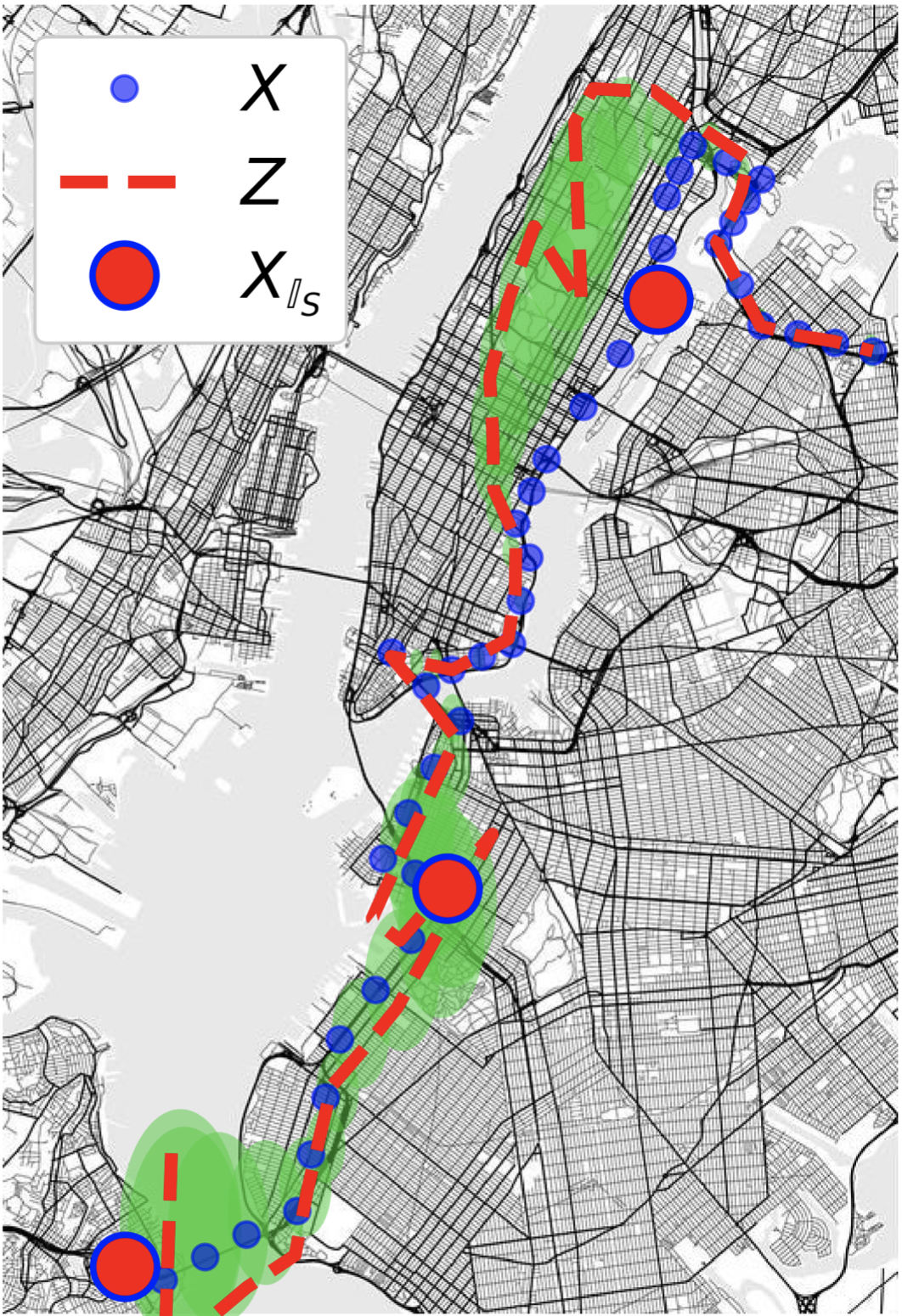}
		\caption{}
		\label{fig:nyc_trace_opt}
	\end{subfigure}
	\caption{Example of sensitive location trace of NYC mayoral staff member exposed by \citep{nyt}. (b) and (c) depict the posterior uncertainty (green) $P_{\calA,\calP}(X_i | Z)$ for each 2d location. (a) depicts three sensitive times (red with blue outline): Gracie Mansion (Mayor's home), an event on Staten Island that the mayor attended, and finally the staff member's home on long island. (b) provides an example of Approach C: adding independent Gaussian noise to each location (red dotted line). A GP posterior still maintains high confidence within a small radius along the trace, including at the sensitive times. (c) provides an example of the optimized noise of Multiple Secrets of identical aggregate MSE as (b). By focusing \textit{correlated} noise around the three sensitive times, there is high uncertainty at sensitive times and high confidence elsewhere.}
	\label{fig:nyc_example}
\end{figure*}

\subsubsection{Juxtaposition of Mechanisms' Covariance Matrices}
\label{apx: juxtaposition}
The following figures aim to illustrate the difference between the covariance matrices used in the experimental baselines (indep./uniform and indep./concentrated) and those chosen by our SDP algorithms for both the RBF and periodic prior. Note that here we presume the different dimensions of location to be independent and --- by Corollary \ref{cor: independence} --- are able to treat a 2d location trace as two 1d traces. As such, the following examples are demonstrating mechanism covariance matrices and additive noise samples used for either a single dimension of location data (for RBF kernel) or for the one dimension of temperature data (for periodic kernel). 

The first figure \textbf{(a)} shows the covariance of the Approach C baselines used in the experiments. The second figure \textbf{(b)} shows the covariance of our SDP mechanisms for the RBF kernel used on location data. The third figure \textbf{(c)} shows the covariance of our SDP mechanisms for the periodic kernel used for temperature data. 

In each figure the covariance matrix is depicted as a heat map with warmer colors indicating higher values (normalized to largest and smallest value in the covariance matrix). The drawn noise samples $G$ are plotted against their time index. So, the sequence of plotted $(x,y)$ values is $\big[(1, G_1), (2, G_2), \dots, (n, G_n)\big]$, where $n = 50$ for the RBF case and $n = 48$ for the periodic case. 

\begin{figure*}[h]
	\centering
	\begin{subfigure}[b]{1\textwidth}
		\centering
		\includegraphics[width = \linewidth]{./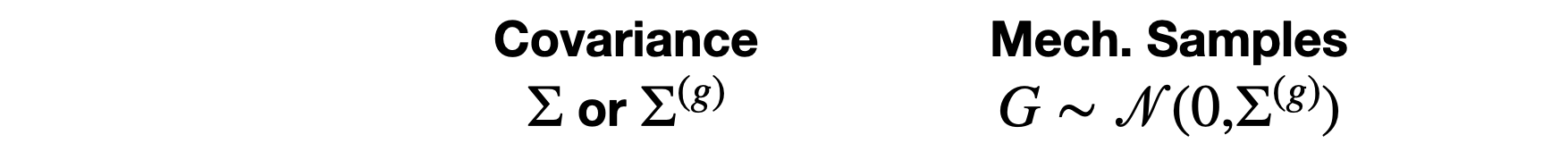}
	\end{subfigure}
	\begin{subfigure}[b]{1\textwidth}
		\centering
		\includegraphics[width = \linewidth]{./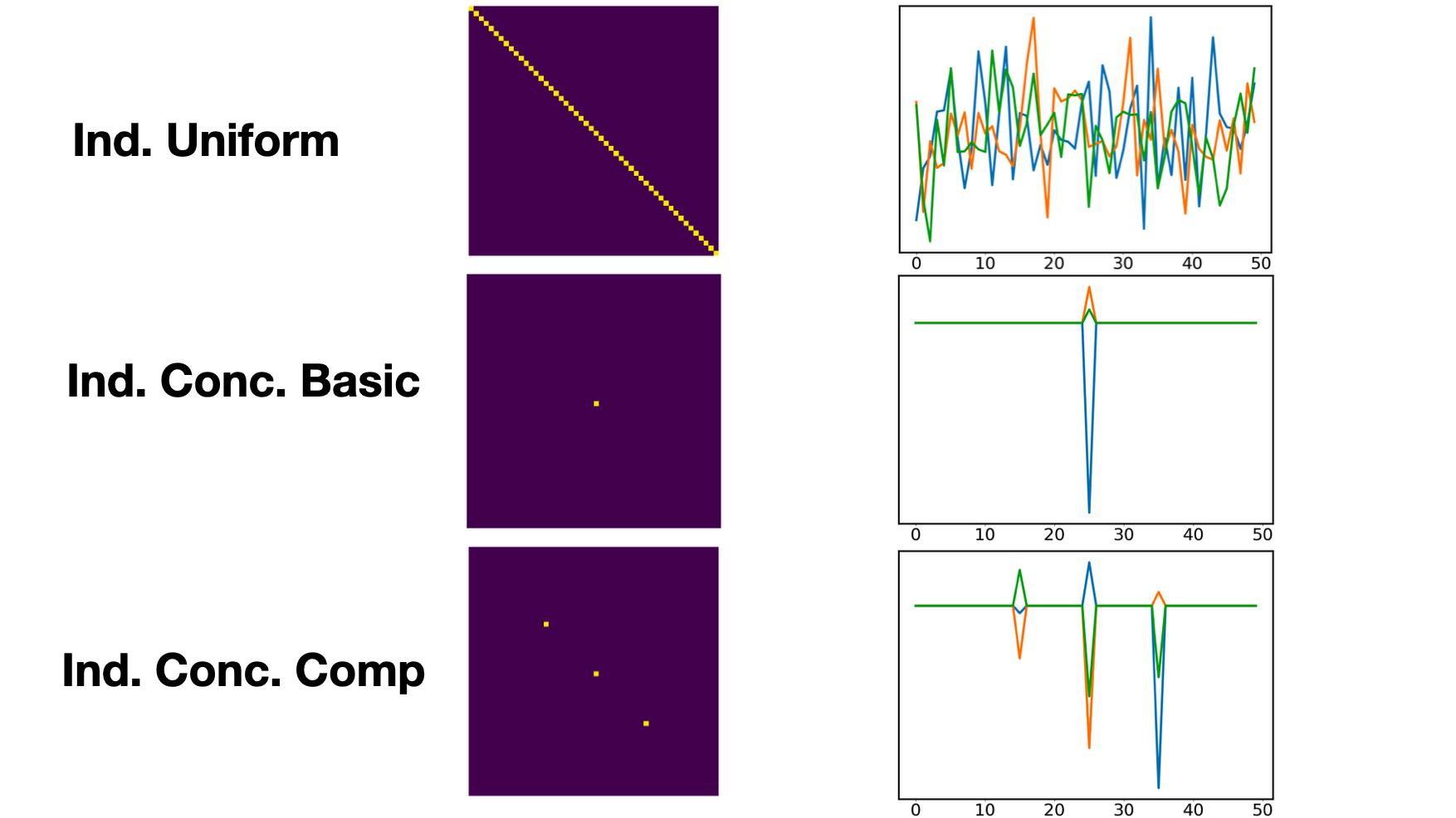}
		\caption{Covariance matrices and mechanism samples for the baselines used in experiments. 
		\vspace{2mm}\\
		The first figure demonstrates the uniform approach that distributes the independent Gaussian noise budget along the entire trace, regardless of $\Is$. 
		\vspace{2mm}\\
		The second and third show the concentrated approach that allocates the entire noise budget to only the sensitive locations in $\Is$: first for a basic secret (one location) and then for a compound secret of 3 evenly spaced locations.} 
		\label{fig: cov table baselines}
	\end{subfigure}
\end{figure*}

\begin{figure*}[h] \ContinuedFloat
	\begin{subfigure}[b]{1\textwidth}
		\centering
		\includegraphics[width = \linewidth]{./images/cov_table_header.png}
	\end{subfigure}
	\begin{subfigure}[b]{1\textwidth}
		\centering
		\includegraphics[width = \linewidth]{./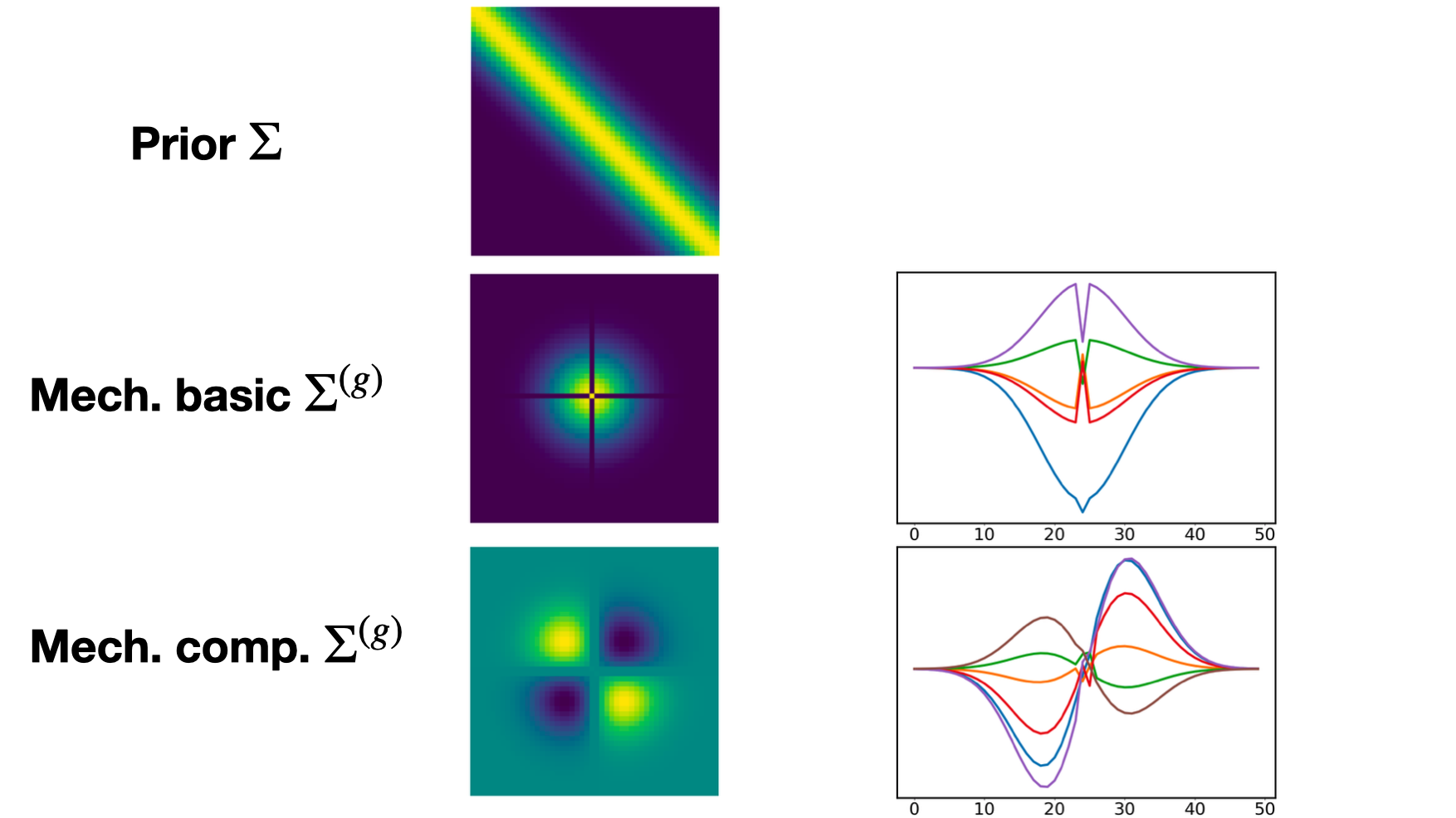}
	\end{subfigure}
	\begin{subfigure}[b]{1\textwidth}
		\centering
		\includegraphics[width = \linewidth]{./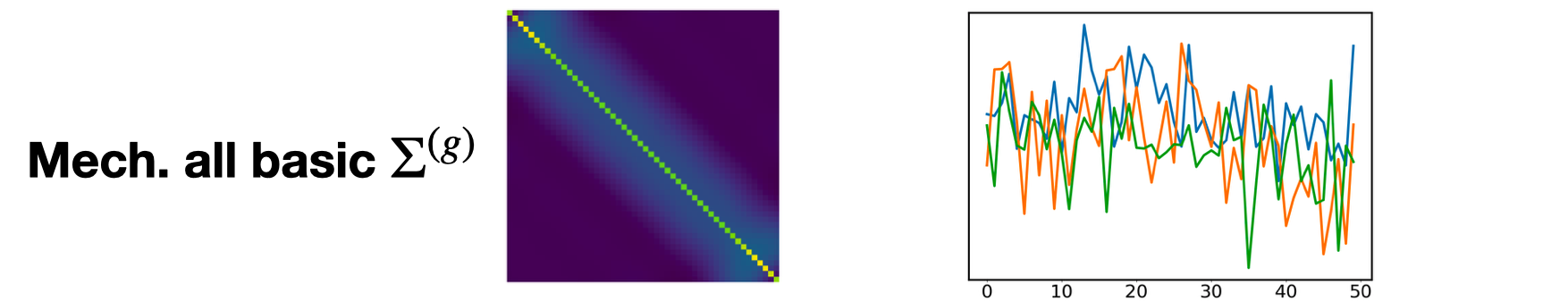}
		\caption{
			Covariance matrices and mechanism samples for the median RBF prior ($\leff \approx 6$). 
			\vspace{2mm} \\
			The first noise mechanism (Mech. basic) demonstrates the covariance matrix chosen by $\text{SDP}_\text{A}$ for a basic secret of a single location $X_i$ in the middle of the trace. The uncorrelated dot in the middle of the covariance matrix, $\Sigmag_{ii}$, represents the independent noise $G_i$ added at the sensitive location to mitigate \emph{direct} loss. To mitigate \emph{inferential} loss, the SDP optimizes the remainder of the matrix to be positively correlated with maximum variance allocated to locations near $X_i$ in time. This thwarts GP inference of the true location at time $t_i$. 
			\vspace{2mm} \\
			The second mechanism (Mech. comp.) depicts the covariance chosen by $\text{SDP}_\text{A}$ to protect a compound secret of two adjacent locations in the trace (visible as the uncorrelated `$+$' through the middle consuming 2 rows/columns). Recall that a compound secret ought to protect directional information: \emph{did the user visit B first and then A, or A and then B?} That is precisely what this mechanism does by randomizing the angle of approach to the two locations in the middle with positively and negatively correlated noise. Also note that the SDP does not allocate a large share of noise budget to the actual locations themselves. This highlights the fact that protecting a compound secret does not protect its constituent basic secrets.
			\vspace{2mm} \\
			The third and final mechanism (Mech. all basic) is the noise covariance chosen by $\text{SDP}_\text{B}$ in the Multiple Secrets algorithm. To protect all basic secrets with a utility constraint, the SDP converges to a mechanism that looks similar to the uniform baseline. However, this mechanism adds a subtle degree of off-diagonal correlation along with greater noise power towards the beginning and end of the trace. The off-diagonal correlation is noticeable when the samples are compared to those of the uniform baseline in the previous figure. While this change appears to be minor, it makes a significant change in the posterior confidence of a GP adversary (as seen in \textbf{Figure \ref{fig: RBF all}}). 
			}
		\label{fig: cov table rbf}
	\end{subfigure}
\end{figure*}

\begin{figure*}[h] \ContinuedFloat
	\begin{subfigure}[b]{1\textwidth}
		\centering
		\includegraphics[width = \linewidth]{./images/cov_table_header.png}
	\end{subfigure}
	\begin{subfigure}[b]{1\textwidth}
		\centering
		\includegraphics[width = \linewidth]{./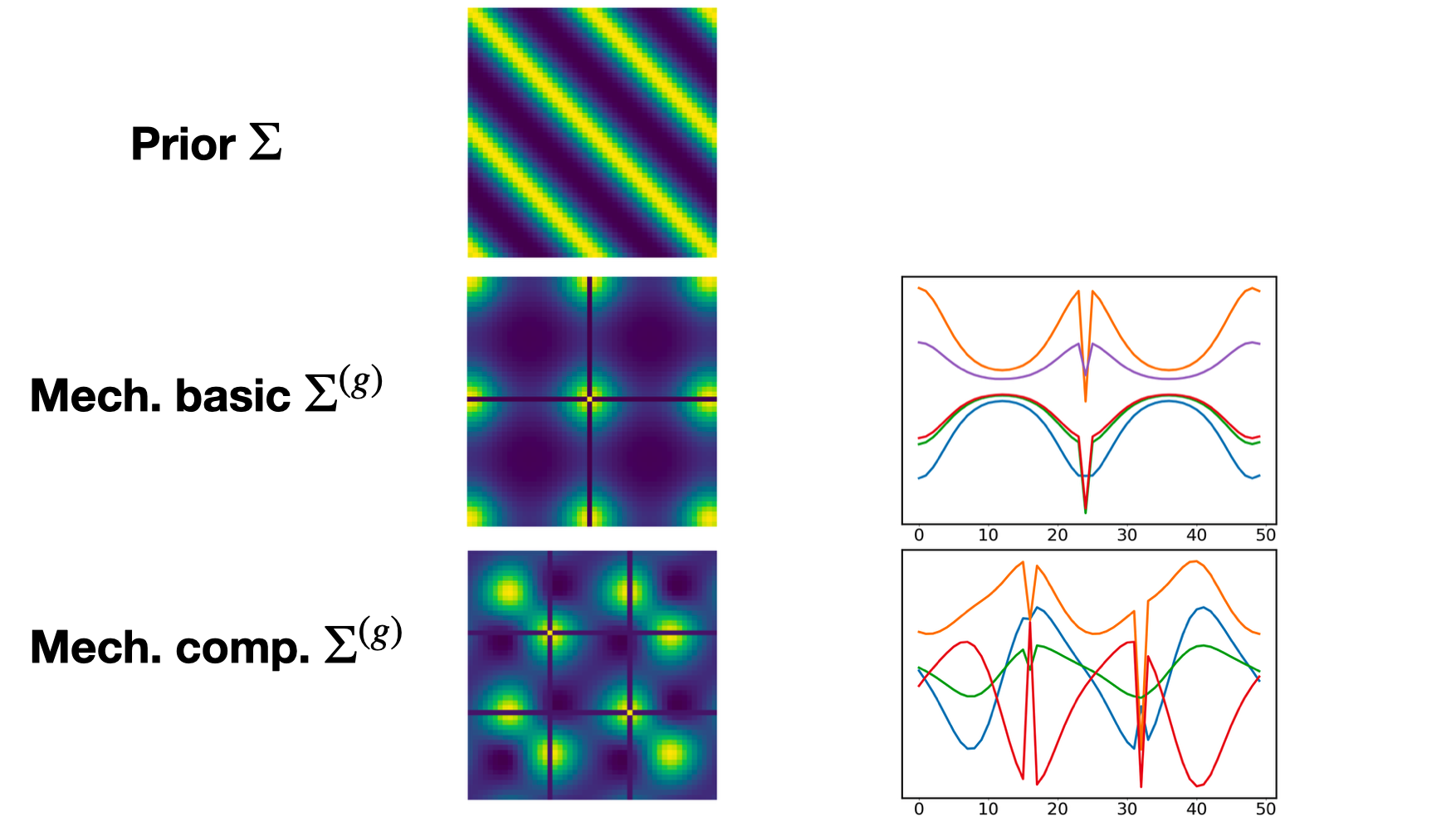}
	\end{subfigure}
	\begin{subfigure}[b]{1\textwidth}
		\centering
		\includegraphics[width = \linewidth]{./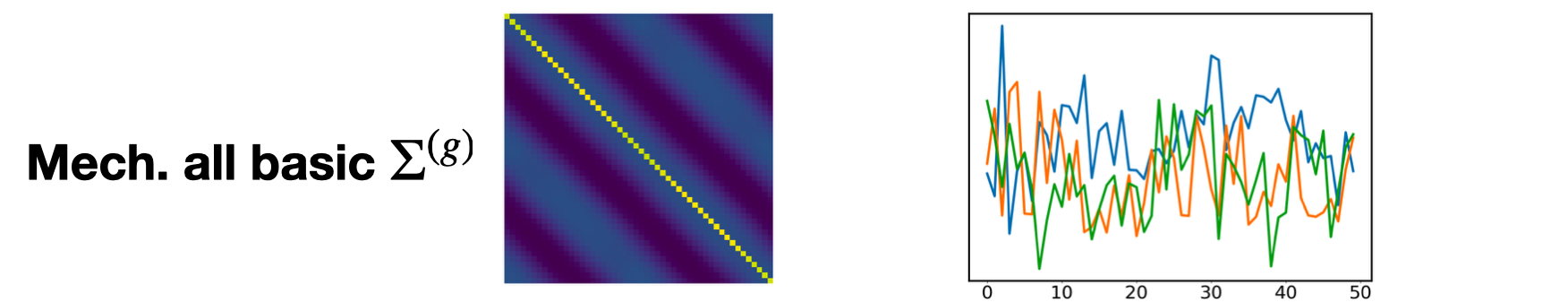}
		\caption{
			Covariance matrices and mechanism samples for the median periodic prior ($\leff \approx 1.1$), and a period of half the trace length. 
			\vspace{2mm} \\
			The first noise mechanism (Mech. Basic) shows the covariance chosen by $\text{SDP}_\text{A}$ to protect a single location (temperature) in the middle of the trace. As in the RBF case, significant noise power is allocated to the sensitive location itself, $X_i$, to limit \emph{direct} privacy loss. However, the noise added to the remainder of the trace is significantly different. It is tailored to thwart inference by a periodic prior, wherein the location one period away has correlation 1. 
			\vspace{2mm}\\
			The second noise mechanism (Mech. comp.) shows the covariance chosen by $\text{SDP}_\text{A}$ to protect a compound secret of two locations, $X_i, X_j$, 16 timesteps apart (not quite a full period). Here, we see the SDP randomize the phase of the additive noise such that periodic inference cannot tell directional information like $X_i > X_j$ or vice versa. 
			\vspace{2mm}\\
			The third noise mechanism (Mech. all basic) is identical to the all basic secrets mechanism chosen for the RBF case above, except using a periodic prior $\Sigma$. The mechanism chosen looks similar to the uniform baseline, except with slightly periodic off-diagonal correlation imitating the prior covariance. Additionally, noise power is mitigated towards the middle and ends of the trace. Again, \textbf{Figure \ref{fig: PER all}} indicates that this subtle change makes a significant difference in thwarting Bayesian adversaries. 
			}
		\label{fig: cov table rbf}
	\end{subfigure}
\end{figure*}

\clearpage

\subsection{Proof of results}
\label{apx: proofs} 
\subsubsection{Proof of Theorem \ref{thm: prior misspecification}} 
\textbf{Theorem \ref{thm: prior misspecification}} Prior-Posterior Gap:
\textit{
An $(\varepsilon, \lambda)$-CIP mechanism with conditional prior class $\Theta$ guarantees that for any event $O$ on sanitized trace $Z$
	\begin{align*}
		\bigg| \log \frac{P_{\calP, \calA}(s_i | Z \in O)}{P_{\calP, \calA}(s_j | Z \in O)} - \log \frac{P_{\calP}(s_i)}{P_{\calP}(s_j)} \bigg| \leq \varepsilon'
	\end{align*}
	for any $\calP \in \Theta$ with probability $\geq 1 - \delta$ over draws of $Z|\Xs=s_i$ or $Z|\Xs=s_j$, where $\varepsilon'$ and $\delta$ are related by
	\begin{align*}
		\varepsilon' = \varepsilon + \frac{\log \nicefrac{1}{\delta}}{\lambda - 1} \ .
	\end{align*}
	This holds under the condition that $Z|\Xs = s_i$ and $Z|\Xs = s_j$ have identical support. 
}

\begin{proof}
	This result makes use of a R\'enyi divergence property identified in \cite{renyi}: 
	\begin{lemma}
		\label{lem: renyi to eps delt}
		Let $\calP,\calQ$ be two distributions on $X$ of identical support such that  
		\begin{align*}
			\max \bigg\{ D_\lambda \binom{P_\calP(X)}{P_\calQ(X)}, 
			D_\lambda \binom{P_\calQ(X)}{P_\calP(X)} \bigg\}
			\leq \varepsilon 
		\end{align*}
		Then for any event $O$,
		\begin{align*}
			P_\calP(X \in O) \leq \max \big\{ e^{\varepsilon'} P_\calQ(X \in S), \delta \big\}
		\end{align*} 
		and
		\begin{align*}
			P_\calQ(X \in O) \leq \max \big\{ e^{\varepsilon'} P_\calP(X \in S), \delta \big\}
		\end{align*} 
		where 
		\begin{align*}
			\varepsilon' = \varepsilon + \frac{\log \nicefrac{1}{\delta}}{\lambda - 1}
		\end{align*}
	\end{lemma}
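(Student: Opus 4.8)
The plan is to convert the R\'enyi divergence bound into the stated tail-probability inequality by way of H\"older's inequality, the same technique used to extract $(\varepsilon,\delta)$-style guarantees from R\'enyi DP. It suffices to prove the first inequality from the single bound $D_\lambda \binom{P_\calP(X)}{P_\calQ(X)} \leq \varepsilon$; the second then follows by swapping the roles of $\calP$ and $\calQ$ and invoking the companion bound $D_\lambda \binom{P_\calQ(X)}{P_\calP(X)} \leq \varepsilon$. This is precisely why the hypothesis must control the \emph{maximum} of the two divergences rather than just one.

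First I would rewrite the target probability as an expectation under $\calQ$ of the likelihood ratio restricted to the event,
\begin{align*}
P_\calP(X \in O) = \E_{x \sim \calQ}\Big[ \frac{P_\calP(x)}{P_\calQ(x)} \mathbf{1}\{x \in O\} \Big],
\end{align*}
which is well defined because the two distributions share support. Applying H\"older's inequality with the conjugate exponents $\lambda$ and $\tfrac{\lambda}{\lambda - 1}$, and using $\mathbf{1}\{x\in O\}^{\lambda/(\lambda-1)} = \mathbf{1}\{x \in O\}$, yields
\begin{align*}
P_\calP(X \in O) \leq \Big( \E_{x \sim \calQ}\big[ (P_\calP(x)/P_\calQ(x))^\lambda \big] \Big)^{1/\lambda} P_\calQ(X \in O)^{(\lambda-1)/\lambda}.
\end{align*}
Unpacking the definition of R\'enyi divergence in Equation \eqref{eqn: renyi}, the first factor equals $e^{\varepsilon(\lambda-1)/\lambda}$ under the hypothesis, so
\begin{align*}
P_\calP(X \in O) \leq e^{\varepsilon (\lambda-1)/\lambda} P_\calQ(X \in O)^{(\lambda - 1)/\lambda}.
\end{align*}

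The remaining step is a case analysis on the size of $q := P_\calQ(X \in O)$ to reshape this into the $\max$ form. I would compare the concave bound $e^{\varepsilon(\lambda-1)/\lambda} q^{(\lambda-1)/\lambda}$ against the linear function $e^{\varepsilon'} q$; a short computation shows they meet at the threshold $q^\ast = e^{-\varepsilon} \delta^{\lambda/(\lambda-1)}$. For $q \geq q^\ast$ the linear function dominates, giving $P_\calP(X\in O) \leq e^{\varepsilon'} P_\calQ(X \in O)$; for $q < q^\ast$ the bound is at most $e^{\varepsilon(\lambda-1)/\lambda}(q^\ast)^{(\lambda-1)/\lambda} = \delta$, giving $P_\calP(X\in O) \leq \delta$. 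Together these two regimes are exactly $P_\calP(X\in O) \leq \max\{e^{\varepsilon'} P_\calQ(X \in O), \delta\}$.

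The conceptual crux is the H\"older step, which is where the order-$\lambda$ moment of the likelihood ratio gets traded for a power of $P_\calQ(X\in O)$; everything afterward is bookkeeping. The one place that demands care is verifying that the threshold $q^\ast$ together with the substitution $\varepsilon' = \varepsilon + \tfrac{\log(1/\delta)}{\lambda-1}$ makes the exponents cancel exactly, so that the small-$q$ branch collapses to $\delta$ rather than a loose multiple of it. This is the only step that forces the precise form of $\varepsilon'$, and it is where I would focus the verification.
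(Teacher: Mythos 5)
Your proof is correct: the H\"older step, the threshold $q^\ast = e^{-\varepsilon}\delta^{\lambda/(\lambda-1)}$, and the exponent cancellation giving exactly $\varepsilon' = \varepsilon + \tfrac{\log(1/\delta)}{\lambda-1}$ all check out, and you rightly observe that each one-sided inequality needs only the corresponding one-sided divergence bound (the max hypothesis supplying both by symmetry). The paper does not prove this lemma itself but cites it from the R\'enyi DP paper of Mironov, whose proof is precisely this argument (H\"older with conjugate exponents $\lambda$ and $\tfrac{\lambda}{\lambda-1}$ followed by the two-regime case analysis), so your attempt reproduces the intended proof; note also that the $P_\calQ(X \in S)$ appearing in the paper's statement is a typo for $P_\calQ(X \in O)$, which you correctly read as such.
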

	CIP guarantees that for all $\calP \in \Theta$ and all discriminative pairs $(s_i, s_j) \in \Spairs$ (which also includes $(s_j, s_i)$) 
	\begin{align*}
		D_\lambda \binom{P_{\calP, \calA}(Z | \Xs = s_i)}{P_{\calP,\calA}(Z | \Xs = s_j)} \leq \varepsilon
	\end{align*}
	and thus by Lemma \ref{lem: renyi to eps delt} we have for any event $O$ on $Z$
	\begin{align*}
		P_{\calP, \calA}(Z \in O | \Xs = s_i) 
		\leq \max \big\{ e^{\varepsilon'} P_{\calP, \calA}(Z \in O | \Xs = s_j), \delta \big\}
	\end{align*}
	and
	\begin{align*}
		P_{\calP, \calA}(Z \in O | \Xs = s_j) 
		\leq \max \big\{ e^{\varepsilon'} P_{\calP, \calA}(Z \in O | \Xs = s_i), \delta \big\}
	\end{align*}
	As such, given that $\Xs = s_i$ the probability of some event $\{Z \in W\}$ such that 
	\begin{align*}
		P_{\calP, \calA}(Z \in W | \Xs = s_i) 
		\geq  e^{\varepsilon'} P_{\calP, \calA}(Z \in W | \Xs = s_j)
	\end{align*}
	is no more than $\delta$. The same is true swapping $s_j$ for $s_i$. So, over draws of $Z | \Xs = s_i$ or $Z | \Xs = s_j$ we have that 
	\begin{align*}
		 \frac{P_{\calP, \calA}(Z \in O | \Xs = s_i)}{P_{\calP,\calA}(Z \in O | \Xs = s_j)} \leq e^{\varepsilon'}
		 \quad \text{and} \quad
		 \frac{P_{\calP, \calA}(Z \in O | \Xs = s_j)}{P_{\calP,\calA}(Z \in O | \Xs = s_i)} \leq e^{\varepsilon'}
	\end{align*}
	with probability $\geq 1 - \delta$, which is equivalent to the statement that 
	\begin{align*}
		-\varepsilon' 
		\leq \log  \frac{P_{\calP, \calA}(Z \in O | \Xs = s_i)}{P_{\calP,\calA}(Z \in O | \Xs = s_j)}
		&\leq \varepsilon' \\
		\bigg| \log \frac{P_{\calP, \calA}(s_i | Z \in O)}{P_{\calP, \calA}(s_j | Z \in O)} - \log \frac{P_{\calP}(s_i)}{P_{\calP}(s_j)} \bigg| 
		&\leq \varepsilon'
	\end{align*}
\end{proof}

\subsubsection{Proof of Lemma \ref{lem: renyi additive loss}}
\textbf{Lemma \ref{lem: renyi additive loss}} (CIP loss for additive mechanisms)
\textit{
	For an additive noise mechanism, a fully dependent trace as in \textbf{Figure \ref{fig:condensed model}}, and any prior $\calP$ on $X$ the CIP loss may be expressed as
	\begin{align}
		&D_\lambda \binom{P_{\calA, \calP}(Z | \Xs = s_i)}{P_{\calA, \calP}(Z | \Xs = s_j)}  
		&= \sum_{i \in \Is} \bigg[ D_\lambda \binom{P_\calA(Z_i | X_i = s_i)}{P_\calA(Z_i | X_i = s_j)} \bigg]
		+ D_\lambda \binom{P_{\calA, \calP}(\Zu | \Xs = s_i)}{P_{\calA, \calP}(\Zu | \Xs = s_j)} \notag
	\end{align}
}
\begin{proof}
\begin{align}
	D_\lambda \binom{P_{\calA, \calP}(Z | \Xs = x_s)}{P_{\calA, \calP}(Z | \Xs = x_s')} 
	&= D_\lambda \binom{P_{\calA}(\Zs | \Xs = x_s)P_{\calA, \calP}(\Zu | \Xs = x_s)}{P_{\calA}(\Zs | \Xs = x_s')P_{\calA, \calP}(\Zu | \Xs = x_s')} \tag{1} \\
	&=  D_\lambda \binom{P_{\calA}(\Zs | \Xs = x_s)}{P_{\calA}(\Zs | \Xs = x_s')} + D_\lambda \binom{P_{\calA, \calP}(\Zu | \Xs = x_s)}{P_{\calA, \calP}(\Zu | \Xs = x_s')} \tag{2} \\
	&= D_\lambda \binom{\prod_{i \in \Is } P_{\calA}(Z_i | X_i = x_i)}{\prod_{i \in \Is } P_{\calA}(Z_i | X_i = x_i')} + D_\lambda \binom{P_{\calA, \calP}(\Zu | \Xs = x_s)}{P_{\calA, \calP}(\Zu | \Xs = x_s')} \tag{3} \\
	&= \sum_{i \in \Is} \bigg[ D_\lambda \binom{P_\calA(Z_i | X_i = x_i)}{P_\calA(Z_i | X_i = x_i')} \bigg]
	+ D_\lambda \binom{P_{\calA, \calP}(\Zu | \Xs = x_s)}{P_{\calA, \calP}(\Zu | \Xs = x_s')} \tag{4}
\end{align}
Where line (1) uses the conditional independence seen in the graphical model of \textbf{Figure \ref{fig:graphical models}}. Line (2) is due to the fact that the two terms in line (1) are conditionally independent, allowing for separating into the sum of two separate divergences (which is an easily verifiable property of R\'enyi divergence evident from its definition in Equation \ref{eqn: renyi}). Line (3) is again from the conditional independence between the $Z_i$ for each $i \in \Is$ when conditioned on $\Xs$. Line (4) uses the same property of R\'enyi divergence used in Line (2): the terms in the product are conditionally independent allowing for the separation into the sum of multiple divergences.

\end{proof}

\subsubsection{Proof of Theorem \ref{thm: prior misspecification}}
\label{apx: prior misspecification proof}
\textbf{Thoerem \ref{thm: prior misspecification}}
Robustness to Prior Misspecification 
\textit{
	Mechanism $\calA$ satisfies $\varepsilon(\lambda)$-CIP for prior class $\Theta$. Suppose the finite mean true distribution $\calQ$ is not in $\Theta$. The CIP loss of $\calA$ against prior $\calQ$ is bounded by 
	\begin{align*}
		D_\lambda \binom{P_{\calA, \calQ}(Z | \Xs = s_i)}{P_{\calA, \calQ}(Z | \Xs = s_j)} \leq \varepsilon'(\lambda)
	\end{align*}
	where
	\begin{align*}
		\varepsilon'(\lambda) 
		&= \frac{\lambda - \frac{1}{2}}{\lambda - 1} \ \Delta(2\lambda) + 
		\Delta(4\lambda - 3) +
		\frac{2\lambda - \frac{3}{2}}{2\lambda - 2} \ \varepsilon(4 \lambda -2)
	\end{align*}
	and where $\Delta(\lambda)$ is
	\begin{align*}
		\inf_{\calP \in \Theta} \sup_{s_i \in \calS} \max \bigg\{ 
		D_\lambda \binom{P_{ \calP}(\Xu | \Xs = s_i)}{P_{ \calQ}(\Xu | \Xs = s_i)}, 
		D_\lambda \binom{P_{ \calQ}(\Xu | \Xs = s_i)}{P_{ \calP}(\Xu | \Xs = s_i)}
		\bigg\}
	\end{align*}
}
\begin{proof}
By `finite mean' distribution $\calQ$, we mean that all conditionals of $\calQ$ given some $\Xs$ have finite mean. Since a conditional prior class contains conditionals of one distribution with any offset (any mean value), this guarantees that $\Delta(\lambda)$ is achieved for some $\calP \in \Theta$. Intuitively, this prevents the pathological case of $\inf_{\calP \in \Theta}$ being a limit as the mean of $\calP \rightarrow \infty$, only asymptotically approaching $\Delta(\lambda)$. If the mean of $\calQ$ is finite, then the closest $\calP \in \Theta$ (in R\'enyi divergence) must also have finite mean, since any mean is attainable in a conditional prior class $\Theta$.

With this in mind, we make use of the following triangle inequality provided in \cite{renyi}: 
\begin{lemma}
	For distributions $\calP$, $\calQ$, $\calR$ on $X$ with common support we have
	\begin{align*}
		D_\lambda \binom{P_\calP(X)}{P_\calQ(X)} \leq 
		\frac{\lambda - \frac{1}{2}}{\lambda - 1} D_{2 \lambda} \binom{P_\calP(X)}{P_\calR(X)} 
		+ D_{2\lambda - 1} \binom{P_\calR(X)}{P_\calQ(X)}
	\end{align*}
\end{lemma}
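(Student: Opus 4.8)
The plan is to prove this multiplicatively, working with the unnormalized moment that sits inside the Rényi divergence rather than with the divergences directly, and then applying Hölder's inequality once. Recall from the definition in Equation~\eqref{eqn: renyi} that $D_\lambda\binom{P_\calP}{P_\calQ} = \frac{1}{\lambda-1}\log M_\lambda$, where I write $M_\lambda = \int P_\calP(x)^\lambda P_\calQ(x)^{1-\lambda}\,dx$ for the order-$\lambda$ moment. In the relevant regime $\lambda > 1$ (the right-hand side invokes orders $2\lambda$ and $2\lambda-1$, both exceeding one), the prefactor $\frac{1}{\lambda-1}$ is positive and $\log$ is monotone, so it suffices to establish a product bound on $M_\lambda$ and convert it at the end by taking logarithms and dividing.

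The core step is to insert the reference density $P_\calR$ and factor the integrand so that Hölder separates $M_\lambda$ into exactly the two moments appearing on the right. First I would write $P_\calP^\lambda P_\calQ^{1-\lambda} = \big(P_\calP^{\lambda} P_\calR^{-\beta}\big)\big(P_\calR^{\beta} P_\calQ^{1-\lambda}\big)$ for a parameter $\beta$ to be fixed, and apply Hölder with conjugate exponents $p,p'$ to the two nonnegative factors. The exponents and $\beta$ are then pinned down by matching targets: I want $\big(\int (P_\calP^\lambda P_\calR^{-\beta})^p\big)^{1/p}$ to reproduce the moment $\int P_\calP^{2\lambda}P_\calR^{1-2\lambda}$ that defines $D_{2\lambda}\binom{P_\calP}{P_\calR}$, and $\big(\int (P_\calR^\beta P_\calQ^{1-\lambda})^{p'}\big)^{1/p'}$ to reproduce $\int P_\calR^{2\lambda-1}P_\calQ^{2-2\lambda}$ defining $D_{2\lambda-1}\binom{P_\calR}{P_\calQ}$. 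Matching the power of $P_\calP$ forces $\lambda p = 2\lambda$, i.e. $p=2$, hence $p'=2$; matching the power of $P_\calR$ in the first factor forces $-\beta p = 1-2\lambda$, i.e. $\beta = \lambda - \tfrac12$, and one checks this same $\beta$ simultaneously yields $P_\calR^{2\lambda-1}$ and $P_\calQ^{2-2\lambda}$ in the second factor. This is the crux: the entire inequality is the shadow of the one consistent choice $(p,p',\beta) = (2,2,\lambda-\tfrac12)$.

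With these choices Hölder gives $M_\lambda \le \big(\int P_\calP^{2\lambda}P_\calR^{1-2\lambda}\big)^{1/2}\big(\int P_\calR^{2\lambda-1}P_\calQ^{2-2\lambda}\big)^{1/2}$. Rewriting each moment as the exponential of its divergence, namely $\int P_\calP^{2\lambda}P_\calR^{1-2\lambda} = \exp\big((2\lambda-1)\,D_{2\lambda}\binom{P_\calP}{P_\calR}\big)$ and $\int P_\calR^{2\lambda-1}P_\calQ^{2-2\lambda} = \exp\big((2\lambda-2)\,D_{2\lambda-1}\binom{P_\calR}{P_\calQ}\big)$, then taking logarithms and dividing by the positive quantity $\lambda-1$, produces the coefficients $\frac{2\lambda-1}{2(\lambda-1)} = \frac{\lambda-\frac12}{\lambda-1}$ on the first term and $\frac{2\lambda-2}{2(\lambda-1)} = 1$ on the second, which is precisely the claim. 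I expect the main obstacle to be arithmetic bookkeeping rather than a conceptual difficulty: I must use the common-support hypothesis to guarantee every moment integral is well-defined and the factorization is legitimate with no $0/0$ ambiguity, confirm that $\lambda>1$ is the operative range so dividing by $\lambda-1$ preserves the inequality direction, and carefully track the exponents so the two right-hand divergences land at exactly orders $2\lambda$ and $2\lambda-1$. Getting the single parameter $\beta$ right is what makes all of these align.
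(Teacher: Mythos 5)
Your proof is correct: the single consistent choice $(p,p',\beta)=(2,2,\lambda-\tfrac{1}{2})$ makes Cauchy--Schwarz (H\"older) split the order-$\lambda$ moment into exactly the order-$2\lambda$ moment of $(\calP,\calR)$ and the order-$(2\lambda-1)$ moment of $(\calR,\calQ)$, and your conversion back to divergences and the resulting coefficients $\frac{\lambda-\frac{1}{2}}{\lambda-1}$ and $1$ are all accurate for $\lambda>1$. The paper itself gives no proof of this lemma --- it is quoted as a known triangle-like inequality from \cite{renyi} --- and your argument is precisely the standard Cauchy--Schwarz derivation given in that reference, so your proposal matches the intended proof rather than diverging from it.
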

In our case, we assume that the mechanism $\calA$ gives $Z|\Xs = x_s$ identical support for all $\Is, x_s$. Using this, we have 
\begin{align*}
	D_\lambda \binom{P_{\calA, \calQ}(\Zu | \Xs = x_s)}{P_{\calA, \calQ}(\Zu | \Xs = x_s')} 
	\leq \frac{\lambda - \frac{1}{2}}{\lambda - 1} D_{2\lambda} \binom{P_{\calA, \calQ}(\Zu | \Xs = x_s)}{P_{\calA, \calP}(\Zu | \Xs = x_s)}
	+  \textcolor{blue}{D_{2\lambda - 1} \binom{P_{\calA, \calP}(\Zu | \Xs = x_s)}{P_{\calA, \calQ}(\Zu | \Xs = x_s')}} \ \ . \\
\end{align*}
By a data processing inequality, the divergence of the first term is bounded by $\Delta(2\lambda)$ and the blue term may be bounded by a second application of the triangle inequality: 
\begin{align*}
	\textcolor{blue}{D_{2\lambda - 1} \binom{P_{\calA, \calP}(\Zu | \Xs = x_s)}{P_{\calA, \calQ}(\Zu | \Xs = x_s')}}
	&\leq \frac{2\lambda - \frac{3}{2}}{2\lambda - 2} D_{4\lambda - 2} \binom{P_{\calA, \calP}(\Zu | \Xs = x_s)}{P_{\calA, \calP}(\Zu | \Xs = x_s')}
	+ D_{4\lambda - 3} \binom{P_{\calA, \calP}(\Zu | \Xs = x_s')}{P_{\calA, \calQ}(\Zu | \Xs = x_s')}
\end{align*}
The first divergence is bounded by $\varepsilon(4\lambda - 2)$ and the second divergence is bounded by $\Delta(4\lambda - 3)$. Putting all this together we have the following upper bound 
\begin{align*}
	D_\lambda \binom{P_{\calA, \calQ}(\Zu | \Xs = x_s)}{P_{\calA, \calQ}(\Zu | \Xs = x_s')}
	\leq 
	\frac{\lambda - \frac{1}{2}}{\lambda - 1} \ \Delta(2\lambda) + 
		\Delta(4\lambda - 3) +
		\frac{2\lambda - \frac{3}{2}}{2\lambda - 2} \ \varepsilon(4 \lambda -2)
\end{align*}
\end{proof}

\subsubsection{Proof of Theorem \ref{thm:GP bound}}
\label{apx: GP bound proof}
\textbf{Theorem \ref{thm:GP bound}}
CIP loss bound for GP conditional priors:
\emph{
Let $\Theta$ be a GP conditional prior class. Let $\Sigma$ be the covariance matrix for $X$ produced by its kernel function. Let $\calS$ be the basic or compound secret associated with $\Is$, and $S$ be the number of unique times in $\Is$. The mechanism $\calA(X) = X + G = Z$, where $G \sim \calN(\mathbf{0}, \Sigmag)$, then satisfies $(\varepsilon, \lambda)$-Conditional Inferential Privacy $(\Spairs, r, \Theta)$, where 
\begin{align*}
	\varepsilon
	&\leq \frac{\lambda}{2} S r^2 \Big(  \frac{1 }{\sigma_s^2} + \alpha^*  \Big) 
\end{align*}
where $\sigma_s^2$ is the variance of each $G_i \in \Gs$ (diagonal entries of $\Sigmag_{ss}$) and $\alpha^*$ is the maximum eigenvalue of $\Sigmaeff = \big(\Sigma_{us} \Sigma_{ss}^{-1}\big)^\intercal \big( \Sigma_{u | s} + \Sigma_{uu}^{(g)} \big)^{-1} \big(\Sigma_{us} \Sigma_{ss}^{-1}\big)$. 
}

\begin{proof}
Again, the conditional prior class $\Theta$ is defined by a kernel function $i,j \rightarrow \text{Cov}(i,j)$, which -- given the indices of the trace $X$ -- induces a covariance matrix $\Sigma$ between all $X_i, X_j$. In practice, when the sampling rate of locations is non-uniform the kernel function may use the time-stamps of the points in the trace to assign high correlation to $X_i$ that are close in time and low correlation to $X_i$ that are far apart in time. Of course, correlation between $X_i$ that are different dimension (e.g. latitude and longitude) must be designed for the given application and may be completely independent. The kernel function can encode this as well. 

Recall from Equation \ref{eqn: renyi} that the R\'enyi divergence between two mean-shifted multivariate normal distributions, $\calP_1 = \calN(\mu_1, \Sigma)$ and $\calP_2 = \calN(\mu_2, \Sigma)$ is 
\begin{align*}
	D_\lambda \binom{\calP_1}{\calP_2} = \frac{\lambda}{2} (\mu_1 - \mu_2)^\intercal \Sigma^{-1} (\mu_1 - \mu_2)
\end{align*}
Now, for any prior $\calP \in \Theta$, we have that $X \sim \calN(\mu, \Sigma)$ for some $\mu$ and for $\Sigma$ defined by the kernel function. Again, $G \sim \calN(\mathbf{0}, \Sigmag)$. $\Is$ encodes the indices of a single location basic secret or a multi-location compound secret. Then, the divergence to bound for $(\varepsilon, \lambda)$-CIP$(\Spairs, r, \Theta)$ is 
\begin{align*}
	D_\lambda \binom{P_{\calA, \calP}(Z | \Xs = s_i)}{P_{\calA, \calP}(Z | \Xs = s_j)}
\end{align*}
for any 
\begin{align*} 
	(s_i, s_j) \in \Spairs = \{(x_s, x_s'):\|x_s - x_s'\|_2 \leq 2r\}
\end{align*}
if $\Is$ encodes a basic secret, or for any
\begin{align*}
	(s_i, s_j) \in \Spairs = \Big\{\big( \{x_{s1}, x_{s2}, \dots\}, \{x_{s1}', x_{s2}', \dots\}\big): \| x_{sk} - x_{sk}' \|_2 \leq 2r, \forall \ k\Big\} 
\end{align*} 
if $\Is$ encodes a compound secret. A discriminative pair $(s_i,s_j)$ is two real valued vectors $\in \R^{|\Is|}$, representing two hypotheses about the true values of $\Xs$. We denote the $m^\text{th}$ element as ${s_i}_m, {s_j}_m$. Let $f:\Is \rightarrow [|\Is|]$ be a mapping from each index $w \in \Is$ to its corresponding position in the vector $s_i$ or $s_j$ (where the value of $X_w$ is hypothesized). By Lemma \ref{lem: renyi additive loss}, the divergence can be written as  
\begin{align*}
	D_\lambda \binom{P_{\calA, \calP}(Z | \Xs = s_i)}{P_{\calA, \calP}(Z | \Xs = s_j)}
	&= \sum_{w \in \Is} \bigg[ D_\lambda \binom{P_\calA(Z_w | X_w = {s_i}_{f(w)})}{P_\calA(Z_w | X_w = {s_j}_{f(w)})} \bigg]
	+ D_\lambda \binom{P_{\calA, \calP}(\Zu | \Xs = x_s)}{P_{\calA, \calP}(\Zu | \Xs = x_s')} 
\end{align*}
where $P_\calA(Z_w | X_w = x) = \calN(x, \sigma_s^2)$ for all $w \in \Is$. Recall from the statement of the Theorem that we assume the diagonal entries of $\Sigma_{ss}$ all equal some value $\sigma_s^2$: we add the same noise variance to each point in the secret set, which is optimal under MSE constraints. Additionally, note that for the hypothesis $\Xs = x_s$, we know the distribution of $\Xu | \Xs = x_s \sim \calN(\mu_{u|s}, \Sigma_{u|s})$, where $\mu_{u|s} = \mu_u + \Sigma_{us} \Sigma_{ss}^{-1} (x_s - \mu_s)$ and $\Sigma_{u|s} = \Sigma_{uu} - \Sigma_{us}\Sigma_{ss}^{-1} \Sigma_{su}$. Notice that only $\mu_{u|s}$ depends on the actual value of $x_s$, and $\Sigma_{u|s}$ depends only on the indices of $\Is$. Being the sum of two normally distributed variables, we have that $(\Zu | \Xs = x_s) \overset{d}{=} (\Xu|\Xs = x_s) + \Gu = \calN(\mu_{u|s}, \Sigma_{u|s} + \Sigmag_{uu})$. Substituting this into the divergences above sum of divergences: 
\begin{align}
	&D_\lambda \binom{P_{\calA, \calP}(Z | \Xs = s_i)}{P_{\calA, \calP}(Z | \Xs = s_j)}
	= \sum_{m =1}^{|\Is|} \bigg[ D_\lambda \binom{\calN({s_i}_m, \sigma_s^2)}{\calN({s_j}_m, \sigma_s^2)} \bigg]
	+ D_\lambda \binom{\calN(\mu_{u|s_i}, \Sigma_{u|s} + \Sigmag_{uu})}{\calN(\mu_{u|s_j}, \Sigma_{u|s} + \Sigmag_{uu})} \tag{1} \\
	&=  \frac{\lambda}{2} \sum_{m = 1}^{|\Is|}  \frac{1}{\sigma_s^2} ({s_i}_m - {s_j}_m)^2 
	+  \frac{\lambda}{2} (\mu_{u|s_i} - \mu_{u|s_j})^\intercal (\Sigma_{u|s} + \Sigmag_{uu})^{-1} (\mu_{u|s_i} - \mu_{u|s_j})  \tag{2} \\
	&=  \frac{\lambda}{2 \sigma_s^2}   ({s_i} - {s_j})^\intercal({s_i} - {s_j}) 
	+  \frac{\lambda}{2} \big( \Sigma_{us} \Sigma_{ss}^{-1}(s_i - s_j) \big)^\intercal (\Sigma_{u|s} + \Sigmag_{uu})^{-1} \big( \Sigma_{us} \Sigma_{ss}^{-1}(s_i - s_j) \big)  \tag{3}  \\
	&= \frac{\lambda}{2 \sigma_s^2}   ({s_i} - {s_j})^\intercal({s_i} - {s_j}) 
	+  \frac{\lambda}{2} (s_i - s_j)^\intercal \Sigma_{ss}^{-1} \Sigma_{su}  (\Sigma_{u|s} + \Sigmag_{uu})^{-1} \Sigma_{us} \Sigma_{ss}^{-1} (s_i - s_j) \tag{4} 
\end{align}
Line (1) substitutes in the normal distributions given by our mechanism and conditional prior class. Line (2) substitutes in the closed-form expression for R\'enyi divergence between two mean-shifted normal distributions given in Equation \ref{eqn: renyi}. Line (3) substitutes in the expression for $\mu_{u|s}$ given above, and simplifies. To expand out this simplification in explicit steps: 
\begin{align*}
	(\mu_{u|s_i} - \mu_{u|s_j})
	&= \big(  \mu_u + \Sigma_{us} \Sigma_{ss}^{-1} (s_i - \mu_s) -  [\mu_u + \Sigma_{us} \Sigma_{ss}^{-1} (s_j - \mu_s)] \big) \\
	&= \big(  \Sigma_{us} \Sigma_{ss}^{-1} s_i -  \Sigma_{us} \Sigma_{ss}^{-1} s_j \big) \\
	&= \Sigma_{us} \Sigma_{ss}^{-1} (s_i - s_j)
\end{align*}
Line (4) distributes the transpose in the right term of line (3): 
\begin{align*}
	\big( \Sigma_{us} \Sigma_{ss}^{-1}(s_i - s_j) \big)^\intercal
	&= (s_i - s_j)^\intercal \big(  \Sigma_{us} \Sigma_{ss}^{-1} \big)^\intercal \\
	&=  (s_i - s_j)^\intercal  \big( \Sigma_{ss}^{-1} \big)^\intercal \Sigma_{us}^\intercal   \\
	&= (s_i - s_j)^\intercal \Sigma_{ss}^{-1}  \Sigma_{su}
\end{align*}
where that final step is a consequence of $\Sigma$ being symmetric. $\Sigma_{ss}$ is also a symmetric matrix (so its inverse is symmetric) and $\Sigma_{us}^\intercal = \Sigma_{su}$. 

Returning to line (4) above, simplify this expression by substituting $\Delta = s_i - s_j$: 
\begin{align}
	D_\lambda \binom{P_{\calA, \calP}(Z | \Xs = s_i)}{P_{\calA, \calP}(Z | \Xs = s_j)}
	&= \frac{\lambda}{2 \sigma_s^2}   \Delta^\intercal \Delta 
	+  \frac{\lambda}{2} \Delta^\intercal \Sigma_{ss}^{-1} \Sigma_{su}  (\Sigma_{u|s} + \Sigmag_{uu})^{-1} \Sigma_{us} \Sigma_{ss}^{-1} \Delta \tag{5} \\
	&= \frac{\lambda}{2 \sigma_s^2}  \| \Delta \|_2^2 
	+  \frac{\lambda}{2} \Delta^\intercal \Sigmaeff \Delta \tag{6} 
\end{align}
Where $\Sigmaeff = \Sigma_{ss}^{-1} \Sigma_{su}  (\Sigma_{u|s} + \Sigmag_{uu})^{-1} \Sigma_{us} \Sigma_{ss}^{-1}$. The left term of line (6) attributes the direct loss of $\Zs$ on $\Xs$ and the right term attributes the indirect loss of $\Zu$ on $\Xs$. 

We are interested in bounding the expression of line (6) for all $(s_i, s_j) \in \Spairs$. We do this by bounding it for all vectors $\Delta \in \calD$ 
\begin{align*}
	\calD = \{ s_i - s_j : \| s_i - s_j \|_2 \leq  \sqrt{S}\  r \}
\end{align*}  
, where $S$ is the number of basic secrets (locations) contained in $\Is$ which may be a basic or compound secret set. For a basic secret ($S = 1$), this bound is tight, since $\calD = \{s_i - s_j: (s_i, s_j) \in \Spairs\}$. The set of $\Delta \in \calD$ is exactly any two hypothesis $(s_i, s_j)$ that are within any circle of radius $r$. For a compound secret, this bound is not guaranteed to be tight. Recall once again that the set of $\Spairs$ for a compound secret is given by the set of $(s_i, s_j)$ in 
\begin{align*}
	\Spairs = \Big\{\big( \{x_{s1}, x_{s2}, \dots\}, \{x_{s1}', x_{s2}', \dots\}\big): \| x_{sk} - x_{sk}' \|_2 \leq r, \forall \ k\Big\} 
\end{align*} 
For concreteness, consider the 2d location trace example in \textbf{Figure \ref{fig:nyc_example}}, where we have a compound secret of $S = 3$ locations. Here, $s_i, s_j \in \R^{6}$, where 6 comes from the fact that we have three 2d locations. So, $(s_i, s_j)$ represents a pair of hypotheses on all three locations. $s_i$'s hypothesis of the first secret location --- written as ${x_s}_1 \in \R^2$ above --- is within $r$ of the $s_j$'s hypothesis of the first secret location --- written as ${x_s}_1' \in \R^2$ above. The same goes for the second and third locations. So, the $L_2$ norm of $\Delta = s_i - s_j$ is no greater than
\begin{align*}
	\sup_{(s_i, s_j) \in \Spairs} \|s_i - s_j\|_2 
	&=  \sup_{(s_i, s_j) \in \Spairs} \sqrt{\sum_{m=1}^6 ({s_i}_m - {s_j}_m)^2} \\
	&=  \sup_{(s_i, s_j) \in \Spairs} \sqrt{\sum_{k=1}^3 \|{x_s}_k - {x_s}_k'\|_2^2} \\
	&= \sqrt{\sum_{k=1}^3 r^2} \\
	&= \sqrt{3} \ r
\end{align*}
For compound secrets, $\calD$ represents the $L_2$ ball enclosing all $\Delta \in \{s_i - s_j : (s_i, s_j) \in \Spairs \}$. However, $\calD$ also includes some values of $\Delta = s_i - s_j$ not covered by $\Spairs$. Suppose an adversary considers the hypotheses 
\begin{align*}
s_i = \{x_{s1}, x_{s2}, x_{s3}\}, s_j = \{x_{s1}', x_{s2}', x_{s3}'\}
\end{align*} 
where ${x_s}_1 = 0, {x_s}_1' = \sqrt{3} \ r, {x_s}_2 = {x_s}_2', {x_s}_3 = {x_s}_3'$. Since ${x_s}_1, {x_s}_1'$ are not within $r$ of each other, this is not in $\Spairs$. However, it is covered by $\calD$, and thus is covered by our bound on CIP loss and our mechanisms. 

With $\calD$ defined, we may return to bounding the expression in line (6): 
\begin{align}
	D_\lambda \binom{P_{\calA, \calP}(Z | \Xs = s_i)}{P_{\calA, \calP}(Z | \Xs = s_j)}
	&\leq \sup_{\Delta \in \calD} \bigg( \frac{\lambda}{2 \sigma_s^2}  \| \Delta \|_2^2 
	+  \frac{\lambda}{2} \Delta^\intercal \Sigmaeff \Delta \bigg) \tag{7} \\
	&\leq  \frac{\lambda}{2}\bigg( \frac{1}{\sigma_s^2} S r^2 + S r^2 \text{maxeig}(\Sigmaeff) \bigg) \tag{8} \\
	&= \frac{\lambda}{2} S r^2 \big( \frac{1}{\sigma_s^2} + \alpha^* \big) \tag{9}
\end{align}
where line (8) distributes the supremum. For the right term, this is given by the maximum magnitude of all $\Delta \in \calD$ times the maximum eigenvalueof $\Sigmaeff$ which equals $S r^2 \text{maxeig}(\Sigmaeff)$. Line (9) simply substitutes $\alpha^* = \text{maxeig}(\Sigmaeff)$. 

\end{proof}

\subsubsection{Proof of Corollary \ref{cor: composition}}
\textbf{Corollary \ref{cor: composition}}
Graceful Composition in Time
\textit{
	Suppose a user releases two traces $X$ and $\hat{X}$ with additive noise $G \sim \calN(\mathbf{0}, \Sigmag)$ and $\hat{G} \sim \calN(\mathbf{0}, \hat{\Sigma}^{(g)})$, respectively. Then basic or compound secret $\Xs$ of $X$ enjoys $(\bar{\varepsilon}, \lambda)$-CIP, where 
	\begin{align*}
		\bar{\varepsilon} \leq \frac{\lambda}{2} S r^2 \Big(  \frac{1 }{\sigma_s^2} + \bar{\alpha}^*  \Big) 
	\end{align*}
	and where $\bar{\alpha}$ is the maximum eigenvalue of $\bar{\Sigma}_{\text{eff}} = \big(\Sigma_{us} \Sigma_{ss}^{-1}\big)^\intercal \big( \Sigma_{u | s} + \bar{\Sigma}_{uu}^{(g)} \big)^{-1} \big(\Sigma_{us} \Sigma_{ss}^{-1}\big)$. $\Sigma$ is the covariance matrix of the joint distribution on $X, \hat{X}$ and 
	\begin{align*}
	\bar{\Sigma}^{(g)} =
		\begin{bmatrix}
			 \Sigmag & 0 \\
			 0 &  \hat{\Sigma}^{(g)} \ .
		\end{bmatrix}
	\end{align*}
}

\begin{proof}
Here, we record two traces (presumably) far apart in time 
\begin{align*}
	(X_1, \dots, X_n) \text{ and } (\hat{X}_1, \dots, \hat{X}_m)
\end{align*}
And release
\begin{align*}
	(Z_1, \dots, Z_n) = (X_1, + G_1, \dots, X_n + G_n) \text{ and } (\hat{Z}_1, \dots, \hat{Z}_m) = (\hat{X}_1, + \hat{G}_1, \dots, \hat{X}_m, + \hat{G}_m)
\end{align*}
the first trace protects secret locations $\Xs$ and the second protects $\widehat{\Xs}$, so we have that 
\begin{align*}
	D_\lambda \binom{P_{\calA, \calP}(Z | \Xs = s_i)}{P_{\calA, \calP}(Z | \Xs = s_j)} &\leq \varepsilon \\
	D_\lambda \binom{P_{\calA, \calP}(\hat{Z} | \widehat{\Xs} = \hat{s}_i)}{P_{\calA, \calP}(\hat{Z} | \widehat{\Xs} = \hat{s}_j)} &\leq \hat{\varepsilon}
\end{align*}
We aim to update the losses: 
\begin{align*}
	D_\lambda \binom{P_{\calA, \calP}(Z, \hat{Z} | \Xs = s_i)}{P_{\calA, \calP}(Z, \hat{Z} | \Xs = s_j)} &\leq \varepsilon' \\
	D_\lambda \binom{P_{\calA, \calP}(\hat{Z}, Z | \widehat{\Xs} = \hat{s}_i)}{P_{\calA, \calP}(\hat{Z}, Z | \widehat{\Xs} = \hat{s}_j)} &\leq \hat{\varepsilon}'
\end{align*}
Fortunately, our framework is pretty friendly to figuring this out, and can be done simply by updating the `inferential loss term' $\alpha^*$ and $\hat{\alpha}^*$ of each, the max eigenvalues used to compute each of $\varepsilon$ and $\hat{\varepsilon}$, respectively. Let's focus on $\varepsilon'$, since the same analysis follows for $\hat{\varepsilon}'$.  

Recall that $\alpha^*$ is given by the max eigenvalue of $\Sigmaeff$ which is 
\begin{align*}
	\Sigmaeff 
	&= \big(\Sigma_{us} \Sigma_{ss}^{-1}\big)^\intercal \big( \Sigma_{u | s} + \Sigma_{uu}^{(g)} \big)^{-1} \big(\Sigma_{us} \Sigma_{ss}^{-1}\big)
\end{align*}
Where $\Sigma$ is the covariance matrix of $X_1, \dots, X_n$ and $\Sigmag$ is the noise covariance matrix added. Simply augment $\Sigma$ to become the joint covariance matrix $\Sigma_J$ of $X, \hat{X}$, and augment $\Sigmag$ to become 
\begin{align*}
	\Sigmag_J
	&= 
	\begin{bmatrix}
		\Sigmag & 0 \\
		0 & \hat{\Sigma}^{(g)}
	\end{bmatrix}
\end{align*}
then update $\Sigmaeff$ to $\Sigma_{\text{eff}, J}$ which uses both $\Sigma_J$ and $\Sigmag_J$. Using the corresponding max eigenvalue $\alpha^*_J$ in the loss expression of Theorem 3.2 gives us $\varepsilon'$. 

Note that for kernels like RBF, $\varepsilon' \rightarrow \varepsilon$ as the traces $X$ and $\hat{X}$ move apart further and further in time. This is not the case for traces using a purely periodic kernel with not time decay, and we should expect much worse composition. 
\end{proof}

\subsubsection{Traces with Independent Dimensions}
In many cases, the different dimensions of the trace may be probabilistically independent, and it may be more convenient to make separate privacy mechanisms for each. For a 2d trace $X$, suppose $\Ix$ and $\Iy$ store the indices of the latitude points $\Xx$ and longitude points $\Xy$, such that $X = \Xx \cup \Xy$. If latitude and longitude are independent, it may be more convenient to characterize the conditional priors of $\Xx$ abd $\Xy$ separately. The question is whether privacy guarantees remain for the full trace $X$. To answer this, we provide the following corollary: 

\begin{corollary}\emph{CIP loss of independent dimensions} 
\label{cor: independence}
	Let $\Theta$ be a GP conditional prior class on a 2d trace $X$ such that the dimensions are independent. Let $\Is$ be some secret set of time indices corresponding to some basic or compound secret. For the trace $X = \Xx \cup \Xy$, the Gaussian mechanism $\calA(X) = \Zx \cup \Zy$ where $\Zx = \calA_x(\Xx) = \Xx + \Gx$ and $\Zy = \calA_y(\Xy) = \Xy + \Gy$ satisfies $(\varepsilon, \lambda)$-CIP where
	\begin{align*}
		\varepsilon \leq \frac{\lambda}{2} S r^2 \big( \frac{1}{\sigma_s^2} + \alpha^*_x + \alpha^*_y \big) 
	\end{align*} 
	when $\calA_x$ and $\calA_y$ provide $\frac{\lambda}{2} S r^2 \big( \frac{1}{\sigma_s^2} + \alpha^*_x)$ and $\frac{\lambda}{2} S r^2 \big( \frac{1}{\sigma_s^2} + \alpha^*_y)$ to $\Is \cap \Ix$ and $\Is \cap \Iy$, respectively. 
\end{corollary}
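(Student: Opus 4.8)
The plan is to exploit the probabilistic independence of the two dimensions to factor the joint CIP loss into a latitude part and a longitude part, and then reuse the term-by-term structure already established in the proof of Theorem~\ref{thm:GP bound}. First I would note that, since $\Xx$ and $\Xy$ are independent under every $\calP \in \Theta$ and the mechanism draws $\Gx$ and $\Gy$ independently, conditioning on the full secret $\Xs = s_i$ factorizes the output density as $P_{\calA,\calP}(Z \mid \Xs = s_i) = P_{\calA_x,\calP}(\Zx \mid (s_i)_x)\, P_{\calA_y,\calP}(\Zy \mid (s_i)_y)$, where $(s_i)_x$ and $(s_i)_y$ are the latitude and longitude coordinates of $s_i$, and the same factorization holds for $s_j$. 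Applying the additivity of R\'enyi divergence over product measures --- the identical property invoked in lines (2) and (4) of the proof of Lemma~\ref{lem: renyi additive loss} --- splits the CIP loss exactly into the sum of a latitude divergence and a longitude divergence.

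Next I would substitute into each summand the closed form obtained in line (6) of the proof of Theorem~\ref{thm:GP bound}. Writing $\Delta_x = (s_i)_x - (s_j)_x$ and $\Delta_y = (s_i)_y - (s_j)_y$, the latitude divergence equals $\frac{\lambda}{2\sigma_s^2}\|\Delta_x\|_2^2 + \frac{\lambda}{2}\Delta_x^\intercal \Sigma_{\text{eff},x}\Delta_x$, with $\Sigma_{\text{eff},x}$ built from the latitude-only covariance and noise, and analogously in $y$. Summing yields a joint loss of $\frac{\lambda}{2\sigma_s^2}(\|\Delta_x\|_2^2 + \|\Delta_y\|_2^2) + \frac{\lambda}{2}(\Delta_x^\intercal \Sigma_{\text{eff},x}\Delta_x + \Delta_y^\intercal \Sigma_{\text{eff},y}\Delta_y)$, which I then maximize over $\Spairs$.

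The crucial accounting step is the radius budget. For each of the $S$ secret times the two-dimensional displacement satisfies $(\Delta_x)_k^2 + (\Delta_y)_k^2 \le r^2$, so the \emph{combined} budget gives $\|\Delta_x\|_2^2 + \|\Delta_y\|_2^2 \le S r^2$; this is precisely what collapses the two direct terms into a single $\frac{1}{\sigma_s^2}S r^2$ instead of doubling it. For the inferential terms I would instead bound each one separately, using the marginal constraints $\|\Delta_x\|_2^2 \le S r^2$ and $\|\Delta_y\|_2^2 \le S r^2$ together with $\Delta_x^\intercal \Sigma_{\text{eff},x}\Delta_x \le \alpha^*_x\|\Delta_x\|_2^2$ (and likewise for $y$), to obtain $\frac{\lambda}{2}(\alpha^*_x + \alpha^*_y)S r^2$. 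Collecting terms gives the claimed bound $\frac{\lambda}{2}S r^2\big(\frac{1}{\sigma_s^2} + \alpha^*_x + \alpha^*_y\big)$.

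The hard part is the asymmetric handling of this budget: the two direct terms must \emph{share} the single per-time radius $r$, whereas the inferential terms are deliberately over-counted by charging each the full $S r^2$ so as to produce the clean, composable sum $\alpha^*_x + \alpha^*_y$. A shorter equivalent route I would flag is that independence makes $\Sigma$ and $\Sigmag$ block diagonal across the two dimensions, so $\Sigmaeff$ inherits that structure with diagonal blocks $\Sigma_{\text{eff},x}$ and $\Sigma_{\text{eff},y}$ and hence $\alpha^* = \maxeig(\Sigmaeff) = \max(\alpha^*_x,\alpha^*_y) \le \alpha^*_x + \alpha^*_y$; the result then follows by invoking Theorem~\ref{thm:GP bound} directly. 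The one place needing care along this route is verifying that block-diagonality survives the Schur complement $\Sigma_{u|s}$ and the inverse $(\Sigma_{u|s} + \Sigmag_{uu})^{-1}$, which reduces to checking that the cross-dimension block of $\Sigma_{us}$ vanishes.
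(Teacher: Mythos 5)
Your primary route is essentially the paper's own proof: independence of the dimensions (in both the prior and the noise) factors the joint R\'enyi divergence into a latitude term plus a longitude term via additivity over product measures, the two direct terms are collapsed into one by sharing the per-time radius budget $(\Delta_x)_k^2 + (\Delta_y)_k^2 \leq r^2$, and the inferential terms are each charged the full $S r^2$ so that the $\alpha^*$'s add. The paper states this accounting more tersely (``the analysis on the direct privacy loss is exactly the same as it was in the more general case \dots\ the $\alpha^*$'s add,'' with the per-dimension guarantees covering the square circumscribing the radius-$r$ disc), but it is the same argument, including the deliberate over-counting you identify. Your flagged alternative route, however, is genuinely different from the paper and is worth recording: under independence the cross-dimension blocks of $\Sigma_{us}$, $\Sigma_{ss}$, and $\Sigma_{uu}$ vanish, block-diagonal structure survives both the Schur complement $\Sigma_{u|s} = \Sigma_{uu} - \Sigma_{us}\Sigma_{ss}^{-1}\Sigma_{su}$ and the inverse $(\Sigma_{u|s} + \Sigmag_{uu})^{-1}$, so $\Sigmaeff$ is block diagonal with blocks $\Sigma_{\text{eff},x}, \Sigma_{\text{eff},y}$ and
\begin{align*}
	\alpha^* = \maxeig(\Sigmaeff) = \max(\alpha^*_x, \alpha^*_y) \leq \alpha^*_x + \alpha^*_y \ .
\end{align*}
Invoking Theorem \ref{thm:GP bound} directly on the joint trace then gives $\frac{\lambda}{2} S r^2 \big( \frac{1}{\sigma_s^2} + \max(\alpha^*_x, \alpha^*_y) \big)$, which is strictly tighter than the corollary's stated bound (a $\max$ in place of a sum) and implies it; the paper does not exploit this, presumably because the sum form is what composes naturally with mechanisms designed per dimension.
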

The gist of this corollary is that a mechanism can be designed to achieve the bound of Theorem \ref{thm:GP bound} to each dimension independently and released with still-meaningful privacy guarantees. The reason is that this still includes all secret pairs $\Spairs$ 
\begin{proof}
	By independence, $\Xx$ and $\Xy$ can be treated as two unconnected traces of the type seen in \textbf{Figure \ref{fig:graphical models}}. As such the privacy guarantee of Theorem \ref{thm:GP bound} can be upheld for each. The question is whether bounding CIP loss to the one-dimensional basic or compound secret associated with secret sets $\Is \cap \Ix$ and $\Is \cap \Iy$ still provides guarantees for the full secret set $\Is$. 
	
	Without loss of generality, we will demonstrate for a basic and a compound secret. Consider the basic secret set $\Is = \{X_{10}, X_{11}\}$, where $\Is \cap \Ix = \{X_{10}\}$ (latitude) and $\Is \cap \Iy = \{X_{11}\}$ (longitude). We again assume that independent gaussian noise of variance $\sigma_s^2$ is added to all $\Xs$, since this is optimal under utility constraints. We have now bounded the R\'enyi divergence when conditioning on pairs of hypotheses on latitude and longitude separately. 
	\begin{align*} 
	{\Spairs}_x = {\Spairs}_y = \{(x_s, x_s'):x_s \in \R,  \|x_s - x_s'\|_2 \leq r\}
	\end{align*}
	By independence, this also bounds the R\'enyi divergence conditioning on pairs of hypotheses on latitude and longitude jointly: 
	\begin{align*} 
	{\Spairs}_{xy} = \{(x_s, x_s'):x_s \in \R^2,  \|x_s - x_s'\|_2 \leq r\}
	\end{align*}
	In effect, we have guaranteed privacy for any pair of hypotheses $(s_i, s_j)$ in the square circumscribing the circle of radius $r$ that we with to provide. The analysis on the direct privacy loss is exactly the same as it was in the more general case. Since the R\'enyi divergences of $\Xu \cap \Xx$ and of $\Xu \cap \Xy$ add, the $\alpha^*$'s add. 
	
	The same goes for a compound secret. Consider three location compound secret pairs given by 
	\begin{align*}
		{\Spairs}_{xy} = \Big\{\big( \{x_{s1}, x_{s2}, \dots\}, \{x_{s1}', x_{s2}', \dots\}\big): x_{si} \in \R^2, \| x_{sk} - x_{sk}' \|_2 \leq r, \forall \ k\Big\} 
	\end{align*} 
	Instead, we bound privacy loss for 
	\begin{align*}
		{\Spairs}_x = {\Spairs}_y = \Big\{\big( \{x_{s1}, x_{s2}, \dots\}, \{x_{s1}', x_{s2}', \dots\} \big): x_{si} \in \R, \| x_{sk} - x_{sk}' \|_2 \leq r, \forall \ k \Big\}
	\end{align*}
	Separately, giving us $\alpha_x^*$ and $\alpha_y^*$. This again includes any two hypotheses on the three locations such that each pair of $x_{sk}, x_{sk}'$ is within a square circumscribing a circle of radius $r$. We achieve this by bounding privacy loss for all $\Delta_x$ in a 3d $L_2$ ball of radius $\sqrt{S}  \ r$, as with $\Delta_y$. 
	
	This corollary can be extended to all traces of all dimensions that are probabilistically independent. 
\end{proof}

We make use of the above proof in the Experiments section. 

\subsection{Derivation of Algorithms}
\label{apx: algorithmns}
In this section, we derive the three SDP-based algorithms of Section \ref{sec: algorithms} and their properties. 

\subsubsection{Derivation of $\text{SDP}_\text{A}$}

$\text{SDP}_\text{A}$ minimizes the privacy loss bound of Theorem \ref{thm:GP bound} for any compound or basic secret encoded by secret set $\Is$. As is clarified in its proof (Appendix \ref{apx: GP bound proof}), the bound is tight when $\Is$ encodes a basic secret. If $\Is$ encodes a compound secret, the tightness depends on the conditional prior class $\Theta$. 

Our variable for minimizing this bound is the noise covariance matrix $\Sigmag$. Due to the conditional independence exhibited by Lemma \ref{lem: renyi additive loss}, $\Gs$ and $\Gu$ may be independent. The additive noise $G_i \in \Gs$ are all independent Gaussian with variance $\sigma_s^2$. This is because --- conditioning on $\{\Xs = x_s\}$ --- $\Zs$ is independent of $\Xu$ and $\Zu$. So, $\Gs \sim \calN(\mathbf{0}, \sigma_s^2 I)$, and $\Sigmag_{ss} = \sigma_s^2 I$. The additive noise $G_i \in \Gu$ are all dependent as described by $\Sigmag_{uu}$, and $\Gu \sim \calN(\mathbf{0}, \Sigmag_{uu})$. Consequently, $\Sigmag$ is completely characterized by $\Sigmag_{uu}$ and $\sigma_s^2$. 

To see how the bound of Theorem \ref{thm:GP bound} can be redrafted as an SDP, first notice that its two terms may be written as the maximum eigenvalue of a matrix product. Here, $\Sigmaeff = A^\intercal B A$, where $A = \Sigma_{us} \Sigma_{ss}^{-1}$ and $B = \big( \Sigma_{u | s} + \Sigmag_{uu} \big)^{-1}$
\begin{align*}
	\frac{1}{\sigma_s^2} + \alpha^*
	= \text{maxeig} \big( 
	\frac{1}{\sigma_s^2} I + A^\intercal B A \big)
	= \text{maxeig} \bigg(  
	\begin{bmatrix}
		I \  A
	\end{bmatrix} 
	\begin{bmatrix}
		\frac{1}{\sigma_s^2} I \ \ \  0 \\
		\quad 0 \quad  B
	\end{bmatrix}
	\begin{bmatrix}
		I \\ A
	\end{bmatrix}
	\bigg) 
	= \text{maxeig} \big( \tilde{A}^\intercal \tilde{B} \tilde{A} \big) 
\end{align*}
This expression uses all parameters of $\Sigmag$: $\sigma_s^2$ parametrizes $\Sigmag_{ss}$ and $\Sigmag_{uu} = B^{-1} - \Sigma_{u|s}$, where $\Sigma_{u|s}$ is given by the kernel function of $\Theta$. 

Before casting this as an SDP, we provide a formal definition from \cite{SDPs}: 

\begin{definition}\emph{Semidefinite Program} 
	\label{def: SDP}
	The problem of minimizing a linear function of a variable $x \in \R^n$ subject to a matrix inequality: 
	\begin{align*}
		\min_{x \in \R^n} \ &c^\intercal x \\
		&\text{s.t. } F_0 + \sum_{i=1}^n x_i F_i \succeq 0 \\
		& \quad \ \  Ax = b
	\end{align*}
	where the $F_i \in \R^{n \times n}$ are all symmetric and $A \in \R^{p \times n}$ is a \emph{semidefinite program}, or SDP. 
\end{definition}

The task of minimizing $\text{maxeig} \big( \tilde{A}^\intercal \tilde{B} \tilde{A} \big)$ under MSE constraints can almost be formulated as an SDP: 
\begin{align*}
	\min_{B \succeq 0 , \nicefrac{1}{\sigma_s^2} \geq 0} \ &\beta^* \\
	&\text{s.t. } \beta^* I  \succeq \tilde{A}^\intercal \tilde{B} \tilde{A} \\
	& \quad \ \ B \preceq \Sigma_{u|s}^{-1} \\
	&\quad \ \ \trace(\Sigmag_{uu}) + |\Is| \sigma_s^2 \leq n o_t 
\end{align*}
Here, the first constraint guarantees that the maximum eigenvalue of $\tilde{A}^\intercal \tilde{B} \tilde{A}$ is bounded by $\beta^*$, which the objective minimizes. At program completion, we set $\Sigmag_{uu} = B^{-1} - \Sigma_{u|s}$, and the second constraints ensures that this is still PSD. The final constraint bounds the MSE of the mechanism $\Sigmag$. Note that $\trace(\Sigmag_{uu}) + |\Is| \sigma_s^2 = \trace(\Sigmag)$. The trouble lies the last constraint. Our program variable is $B$, but the final linear constraint requires $\Sigmag$, which is expressed using the inverse of $B$. This is not immediately available in the SDP framework. 

To make the final linear constraint available, we invert the above program using the observation that the maximum eigenvalue of $\tilde{A}^\intercal \tilde{B} \tilde{A}$ is the inverse of the minimum eigenvalue of $(\tilde{A}^\intercal \tilde{B} \tilde{A})^{-1}$. Instead of optimizing over $B$ and $\nicefrac{1}{\sigma_s^2}$, we optimize over $B^{-1}$ and $\sigma_s^2$. Since $B^{-1} = \Sigma_{u|s} + \Sigmag_{uu}$, we may now have a utility constraint directly on the trace of $\Sigmag$. To make $B^{-1}$ our program variable, we approximate $(\tilde{A}^\intercal \tilde{B} \tilde{A})^{-1}$ with $\tilde{A}^{-1} \tilde{B}^{-1} \tilde{A}^{-\intercal}$. First note that $\tilde{A} \in \R^{n \times |\Is|}$, and has full column rank for the covariances we work with. So, $\tilde{A}^{-1} = (\tilde{A}^\intercal \tilde{A})^{-1}\tilde{A}^\intercal \in \R^{(|\Is| \times n)}$ is the left inverse of $\tilde{A}$ and is the least squares solution to $\tilde{A}^{-1} \tilde{A} = \tilde{A}^\intercal \tilde{A}^{-\intercal}  = I$ (we denote its transpose as $\tilde{A}^{-\intercal}$). It is also the least squares solution to $\tilde{A} \tilde{A}^{-1} = \tilde{A}^{-\intercal} \tilde{A}^\intercal = I$. Thus, we have an approximation of the inverse $(\tilde{A}^\intercal \tilde{B} \tilde{A})^{-1}$: 
\begin{align*}
	(\tilde{A}^\intercal \tilde{B} \tilde{A}) \ (\tilde{A}^{-1} \tilde{B}^{-1} \tilde{A}^{-\intercal})
	&\approx \tilde{A}^\intercal \tilde{B} \tilde{B}^{-1} \tilde{A}^{-\intercal} \\
	&= \tilde{A}^\intercal \tilde{A}^{-\intercal} \\
	&\approx I
\end{align*}

We now can optimize in terms of $B^{-1}$ with the augmented matrix $\tilde{B}^{-1}$: 
\begin{align*}
	\tilde{B}^{-1} = 
	\begin{bmatrix}
		\sigma_s^2 I \ \ \  0 \\
		\quad 0 \quad  B^{-1}
	\end{bmatrix}
\end{align*}

We then optimize the following SDP: 

\begin{align*}
	\max_{B^{-1} \succeq 0 , \sigma_s^2 \geq 0} \ &\beta^* \\
	&\text{s.t. } \beta^* I  \preceq \tilde{A}^{-1} \tilde{B}^{-1} \tilde{A}^{-\intercal} \\
	& \quad \ \ B^{-1} \succeq \Sigma_{u|s} \\
	&\quad \ \ \trace(\tilde{B}) -  \trace{(\Sigma_{u|s})} \leq n o_t 
\end{align*}
Upon program completion we recover $\sigma_s^2$ and $\Sigmag_{uu} = B^{-1} - \Sigma_{u|s}$ which we know is PSD due to the second constraint. The first constraint guarantees that the minimum eigenvalue of the approximated inverse is $\geq \beta^*$, which the objective maximizes. If the minimum eigenvalue of the approximate inverse is close to that of the true inverse, then we successfully minimize the maximum eigenvalue of $\tilde{A}^\intercal \tilde{B} \tilde{A}$, and thus minimize the direct and indirect privacy loss. The third constraint limits the MSE of $\Sigmag$ since $\trace(\tilde{B}) - \trace(\Sigma_{u|s}) = (\trace(\Sigmag_{uu}) + |\Is| \sigma_s^2 + \trace(\Sigma_{u|s})) - \trace(\Sigma_{u|s}) = \trace(\Sigmag)$. By inverting $\tilde{A}^\intercal \tilde{B} \tilde{A}$, this constraint is available in the SDP framework. 

By expressing the above program in terms of the variable $\Sigmag$ instead of indirectly via $B^{-1}$ and $\sigma_s^2$, we get $\text{SDP}_\text{A}$: 

\begin{align*}
	\textbf{SDP}_\textbf{A}: \quad 
	\argmax_{\Sigmag \succeq 0}& \ \beta^* \\
	\text{s.t. }& \tilde{A}^{-1} \tilde{B}^{-1} \tilde{A}^{-\intercal} \succeq \beta^* \mathbf{I} \\
	&\trace(\Sigmag) \leq n o_t
\end{align*}
It is straightforward to write this SDP in the form seem in Definition \ref{def: SDP}. The program variables $x$ would be the diagonal and upper or lower triangular part of $\Sigmag$ along with $\beta^*$. With some linear algebra, the first constraint can be written in the form of $F_0 + \sum_{i=1}^n x_i F_i \succeq 0$, and the second constraint can be written as $Ax = b$. With the use of contemporary convex programming tools like CVXOPT \citep{cvxopt} rewriting into this form is unnecessary.

\subsubsection{Derivation of $\text{SDP}_\text{B}$ }
\label{apx: SDP B}
$\text{SDP}_\text{B}$ takes a set of covariance matrices $\calF = \{\Sigma_1, \dots, \Sigma_k\}$, each of which is designed to protect some secret set ${\Is}_i$, and returns a covariance matrix $\Sigmag$ that preserves the privacy loss bound of each $\Sigma_i$ to each ${\Is}_i$. It does so while minimizing the utility loss of $\Sigmag$. This algorithm is also expressed as an SDP. It is based on the following corollary, which we have omitted from the main text: 
\begin{corollary}\emph{More PSD, More Private: }
\label{cor: more_psd}
	For a basic or compound secret denoted by indices $\Is$, the CIP loss bound of Equation \ref{eqn: priv bound} provided by a Gaussian noise mechanism with covariance $\Sigmag$ is lower than it would be for any ${\Sigmag}' \prec \Sigmag$. 
\end{corollary}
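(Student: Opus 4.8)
The plan is to show that the right-hand side of Equation \eqref{eqn: priv bound} is monotone under the Loewner order on the noise covariance: shrinking $\Sigmag$ to some ${\Sigmag}' \prec \Sigmag$ can only increase each of its two constituent terms, the direct term $1/\sigma_s^2$ and the inferential term $\alpha^*$. Since the bound is a positive multiple $\tfrac{\lambda}{2}Sr^2$ of the sum of these terms, monotonicity of the sum immediately gives the claim. Throughout I would keep in mind that the bound of Theorem \ref{thm:GP bound} is stated for mechanisms placing isotropic noise on the secret block, so I assume both $\Sigmag$ and ${\Sigmag}'$ satisfy $\Sigmag_{ss}=\sigma_s^2 I$ and ${\Sigmag}'_{ss}=\tilde\sigma_s^2 I$; note also that the cross-block $\Sigmag_{su}$ never appears in the bound, so only the two diagonal blocks $\Sigmag_{ss}$ and $\Sigmag_{uu}$ are relevant.

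First I would record the elementary fact that the Loewner order passes to principal submatrices: since ${\Sigmag}' \prec \Sigmag$ means $\Sigmag-{\Sigmag}'\succ 0$, and any principal submatrix of a positive-definite matrix is positive definite, we get both ${\Sigmag}'_{ss}\prec\Sigmag_{ss}$ and ${\Sigmag}'_{uu}\prec\Sigmag_{uu}$. For the direct term this is immediate: $\tilde\sigma_s^2 I = {\Sigmag}'_{ss} \prec \Sigmag_{ss} = \sigma_s^2 I$ forces $\tilde\sigma_s^2 < \sigma_s^2$, hence $1/\tilde\sigma_s^2 > 1/\sigma_s^2$, so the direct term strictly increases under the smaller mechanism.

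For the inferential term, the key observation is that $A := \Sigma_{us}\Sigma_{ss}^{-1}$ and the conditional covariance $\Sigma_{u|s}$ are functions of the prior covariance $\Sigma$ alone and do not depend on the mechanism; the mechanism enters $\Sigmaeff = A^\intercal(\Sigma_{u|s}+\Sigmag_{uu})^{-1}A$ only through $\Sigmag_{uu}$. Writing $M = \Sigma_{u|s}+\Sigmag_{uu}$ and $\tilde M = \Sigma_{u|s}+{\Sigmag}'_{uu}$, I would chain three standard properties of the Loewner order: (i) ${\Sigmag}'_{uu}\prec\Sigmag_{uu}$ gives $\tilde M \prec M$; (ii) matrix inversion is order-reversing on positive-definite matrices, so $\tilde M^{-1} \succ M^{-1}$; and (iii) congruence by the \emph{fixed} matrix $A$ preserves the order, so $A^\intercal \tilde M^{-1} A \succeq A^\intercal M^{-1} A = \Sigmaeff$. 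Finally, because the largest eigenvalue is monotone under the Loewner order through its variational form $\maxeig(\cdot)=\max_{\|v\|=1} v^\intercal(\cdot)v$, the largest eigenvalue $\tilde\alpha$ of $A^\intercal \tilde M^{-1} A$ satisfies $\tilde\alpha \geq \alpha^*$.

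Combining the two parts gives $\tfrac{1}{\tilde\sigma_s^2}+\tilde\alpha > \tfrac{1}{\sigma_s^2}+\alpha^*$, and multiplying by the positive constant $\tfrac{\lambda}{2}Sr^2$ shows the bound for ${\Sigmag}'$ strictly exceeds that for $\Sigmag$, which is exactly the corollary. I do not expect a genuine obstacle here: each ingredient is a textbook fact about the Loewner order, and the only points requiring care are bookkeeping ones — stating the isotropic-secret-block assumption so that $1/\sigma_s^2$ is well defined, and verifying that the congruence matrix $A$ (and $\Sigma_{u|s}$) is held fixed across the two mechanisms, which is what makes step (iii) legitimate.
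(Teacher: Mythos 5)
Your proposal is correct and follows essentially the same argument as the paper: restrict the Loewner order to the diagonal blocks, handle the direct term via $\sigma_s^2 > {\sigma_s^2}'$, and handle the inferential term via the chain add-$\Sigma_{u|s}$, invert (order-reversing), congruence by the fixed $A = \Sigma_{us}\Sigma_{ss}^{-1}$, and monotonicity of the largest eigenvalue. Your use of weak inequalities at the congruence step (recovering strictness from the direct term alone) is, if anything, slightly more careful than the paper's strict chain, which implicitly relies on $A$ having full column rank.
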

\begin{proof}
	First note that if $\Sigmag \succ {\Sigmag}' $, then the same is true for its sub-matrices: 
	\begin{align*}
		\Sigmag_{ss} \succ {\Sigmag_{ss}}'
		\quad \quad
		\Sigmag_{uu} \succ {\Sigmag_{uu}}'
	\end{align*}
	Recall the privacy loss bound of Equation \ref{eqn: priv bound}: 
	\begin{align*}
		\varepsilon \leq \frac{\lambda}{2} S r^2 \Big(  \frac{1 }{\sigma_s^2} + \alpha^*  \Big)
	\end{align*}
	Also recall that $\Sigmag_{ss} = \sigma_s^2 I$ and ${\Sigmag_{ss}}' = {\sigma_s^2}' I$. Since $\Sigmag_{ss} \succ {\Sigmag_{ss}}'$, we already know that $\sigma_s^2 > {\sigma_s^2}'$, and thus the first term of Equation \ref{eqn: priv bound} is lower for $\Sigmag$.
	
	It remains to show that the second term is also lower, $\alpha^* < {\alpha^*}'$. Starting with what we're given, 
	\begin{align*}
		\Sigmag_{uu} &\succ {\Sigmag_{uu}}' \\
		\Sigmag_{uu} + \Sigma_{u|s} &\succ {\Sigmag_{uu}}' + \Sigma_{u|s} \\
		(\Sigmag_{uu} + \Sigma_{u|s})^{-1} &\prec ({\Sigmag_{uu}}' + \Sigma_{u|s})^{-1} \\
		B &\prec B' \\
		A^\intercal B A &\prec A^\intercal B' A \\
		\maxeig(A^\intercal B A) &< \maxeig(A^\intercal B' A) \\
		\alpha^* &< {\alpha^*}'
	\end{align*}
	Therefore $\frac{1}{\sigma_s^2} + \alpha^* < \frac{1}{{\sigma_s^2}'} + {\alpha^*}'$, and the CIP bound of Equation \ref{eqn: priv bound} is lower for $\Sigmag$ than it is for ${\Sigmag}'$. 
\end{proof}
With Corollary \ref{cor: more_psd} in mind, $\text{SDP}_\text{B}$ is natural: 

\begin{align*}
	\textbf{SDP}_\textbf{B}: \quad 
	\argmin_{\Sigmag } \  &\trace(\Sigmag) \\
	\text{s.t. }& \Sigmag \succeq \Sigmag_i , \ \forall \Sigmag_i \in \calF
\end{align*}

$\text{SDP}_\text{B}$ attempts to minimize, but does not constrain, the utility loss of the chosen $\Sigmag$. To provide an upper bound on the resulting utility loss, we provided the following claim in the main text: 

\textbf{Claim} Utility loss of $\text{SDP}_\text{B}$: 
\emph{
	The utility loss of $\Sigmag = \text{SDP}_\text{B}(\calF)$ is no greater than $\sum_{\Sigma_i \in \calF} \trace(\Sigma_i)$. 
}
\begin{proof}
	The covariance ${\Sigmag}' = \sum_{\Sigmag_i \in \calF} \Sigmag_i$ with MSE $\sum_{\Sigmag_i \in \calF} \trace(\Sigmag_i)$ is in the feasible set of $\text{SDP}_\text{B}$ problem since ${\Sigmag}' \succeq \Sigmag_i, \ \forall \Sigmag_i \in \calF$. Unless ${\Sigmag}'$ has the lowest MSE of all $\Sigmag$ in the feasible set, a covariance matrix with better utility will be chosen. 
\end{proof}

\subsubsection{Derivation of Algorithm \ref{alg: Multiple Secrets}, Multiple Secrets}

Multiple Secrets combines $\text{SDP}_\text{A}$ and $\text{SDP}_\text{B}$ to minimize the privacy loss to each basic secret within a trace. The basic mechanism is useful in cases when inferences at each time within the trace --- each basic secret --- is sensitive. 

Let ${\Is}_i$ be the secret set representing basic secret $i$, of which there are $N$ (e.g. if location is sampled at $N$ times). Then $\mathbb{I}_{\calS_b} = \{{\Is}_1, \dots, {\Is}_N\}$ contains the indices corresponding to each. Multiple Secrets works by first producing $N$ covariance matrices, $\Sigmag_i$ = $\text{SDP}_\text{A}$$({\Is}_i, \Sigma, o_t)$ on each basic secret. It then uses $\text{SDP}_\text{B}$($\calF = \{\Sigmag_1, \dots, \Sigmag_N\}$) to produce a single covariance matrix $\Sigmag$ that preserves the privacy loss to each basic secret (note that, being basic secrets, the privacy loss bound that SIG OPT optimizes is tight). 

By virtue of using $\text{SDP}_\text{B}$, the MSE of the resultant $\Sigmag$ is minimized but not constrained. To bound the MSE of the Basic Mechanism by $O$, we may simply bound the MSE of each $\Sigmag_i$ by $o_t = \nicefrac{O}{N}$. Then, by the above Claim, the MSE of the solution cannot be greater than $O$. In practice, this bound may be too loose. We hope to tighten it in future work. 

\subsection{Experimental details}
\label{apx: experiments}

We use a 2d location trace and a 1d home temperature dataset. For the location data, having observed that the correlation between latitude and longitude is low ($ \approx 0.06$) we treat each dimension as independent. By way of Corollary \ref{cor: independence}, this allows us to bound privacy loss and design mechanisms for each dimension separately. Furthermore, having observed that each dimension fits the nearly the same conditional prior, we treat our dataset of 10k 2-dimensional traces as a dataset of 20k 1-dimensional traces, where each trace represents one dimension of a 2d location trajectory. 

The one-dimensional traces of temperature and location are indexed by timestamps, for which we would use the following kernel functions: 

\begin{align}
	k_{\text{RBF}}(t_i, t_j) 
	=  \sigma_x^2 \exp \Big( -\frac{(t_i - t_j)^2}{2 l^2} \Big) 
	\quad \quad 
	k_{\text{PER}}(t_i, t_j) 
	=  \sigma_x^2 \exp \Big(  \frac{-2 \sin^2(\pi |t_i - t_j| / p)}{l^2} \Big)
\end{align}

to determine the covariance between two points sampled at times $t_i$ and $t_j$. The parameters including variance $\sigma_x^2$ and length scale $l$. The lengthscale determines the window of time in which two sampled points are highly correlated. 

\paragraph{Preprocessing of location data} We first limit the dataset to traces of under 50 locations that are between 4.5 and 5.5 minutes in duration. Caring only about the conditional dependence between locations, we then de-mean each trace and normalize its variance to one. Normalizing the variance of traces implicitly sets $\sigma_x^2 = 1$ in the above RBF kernel, in essence assuming that the adversary has a decent prior for the user's average speed in a given trace, and could do the same operation. 

\paragraph{Fitting of location data} We then find the maximum likelihood RBF kernel for each distinct trace. Having fixed the variance $\sigma_x^2$, this amounts to fitting only the length scale for each dimension, $l_x$ and $l_y$, individually. The length scale represents the average window of time during which neighboring locations are highly correlated (i.e. correlation $ > 0.8$). Relatively smooth traces will have large length scales and chaotic traces will have low length scales. However, the fact that sampling rates vary significantly between traces means that traces with equal length scales can have very different degrees of correlation. To encapsulate both of these effects, we study the empirical distribution of \emph{effective} length scale of each trace
\begin{align*}
	l_{\text{eff},x} = \frac{l_x}{P}
	\quad
	l_{\text{eff},y} = \frac{l_y}{P}
\end{align*}
where $P$ is the trace's sampling period and $l_x,l_y$ are the its optimal length scales. $l_{\text{eff},x},l_{\text{eff},y}$ tell us the average number of neighboring locations that are highly correlated, instead of time period. For instance, a given trace with an optimal $l_{\text{eff},x} = 8$ tells us that every eight neighboring location samples in the $x$ dimension have correlation $> 0.8$. The empirical distribution of effective length scales across all traces describes -- over a range of logging devices (sampling rates), users, and movement patterns -- how many neighboring points are highly correlated in location trace data. After this preprocessing, we are able to use the kernels that take indices (not time) as arguments. 

\begin{align*}
	\label{eqn: kernels}
	k_{\text{RBF}}(i, j) 
	=  \exp \Big( -\frac{(i - j)^2}{2\leff^2} \Big) 
	\quad \quad 
	k_{\text{PER}}(i, j) 
	=  \exp \Big(  \frac{-2 \sin^2(\pi |i - j| / p)}{\leff^2} \Big)
\end{align*}

In each plot we then observed a spectrum of conditional priors by sweeping the effective length scale and plotting posterior uncertainty for various noise mechanisms of equal utility loss. This ranges from a prior assuming nearly independent location samples (chaotic trace) on the left up to highly dependent location samples (traveling in a straight line or standing still) on the right. To understand how realistic these conditional prior parameters are, we displayed the middle 50\% of the empirical distribution of $l_{\text{eff}}$ ($x$ and $y$ together) from the GeoLife dataset. Note that the distribution of ${\leff}_x$ and ${\leff}_y$ are nearly identical. 

To compute posterior uncertainty, we consider a 50-point one-dimensional location trace. The basic secret is a single index in the middle of the trace, and the compound secret consists of two neighboring indices also in the middle of trace. For each value of $l_{\text{eff}}$, we compute the $\R^{50 \times 50}$ conditional prior covariance matrix $\Sigma$ using the RBF kernel above. We then compare the posterior uncertainty when $\Sigmag$ is an Approach C baseline, or an optimized covariance matrix using one of the three algorithms. We re-optimize $\Sigmag$ for each $\leff$, since each $\leff$ represents a different conditional prior class. The MSE is fixed in all figures except the two exhibiting ``All Basic Secrets'', where $\text{SDP}_\text{B}$ is used. Recall that this algorithm minimizes utility loss while maintaining a series of privacy guarantees. Here, the MSE is identical across mechanisms for each $\leff$, but changes from one $\leff$ to another. 

For the temperature data, our preprocessing steps were nearly identical, except we use the periodic kernel instead of the RBF kernel, and we did not need to remove any traces from the dataset, as the data was much cleaner. 

\paragraph{Computation of Posterior Uncertainty Interval}
Each of the plots in \textbf{Figure \ref{fig: experiments}} shows the $2\sigma$ uncertainty interval on $\Xs$ of a Gaussian process Bayesian adversary with prior covariance $\Sigma$ and any mean function

The posterior covariance is computed using standard formulas for linear Gaussian systems. Knowing that $Z = X + G$, we may write the joint precision matrix $\Lambda$ (inverse of covariance matrix) of $(X,Z)$ as 
\begin{align*}
	\Lambda^{(X,Z)}
	&= 
	\begin{bmatrix}
		\Sigma^{-1} + {\Sigmag}^{-1} & -{\Sigmag}^{-1} \\
		-{\Sigmag}^{-1} & {\Sigmag}^{-1} 
	\end{bmatrix}
\end{align*}

It is then a well known result that the conditional covariance matrix is given by 
\begin{align*}
	\Sigma_{x|z} &= \Lambda_{xx}^{-1}  \\
	&= \big(\Sigma^{-1} + {\Sigmag}^{-1}\big)^{-1}
\end{align*}
This provides the posterior covariance of all locations $X$ given any released trace $Z$ that uses a Gaussian mechanism with covariance $\Sigmag$. Note that the CIP guarantee naturally keeps posterior uncertainty large since the posterior density at any two $x_s$ close together must be similar. For these Gaussian posteriors, $2 \sigma$ tells us the adversary's 68\% confidence interval on $\Xs$ after obvserving $Z$. 

For basic secrets (one location), we simply report twice the posterior standard deviation at the sensitive index $i$, given by 
\begin{align*}
2 \sqrt{ \Sigma_{{x|z, ii}} } \ .
\end{align*}  
For compound secrets involving multiple locations the posterior distribution is a length $|\Is|$ multivariate normal with covariance $\Sigma_{x|z, ss}$. Intuitively, we wish to find the direction of the vector $\Xs$ in which the posterior interval is the \emph{shortest}. This is the worst case posterior interval on the compound secret. We do this by reporting 
\begin{align*}
	2 \sqrt{\text{mineig}\  \Sigma_{{x|z, ss}}} \ .
\end{align*}

\subsection{Discussion of GP Conditional Prior Class}
\label{apx: GP prior class}

Recall that a conditional prior class requires for any $P_{\calP_i}, P_{\calP_j} \in \Theta$ that 
\begin{align*}
	P_{\calP_i}(\Xu | \Xs = x_s)
	&= P_{\calP_j}(\Xu + c_{ij\Is}^u | \Xs = x_s + c_{ij\Is}^s)
\end{align*}
for all $x_s$. Notice that the mapping $(x_s, x_s') + c_{ij\Is}^s$ is a bijection from $\Spairs$ onto itself. As such, each pair of conditional distributions, 
\begin{align*}
	\Big(P_{\calP_j}(\Xu | \Xs = x_s), P_{\calP_j}(\Xu | \Xs = x_s')\Big)
\end{align*}
induced by $(x_s, x_s') \in \Spairs$ is a mean-shifted version of the pair of distributions 
\begin{align*}
	\Big(P_{\calP_i}(\Xu | \Xs = x_s - c_{ij\Is}^s), P_{\calP_i}(\Xu | \Xs = x_s' - c_{ij\Is}^s)\Big)
\end{align*}
induced by $(x_s , x_s' ) - c_{ij\Is}^s \in \Spairs$. Since the R\'enyi divergence between two distributions and two mean-shifted versions thereof is unchanged, we may use one additive noise mechanism for all priors in class $\Theta$.  

To see how this applies to the GP prior class, recall the formula for a conditional multivariate Gaussian distribution: 
\begin{align*}
	P(\Xu | \Xs = x_s)
	&= \calN(\mu_{u|s} , \Sigma_{u|s})
\end{align*}
where, 
\begin{align*}
		\mu_{u|s} &= \mu_u + \Sigma_{us} \Sigma_{ss}^{-1} (x_s - \mu_s) \\
		\Sigma_{u|s} &= \Sigma_{uu} - \Sigma_{us}\Sigma_{ss}^{-1} \Sigma_{su}
\end{align*}
A GP prior class includes all GP distributions with a fixed kernel $k(t_i, t_j)$ and any mean function $\mu(t)$. For a fixed set of time points, this corresponds to a fixed covariance matrix $\Sigma$ and any mean parameters $\bmu$: 
\begin{align*}
	X \sim \calN(\bmu, \Sigma)
\end{align*}

Let $P_{\calP_i} = \calN(\bar{\bmu}, \Sigma)$ and $P_{\calP_j} = \calN(\hat{\bmu}, \Sigma)$, then conditioned on some sensitive points $\Xs$ the distribution on $\Xu$ has the same covariance $\Sigma_{u|s}$ and conditional means 
\begin{align*}
	\bar{\mu}_{u|s}
	&= \bar{\mu}_u + \Sigma_{us} \Sigma_{ss}^{-1} (x_s - \bar{\mu}_s) \\
	&= (\bar{\mu}_u - \Sigma_{us} \Sigma_{ss}^{-1}\bar{\mu}_s) + \Sigma_{us} \Sigma_{ss}^{-1} x_s \\
	\hat{\mu}_{u|s}
	&= \hat{\mu}_u + \Sigma_{us} \Sigma_{ss}^{-1} (x_s - \hat{\mu}_s) \\
	&= (\hat{\mu}_u - \Sigma_{us} \Sigma_{ss}^{-1}\hat{\mu}_s) + \Sigma_{us} \Sigma_{ss}^{-1} x_s 
\end{align*}
which implies that the conditional distributions are identical up to a mean shift for the \emph{same} $x_s$ value. 
\begin{align*}
	P_{\calP_i}(\Xu | \Xs = x_s)
	&= P_{\calP_j}(\Xu + c_{ij\Is}^u | \Xs = x_s)
\end{align*}
for all $x_s$. Here, $c_{ij\Is}^u = (\bar{\mu}_u - \Sigma_{us} \Sigma_{ss}^{-1}\bar{\mu}_s) - (\hat{\mu}_u - \Sigma_{us} \Sigma_{ss}^{-1}\hat{\mu}_s)$, and $c_{ij\Is}^s = 0$. 

To see how this allows a single additive mechanism to work for all mean functions, notice that we also have 
\begin{align*}
	P_{\calP_i}(\Xu | \Xs = x_s')
	&= P_{\calP_j}(\Xu + c_{ij\Is}^u | \Xs = x_s')
\end{align*}
for $x_s'$, so the divergences 
\begin{align*}
	D_\lambda \binom{P_{\calP_i}(\Xu | \Xs = x_s)}{P_{\calP_i}(\Xu | \Xs = x_s')}
	&= D_\lambda \binom{P_{\calP_j}(\Xu + c_{ij\Is}^u | \Xs = x_s)}{P_{\calP_j}(\Xu + c_{ij\Is}^u | \Xs = x_s')} \\
	&= D_\lambda \binom{P_{\calP_j}(\Xu  | \Xs = x_s)}{P_{\calP_j}(\Xu  | \Xs = x_s')}
\end{align*}
are equal. The same goes for the noisy trace $\Xu + \Zu | \Xs = x_s$, when $Z$ is drawn independently of $X$, allowing us to bound privacy loss for all $P \in \Theta$.

%
%
%

\end{document}